\newtheorem{theorem}{Theorem}
\newtheorem{proposition}{Proposition}
\newtheorem{corollary}{Corollary}
\newtheorem{remark}{Remark}
\newcommand{\R}{\mathbb{R}}
\newcommand{\mb}[1]{\mathbf{#1}}
\newcommand{\diag}{\mathrm{\mathbf{diag}}}
\DeclareMathOperator*{\argmin}{arg\,min}
\newcommand{\minimize}[1]{ \ensuremath{\underset{#1}{\mathrm{minimize}}\ }}
\newcommand*{\trace}[1]{\ensuremath{\mathrm{trace}\left({#1}\right)}}
\newcommand*{\subj}{\ensuremath{\mathrm{subject\ to\ }}}
\newcommand{\mcal}[1]{\mathcal{#1}}
\newcommand{\mbb}[1]{\mathbb{#1}}
\newcommand{\norm}[1]{\left\Vert #1 \right\Vert}
\newcommand{\SE}{\ensuremath{\mathrm{SE}}}
\newcommand{\SO}{\ensuremath{\mathrm{SO}}}
\NewDocumentCommand{\card}{sO{}m}{%
  {\IfBooleanTF{#1}
    {\oldnormaux{\left|\right.}{\left.\right|}{#3}}
    {\oldnormaux{#2|}{#2|}{#3}}}
}
\newcommand{\oldnormaux}[3]{\mathpalette\oldnormaux@i{{#1}{#2}{#3}}}
\newcommand{\oldnormaux@i}[2]{\oldnormaux@ii#1#2}
\newcommand{\oldnormaux@ii}[4]{%
  \sbox\z@{$\m@th#1#2#4#3$}%
  \sbox\tw@{$\m@th\|$}%
  \mathopen{\hbox to\wd\tw@{\hss\vrule height \ht\z@ depth \dp\z@ width .2\wd\tw@\hss}}%
  #4
  \mathclose{\hbox to\wd\tw@{\hss\vrule height \ht\z@ depth \dp\z@ width .2\wd\tw@\hss}}%
}
\newcommand{\longdash}[1][2em]{%
  \makebox[#1]{$\m@th\smash-\mkern-7mu\cleaders\hbox{$\mkern-2mu\smash-\mkern-2mu$}\hfill\mkern-7mu\smash-$}}
\newcommand{\omitskip}{\kern-\arraycolsep}
\begin{document}

% paper title
\title{Splat-Nav: Safe Real-Time Robot Navigation in Gaussian Splatting Maps}

\author{Timothy Chen$^{1\star}$, Ola Shorinwa$^{1\star}$, Joseph Bruno$^3$, Aiden Swann$^1$, Javier Yu$^1$, \\ Weijia Zeng$^2$, Keiko Nagami$^1$, Philip Dames$^3$, Mac Schwager$^1$
\thanks{$^{\star}$ The co-first authors contributed equally.}%
\thanks{$^{\dagger}$ This work was supported in part by ONR grant N00014-23-1-2354 and DARPA grant HR001120C0107 and NSF grant 2220866. Toyota Research Institute provided funds to support this work. T. Chen was supported by a NASA NSTGRO Fellowship and A. Swann was supported by NSF GRFP DGE-2146755.}% <-this % stops a space
\thanks{$^{1}$ Stanford University, Stanford, CA 94305, USA
        {\tt\small \{chengine, shorinwa, swann, javieryu, knagami, schwager\}@stanford.edu}}%
\thanks{$^{2}$University of California San Diego, San Diego, CA 92093, USA,                {\tt\small wez195@ucsd.edu}}%
\thanks{$^{3}$Temple University, Philadelphia, PA 19122, USA,                {\tt\small \{brunoj6, pdames\}@temple.edu}}%
        }

\maketitle

%===============================================================================
\begin{abstract}

We present Splat-Nav, a real-time robot navigation pipeline for Gaussian Splatting (GSplat) scenes, a powerful new 3D scene representation. 
Splat-Nav consists of two components: 1) Splat-Plan, a safe planning module, and 2) Splat-Loc, a robust vision-based pose estimation module.
Splat-Plan builds a safe-by-construction polytope corridor through the map based on mathematically rigorous collision constraints and then constructs a B\'ezier curve trajectory through this corridor.
Splat-Loc provides real-time recursive state estimates given only an RGB feed from an on-board camera, leveraging the point-cloud representation inherent in GSplat scenes. 
Working together, these modules give robots the ability to recursively re-plan smooth and safe trajectories to goal locations.  Goals can be specified with position coordinates, or with language commands by using a semantic GSplat. We demonstrate improved safety compared to point cloud-based methods in extensive simulation experiments.  In a total of 126 hardware flights, we demonstrate equivalent safety and speed compared to motion capture and visual odometry, but without a manual frame alignment required by those methods. We show online re-planning at more than \unit[2]{Hz} and pose estimation at about \unit[25]{Hz}, an order of magnitude faster than Neural Radiance Field (NeRF)-based navigation methods, thereby enabling real-time navigation. We provide experiment videos on our project page at \url{https://chengine.github.io/splatnav/}. Our codebase and ROS nodes can be found
at \url{https://github.com/chengine/splatnav}.

\end{abstract}

\IEEEpeerreviewmaketitle

% KEYWORDS
\begin{IEEEkeywords}
Vision-Based Navigation, Collision Avoidance, Localization.
\end{IEEEkeywords}

%===============================================================================

\section{Introduction}
\label{sec:intro}

Autonomous robotic operation requires robots to localize themselves within an envrionment, plan safe paths to reach a desired goal location, and have closed-loop trajectory-tracking.
Traditionally, the fundamental problems of planning and localization have been performed in maps represented as occupancy grids \cite{elfes1989using}, triangular meshes \cite{edelsbrunner2003surface}, point clouds \cite{kim2018slam}, and Signed Distance Fields (SDFs) \cite{osher2003level}, all of which provide well-defined geometry.

However, these explicit scene representations are generally constructed at limited resolutions (to enable real-time operation), leaving out potentially-important scene details that could be valuable in planning and localization problems.

\begin{figure*}[t!]
\centering
\includegraphics[width=\textwidth]{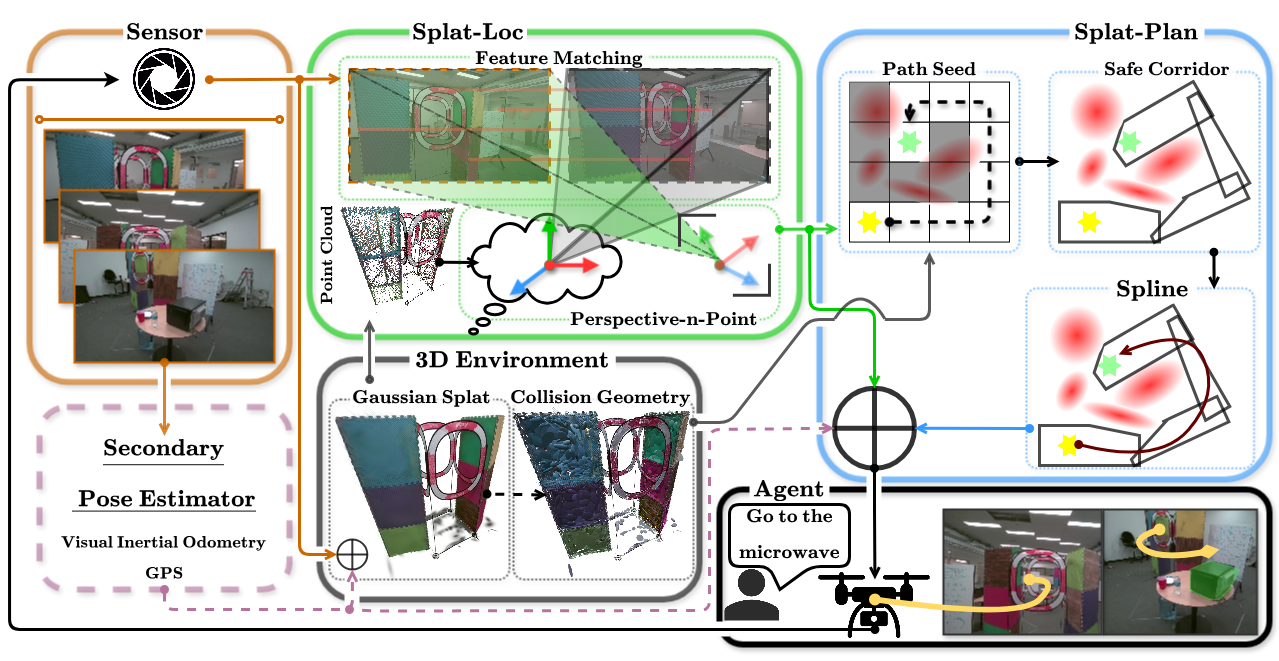}
\caption{Splat-Nav, consists of a safe planning module, Splat-Plan, and robust localization module, Splat-Loc, both operating on a Gaussian Splatting environment representation. In Splat-Plan we develop a fast, new ellipsoid-ellipsoid collision test to find a safe flight corridor through the GSplat, and plan a spline through the corridor. In Splat-Loc we localize the robot using only RGB images through a PnP algorithm, using the GSplat to render a point cloud. We use a language-embedded GSplat to enable open-vocabulary specification of goal locations like ``go to the microwave.'' 
} 
\label{fig:pipeline}
\end{figure*}

Neural Radiance Fields (NeRFs) \cite{mildenhall2020a} have recently been used to implicitly represent $3$D scenes. NeRFs consist of a volumetric density field and a view-dependent color field parameterized by multilayer perceptrons (MLPs). NeRFs generate photo-realistic scene reconstructions, addressing the fundamental limitations of explicit representations; however, NeRFs require running inference on a deep neural network to render the scene, making them impractical for real-time use in robotic path planning. 
More recently, Gaussian Splatting (GSplat) \cite{kerbl3Dgaussians} has emerged as a viable scene representation compared to NeRFs, representing the environment with Gaussian (ellipsoidal) primitives. Compared to NeRFs, GSplats generate higher-fidelity maps at faster rendering rates, with shorter or comparable training times. More importantly for robotics, GSplats, unlike NeRFs, offer a geometrically consistent collision geometry, enabling us to use level sets of these Gaussians to generate an ellipsoidal representation of the scene. These interpretable geometric primitives facilitate the development of rigorous motion planning algorithms that are safe, robust, and real-time. 

In this paper, we introduce \emph{Splat-Nav}, a pipeline for drone navigation in GSplat maps with a \emph{monocular} camera. Splat-Nav comprises a lightweight pose estimation module, Splat-Loc, coupled with a planning module, Splat-Plan, to enable safe navigation from RGB-only (monocular) camera observations, as illustrated in \Cref{fig:pipeline}.
Given an incoming RGB frame, \mbox{Splat-Loc} performs Perspective-n-Point (PnP)-based localization, leveraging the GSplat map to estimate the RGB and depth values rendered at candidate poses, which are then used to estimate the drone's pose.
Next, Splat-Plan ingests the estimated pose computed by \mbox{Splat-Loc} to generate an initial trajectory, which is subsequently optimized to lie within safe flight corridors constructed from the ellipses that make up the GSplat map. The trajectory is parametrized by smooth, continuously safe B\'ezier splines that route the robot to a specified position or to a open-vocabulary language-conditioned goal location (i.e., ``go to the microwave''). This feature enables the execution of Splat-Nav in a wide array of deployment conditions, such as in search missions where the precise location of targets is not known. 

Additionally, the proposed system enables both open-loop trajectory generation and closed-loop re-planning. The latter is important in long trajectories, where existing onboard localization may drift or be subject to noise, impacting the overall safety of the executed trajectory of the robot. In these scenarios, Splat-Loc estimates can either be fused with that of the existing localization module or used as a correction mechanism to steer the current motion toward a safer one. Finally, closed-loop re-planning additionally enables changes in goal locations during execution, leading to more dynamic plans.

In extensive simulations we compare Splat-Plan and Splat-Loc with baseline alternatives for planning and localization, respectively.
We show Splat-Plan is always safe with respect to the full collision geometry, while four variants of a point-cloud based planner sometimes lead to collisions, or fail to find trajectories. Splat plan achieves similar or better solutions in terms of path length compared to point cloud-based planner in all cases, with similar computation time. Splat-Plan runs at no less than \unit[2]{Hz}; comparable to point cloud-based solutions for the same scenes, but faster than gradient-based NeRF planners \cite{adamkiewicz2022vision} and sampling-based planners (greater than \unit[1]{Hz}) for similar solution quality. 
Similarly, we find that Splat-Loc is more accurate, faster, and fails less often compared to baselines. We demonstrate online pose estimation at about \unit[25]{Hz} on a desktop computer, enabling real-time navigation.

Finally, in an experimental campaign with 124 hardware flights, we show that Splat-Nav (Splat-Plan and Splat-Loc running together) perform as well as motion capture or on board VIO, without the manual frame alignment required for those methods to align the MoCap or VIO frame with the GSplat (since both Splat-Plan and Splat-Loc operate natively in the same GSplat map). 

The key contributions of this paper are as follows:
\begin{itemize}
    \item We develop a fast polytope corridor generation algorithm to enable provably safe planning for drone navigation in GSplat maps.
    \item We develop a fast camera localization module based on GSplat maps that does not require manual alignment of the pose estimation frame to the planning frame, improving the synergy between planning and pose estimation.
    \item We demonstrate safe closed-loop re-planning with open-vocabulary goal specification, across a series of $124$ hardware experimental trials.
\end{itemize}

\section{Related Work}
\label{sec:related}
We review the related literature in robot planning and localization with different map representations, categorized into three groups: traditional representations (e.g., occupancy grids, meshes, point clouds, and SDFs), NeRFs, and GSplat.

% \subsection{Traditional Map Representations}
% \label{sec:related_traditional}
% \subsubsection{Planning}

\smallskip
\noindent \textbf{Planning.}
We refer readers to \cite{lavalle2006planning} for an excellent explanation of planning algorithms in robotics. Most relevant for this work are the graph-based planners (e.g., A$^{\star}$), which compute a path over a grid representation of the environment; sampling-based planners (e.g., PRM \cite{kavraki1996probabilistic}, RRT 
% \cite{lavalle2001randomized}, 
and RRT$^{\star}$ \cite{karaman2011anytime}), which generate a path by sampling candidate states within the configuration space of the robot; and trajectory optimization-based planners (e.g., CHOMP \cite{ratliff2009chomp} and Traj-Opt \cite{schulman2014motion}), which take an optimization-based approach to planning. Prior work in \cite{liu-safe-flight-corridors} utilizes an optimization-based approach in path planning, taking a point cloud representation of the environment and converting this into a set of safe polytopes. The resulting safe polytopes are utilized in computing a safe trajectory, parameterized as a spline from a quadratic program (QP), which can be efficiently solved.

There is also extensive literature on planning based on onboard sensing. Typically, these works present reactive control schemes \cite{van2008reciprocal, perceptionaware},
% \cite{vadakkepat2000evolutionary, van2008reciprocal, perceptionaware},
using the sensed depth to perform collision checking in real time. These methods typically are myopic, reasoning only locally about the scene. Consequently, such methods often converge to local optima, preventing the robot from reaching its goal, especially in cluttered, complex environments. An alternate approach is to construct a map of the environment using depth measurements from Lidar or RGB-D sensors. Often, a Signed Distance Field (SDF) or its truncated variant (TSDF) is constructed from depth data \cite{voxblox, fiesta}, which is encoded within a voxel representation. Such a representation is typical in dynamic robotic motion planning, providing fast collision checking and gradients in planning; however, voxel-based scene models do no provide visually rich or geometrically detailed scene representations compared to NeRFs or GSplats. Point cloud and voxel-based representations require a significant number of points or voxels for high-fidelity scene reconstruction, increasing the computational burden. Prior work \cite{goel2021rapid, goel2023incremental} introduced a Gaussian Mixture Model (GMM) as a more effective scene representation, which preserves the accuracy of point cloud-based map representations without the additional computational overhead, enabling fast exploration of unknown environments by multi-robot teams.
Nevertheless, the aforementioned methods do not achieve photorealistic scene rendering.

More recent research has developed planning methods for highly expressive neural representations, such as NeRFs, which represent the environment as a spatial density field (with color) \cite{mildenhall2020a}. Using a NeRF map, NeRF-Nav \cite{adamkiewicz2022vision} plans trajectories that minimize the total collision cost for differentially flat robots, e.g., quadrotors. Further, CATNIPS \cite{chen2023catnips} converts the NeRF into a probabilistic voxel grid and then uses this to generate trajectories parameterized as B\'ezier curves. The work in \cite{tong2023enforcing} uses the predicted depth map at sampled poses to enforce step-wise safety using a
control barrier function. The above works are complementary, with \cite{adamkiewicz2022vision, chen2023catnips} serving as high-level planners that encourage non-myopic behavior, while \cite{tong2023enforcing} can be used as a safety filter for a myopic low-level controller.
GSplats are faster to train and provide higher fidelity visual and geometric detail compared to NeRFs \cite{kerbl3Dgaussians}, making them a strong candidate for scene representations for robot planning.  To the best of our knowledge, our work is the first to propose a planning algorithm suitable for GSplat scene representations.

% \textcolor{red}{Mac: We need to cite and discuss the Goel, Michael papers on GMM scene representaitons and planning.}

% \subsubsection{Localization}

\smallskip
\noindent \textbf{Localization.}
Prior work in robot localization typically utilizes filtering schemes, such as Extended Kalman Filters (EKFs) \cite{guibas1997robot, eman2020mobile}, Particle Filters (PFs) \cite{fox2001particle, zhang2019improved}, and other related filters \cite{ullah2019localization, biswas2012depth}, to solve the pose localization problem. These methods generally estimate the pose of the robot from low-dimensional observations (measurements), extracted from the high-dimensional observations collected by the robot's onboard sensors, such as cameras. This approach often fails to leverage the entire information available in the raw, high-dimensional measurements. Learning-based filtering methods \cite{karkus2018particle, jonschkowski2018differentiable}
% \cite{karkus2018particle, jonschkowski2018differentiable, sun2020localising}
seek to address this limitation using deep learning to develop end-to-end frameworks for localization, computing a pose estimate directly from raw camera images. Although learning-based approaches can be quite effective given sufficient training data, these methods are often limited to a single robot platform (dynamics model) and thus require separate filters for each robot or environment.

There is some existing work on tracking the pose of a robot equipped with an on-board camera and IMU through a pre-trained NeRF map. For pose localization, these methods compute a pose that minimizes the photometric loss, given an initial guess of the camera's pose. iNeRF \cite{yen2021inerf} does this for single images, and NeRF-Nav \cite{adamkiewicz2022vision} and Loc-NeRF \cite{loc-nerf} both track a trajectory using a sequence of images. Other works consider simultaneous localization and mapping (SLAM) using a NeRF map representation. Existing methods such as \cite{yen2021inerf, nice-slam}
% \cite{sucarIMAPImplicitMapping2021, yen2021inerf, nice-slam}
all simultaneously optimize the NeRF weights and the robot/camera poses. NeRF-SLAM \cite{nerf-slam} proposes a combination of an existing visual odometry pipeline for camera trajectory estimation together with online NeRF training for the 3D scene.
Although applicable to localization in Gaussian Splatting, photometric loss-based localization methods generally have a small region of convergence and require multiple passes through the scene representation for gradient computation, leading to increased computation times. In this work, we introduce a localization algorithm based on the \mbox{perspective-n-point} problem, which addresses these challenges.

% \textcolor{red}{Mac: We need some counterpoint here.  Why are these methods not suitable for our problem?  HOw is ours different?}

There are a few recent works on SLAM using a GSplat representation of the environment \cite{yan2023gs, yugay2023gaussian, matsuki2023gaussian}. 
These SLAM methods use the photometric loss to optimize the camera's pose, suffering from the aforementioned challenges, which we address with our proposed method. Moreover, these SLAM methods do not consider safe trajectory planning and control, as is the focus of this paper.

\section{3D Gaussian Splatting}
\label{sec:background}

% \Joe{This probably should not be it's own section, but I am not sure where to put it.}

{
\color{black}
% \subsection{Background}
\smallskip
\noindent \textbf{Background.}
We present a brief introduction to $3$D Gaussian Splatting \cite{kerbl3Dgaussians}, a radiance field method for deriving volumetric scene representations from a set of monocular images. %In contrast to explicit 3D scene representations (e.g., point clouds and meshes) and continuous scene representations (e.g., NeRFs), 
Gaussian Splatting represents non-empty space in a scene using $3$D Gaussian primitives, each of which is parameterized by a mean ${\mu \in \mbb{R}^{3}}$ (defining its position), covariance matrix ${\Sigma \in \mbb{S}_{++}}$ (related to its spatial extent and orientation), opacity ${\alpha \in [0, 1]}$, and spherical harmonics (SH) coefficients (defining view-dependent colors). 
The scene is typically initialized using a sparse point cloud computed via structure-from-motion \cite{schoenberger2016sfm}.
To render an image from a given camera pose, the $3D$ Gaussians are projected onto the image plane using an affine approximation of the projective transformation, given by ${\Sigma_{2D} = J W \Sigma W^{T}J^{T}}$, with Jacobian $J$ and viewing transformation $W$. 
The number of primitives, along with the coefficients for each primitive, is then learned via stochastic gradient descent with a loss function comprising of the photometric loss between the rendered and ground-truth images and the structural similarity (SSIM) index loss (the same as NeRF methods). 

For better numerical optimization, the anisotropic $3$D covariance of each Gaussian is written as: ${\Sigma = RSS^{T}R^{T}}$, where ${R \in \SO(3)}$ is a rotation matrix (parameterized by a quaternion) and $S$ is a diagonal scaling matrix (parameterized by a $3$D vector). 
This anisotropic covariance along with adaptive density control (i.e., splitting and merging Gaussians) enable the computation of compact high-quality representations, even in complex scenes, unlike many state-of-the-art point-based rendering methods. Further, $3$D Gaussian Splatting obviates the need for volumetric ray-marching required in NeRF methods, enabling high-quality real-time rendering, even from novel views.

% Subsequently, the color $C$ of each pixel is computed via $\alpha$-blending of the splats, given by:
% \begin{equation}
%     \label{eq:image_formation_model}
%     C = \sum_{i \in \mcal{N}} c_{i} \alpha_{i} \prod_{j = 1}^{i - 1}(1 - \alpha_{j}),
% \end{equation}
% where $c_{i}$ represents the color of Gaussian $i$, computed from the Spherical Harmonics model, ${\mcal{N}}$ represents the set of Gaussians whose projection intersects with the pixel. We overload the definition of $\alpha_{i}$ in the rendering model to represent the opacity of Gaussian $i$ multiplied by the unnormalized probability density function associated with each Gaussian in the $2$D image plane, given by: ${G(x) = e^{-\frac{1}{2}(x - \mu)^{T}\Sigma_{2D}^{-1}(x - \mu)},}$
% % \begin{equation}
% %     G(x) = e^{-\frac{1}{2}(x - \mu)^{T}\Sigma_{2D}^{-1}(x - \mu)},
% % \end{equation}
% with ${\Sigma_{2D} \in \mbb{S}_{++}}$ representing the $2D$ covariance matrix. We note the rendering model \eqref{eq:image_formation_model} corresponds to the rendering model used in NeRF methods \cite{mildenhall2020a}, although Gaussian Splatting utilizes differentiable tile-based rasterization, contributing to its high rendering speed.

\smallskip
\noindent \textbf{GSplats versus NeRFs.}
Gaussian Splatting typically requires less training time than state-of-the-art NeRF methods, while achieving about the same or better photometric quality. The biggest difference is in the rendering speed, where Gaussian Splatting achieves real-time performance \cite{kerbl3Dgaussians}. Moreover, $3$D Gaussian Splatting enables relatively fast extraction of a mesh representation (\cref{remark:gsplat_extent}) of the scene from the Gaussian primitives, and instantaneous extraction of the primitives themselves. In contrast, slower meshing techniques \cite{lorensen1998marching} are needed for NeRFs, and the extraction of a point cloud requires slow volumetric rendering of many training viewpoints. In \cref{fig:viz_nerf_vs_gs}, we visualize the ground-truth mesh, the GSplat mesh, and the associated point cloud extracted from a NeRF of a simulated Stonehenge scene to showcase the collision geometry quality of GSplats over NeRFs. Quantitatively, the GSplat mesh has a smaller Chamfer distance (0.031 with 3M vertices) compared to the NeRF point cloud (0.081 with 4M points) despite having fewer points. We note that the NeRF does not necessarily yield a view-consistent geometry due to volumetric rendering, especially when the point cloud is not post-processed to remove outliers, leading to relatively poor collision geometry despite having good photometric quality. 

\begin{remark}
\label{remark:gsplat_extent}
    The original work \cite{kerbl3Dgaussians} only projects $3D$ Gaussians whose $99\%$ confidence interval intersects the view frustum of a camera, effectively restricting the scene representation to the $99\%$ confidence ellipsoid associated with each Gaussian. Consequently, the union of the $99\%$ confidence ellipsoids represents the entirety of the geometry of the scene learned during the training procedure. We find that this cutoff is too conservative, due to the fact that the color of the Gaussians toward the tails of the distribution are close to transparent. Instead, we find that renderings of the $1\sigma$ collision geometry closely matches that of the GSplat depth channel, so we elect to use $1\sigma$-ellipsoid as the collision geometry for the remainder of this work. Future work will seek to explore the calibration of this cutoff. 
\end{remark}

\begin{figure}[h]
    \centering
    \includegraphics[width=\columnwidth, trim={5cm 0.5cm 0 0},clip]{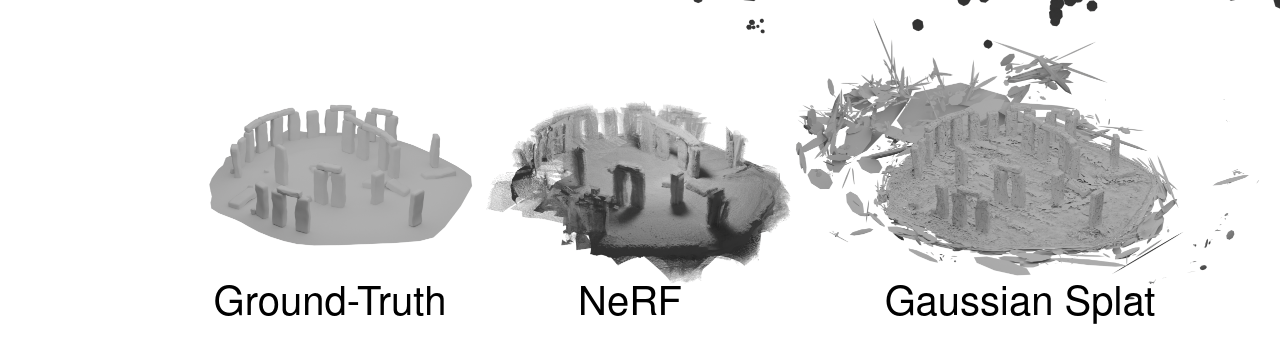}
    \caption{Visualization of a point cloud from a NeRF and a mesh from a Gaussian Splat in the synthetic scene Stonehenge. The Chamfer Distance between the NeRF and ground-truth is 0.081 (with 4M points). The Chamfer distance between the GSplat and ground-truth is 0.031 (with 3M vertices). The collision geometry (especially the foreground) of the GSplat is better and can be extracted instantaneously from the model parameters compared to the costly rendering procedure from many viewpoints to create a point cloud from the NeRF.}
    \label{fig:viz_nerf_vs_gs}
\end{figure}

\smallskip
\noindent\textbf{Semantic Gaussian Splatting.} To enable goal specifications for the navigation task in natural-language, we leverage semantic Gaussian Splatting \cite{qin2024langsplat, zhou2024feature, shorinwa2024fast}, which distills $2$D language semantics from vision-language models, e.g., CLIP \cite{radford2021learning}, into $3$D GSplat models. In general, these methods assign learnable semantic codes to each Gaussian, supervised by the robust semantic features extracted by $2$D foundation models. The semantic GSplats are trained in the same way as non-semantic GSplat via gradient descent. Semantic Gaussian Splatting has been utilized in prior work to enable open-vocabulary robotics tasks, e.g., robotic manipulation \cite{ji2024graspsplats, shorinwa2024splat}.

In the subsequent sections, we present the core contributions of our work in deriving an efficient navigation pipeline for robots, describing how we leverage $3$D Gaussian Splatting as the underlying scene representation. Specifically, the quick extraction of simple convex primitives (whose union closely approximates the ground-truth scene geometry) promotes the development of guarantees on safety and solution quality of Splat-Plan and facilitates real-time deployment with low sim-to-real gap while navigating in Gaussian Splatting environments. Similarly, the fast and high-quality color and depth rendering from arbitrary viewpoints of the GSplat enables robust, fast camera localization in Splat-Loc.

% 3D Gaussian Splatting \cite{kerbl3Dgaussians} parameterizes a scene with 3D ellipsoids to achieve real-time novel view synthesis. In contrast with implicit representations such as NeRF, Gaussian Splatting is an explicit representation. Each ellipsoid $\mathcal{E}_j$ is parameterized with a 3D covariance matrix $\Sigma_j = R S S^T R^T$ (where $R \in SO(3)$ is a rotation matrix and $S$ is a diagonal matrix for the scale) and a 3D point (mean) $\mu_j$ as the following:
% \begin{equation}
%     G(x) = e^{-\frac{1}{2}(x - \mu)^{T}\Sigma^{-1}(x - \mu)}.
% \end{equation}
% The mean of each Gaussian is initialized from a sparse set of point cloud obtained from SfM pipelines such as COLMAP \cite{schoenberger2016mvs, schoenberger2016sfm}. In order to render photo-realistic RGB images, each Gaussian additionally stores color information as spherical harmonics and opacity $\alpha$ for the blending process.

% During the rendering process, the 3D Gaussians are projected into 2D. The 2D render is used to update the loss function:

% \begin{equation}
%     \mathcal{L} = (1 - \lambda) \mathcal{L} + \lambda \\mathcal{L_{D-SSIM}}
% \end{equation} 

% \input{sections/overview}

\section{Planning with Safe Polytopes}
\label{sec:planning}
Now, we present Splat-Plan, our planner for GSplat maps. Splat-Plan generates safe polytopic corridors (inspired by \cite{liu-safe-flight-corridors}) that represent the free space of a GSplat map between an initial configuration to a goal configuration. These corridors, and the resulting trajectories through them, are rigorously built on theory derived from tests for intersection between ellipsoids. The method is fast enough to provide real-time operation, provides safety guarantees extending to any scene with a pre-trained GSplat representation, and is not overly-conservative.  

We stress that, as with any safety guarantee on a map, our ultimate safety rests on the completeness of the map.  If the map does not reflect the presence of an obstacle, our method may collide with the obstacle---we cannot avoid what we cannot see.  In practice, we observe that GSplat maps provide fast and efficient representations of the underlying ground-truth geometry, as validated in our hardware experiments.

We would also like to motivate the use of the full collision geometry of GSplats for planning compared to conventional representations like point clouds in an RGB setting. It is common to extract the means of the GSplat to form a point cloud. However, in feature-less regions, we observe that the point cloud can be quite sparse. Meanwhile, the full collision geometry spanned by the ellipsoids covers the full surface. This phenomenon can be observed in \cref{fig:pipeline}, where the render of the ellipsoidal representation of the collision geometry closely mimics the RGB render from the GSplat. However, the point cloud extracted from the means is very sparse. While usable for localization, such a sparse representation leaves a large sim-to-real gap when planning safe trajectories close to those areas. Another option is to sample the surface of these primitives for a point cloud, but even with this modification, point cloud-based planners are not as robust as Splat-Plan (\cref{sec:results}).

Before presenting the planning problem, we make the following assumptions on the representations of the robot $\mcal{R}$ and the map $\mcal{G}$ considered in this work. We assume that the robot is represented by a union of the ellipsoids in the non-empty set $\{\mcal{E}_{\mcal{R}, i}\}_{i = 1}^{d}$, where $d$ denotes the cardinality of the set, 
i.e., ${\mcal{R} \subseteq \cup_{i=1}^d \mcal{E}_{\mcal{R}, i}}$.
For simplicity, we consider a singleton set $\mcal{E}_{\mcal{R}}$, noting that the subsequent discussion applies directly to the non-singleton case by running the collision check for all robot ellipsoids.
One can also convert a mesh or point cloud of a robot to an ellipsoid by finding the minimal bounding ellipsoid (or sphere).

We represent non-empty space in the environment with $\gamma\%$ confidence ellipsoids obtained from the GSplat map, as discussed in Remark~\ref{remark:gsplat_extent}, given by:
\begin{equation}
    \label{eq:environment_ellipsoid}
    \mathcal{E}_j = \{x \in \R^3 \mid (x - \mu_j)^T \Sigma_j^{-1} (x - \mu_j) \leq \chi_{3}^{2}(\gamma) \},
\end{equation}
where ${\mu_{j} \in \mbb{R}^{3}}$ denotes the mean of Gaussian $j$, ${\Sigma_{j} \in \mbb{S}_{++}}$ denotes its covariance matrix, and ${\chi_{3}^{2}(\gamma)}$ denotes the $\gamma$th percentile of the chi-square distribution with three degrees of freedom. The union of these ellipsoids, given by ${\mathcal{G} = \{\mathcal{E}_j\}_{j=1}^{N}}$, defines the occupied space in the environment. To simplify notation, we express the ellipsoid in \eqref{eq:environment_ellipsoid} in standard form: ${\mathcal{E}_j = \{x \in \R^3 \mid (x - \mu_j)^T \Sigma_j^{-1} (x - \mu_j) \leq 1 \}}$, where we overload notation with ${\Sigma_j \coloneqq \chi_{3}^{2}(\gamma) \Sigma_j}.$ Based on Remark \ref{remark:gsplat_extent}, we set $\gamma = 0.2$ to be safe with respect to the entirety of the supervised scene.

\begin{remark}[Online Gaussian Splatting]
    Our planning algorithm requires a GSplat map. While this map can be trained online using real-time SLAM methods for radiance fields \cite{matsuki2023gaussian, yugay2023gaussian, yan2023gs}, which is a very new and active area of research, we limit the scope of this work to only plan in pre-trained maps. 
\end{remark}

\begin{remark}[Handling Uncertainty of the Scene Representation]
\label{rem:uncertainty}
    We can vary the value of ${\gamma}$ (from that used during the training procedure) based on the quality of the GSplat map and uncertainty in different regions of the GSplat map. In general, larger values of $\gamma$ inflate the volume of the confidence ellipsoids associated with each Gaussian, resulting in greater safety margins and more conservative planning. The converse holds if smaller values of $\gamma$ are selected. Moreover, for simplicity, we utilized a uniform value of $\gamma$. However, the value of $\gamma$ can vary among the ellipsoids, allowing the planner to account for varying levels of uncertainty in different regions of the GSplat map. Likewise, the volume of the ellipsoid representing the robot can be increased/decreased to account for uncertainty in the pose of the robot.
\end{remark}

\begin{remark}[Dynamic Scenes]
We limit our discussion to planning in static scenes. However, we note that our method readily applies to planning in dynamic scenes, under the assumption that a dynamic Gaussian Splatting scene representation can be constructed. We discuss more about planning in dynamic scenes in Section~\ref{sec:limitations_future_work}.
\end{remark}

\smallskip
\noindent \textbf{Problem Statement.}
Given a bounding ellipsoid $\mathcal{E}_\mathcal{R}$ for the robot and a GSplat map $\mcal{G}$, we seek to find a smooth, feasible path $x(t)$ for a robot to navigate from an initial configuration $x(0) = x_0$ to a specified goal configuration $x(T) = x_f$, such that there are no collisions in the continuum, i.e., ${\mathcal{E}_\mathcal{R}(x(t)) \cap \mathcal{E}_j = \emptyset, \, \forall \mathcal{E}_j \in \mathcal{G}}$, $\forall \, t \in [0, T]$.

\smallskip
\noindent \textbf{Collision Detection.}
\label{ssec:collision}
We leverage the ellipsoidal representations of the robot and the environment to derive an efficient collision-checking algorithm, based on \cite{ellipsoidtest}, where we take advantage of GPU parallelization for faster computation. We build upon \cite{ellipsoidtest} rather than on other existing ellipsoid-to-ellipsoid intersection tests, because of its amenability to significant GPU parallelization. We do not utilize the GJK algorithm \cite{gilbert1988fast}, since we do not require knowledge of the distance between the two ellipsoids. For completeness, we restate the collision-checking method from \cite[Proposition 2]{ellipsoidtest}.

\begin{theorem} \label{thm:ellipsoidtest1}
Given two ellipsoids $\mathcal{E}_a, \mathcal{E}_b$ (with means $\mu_a, \mu_b$ and covariances $\Sigma_a, \Sigma_b$) and the concave function ${K: (0, 1) \to \R}$,
\begin{equation*}
    K(s) = (\mu_b-\mu_a)^T \left[ \frac{1}{1-s}\Sigma_a + \frac{1}{s}\Sigma_b \right] ^{-1}(\mu_b-\mu_a),
\end{equation*}
$\mathcal{E}_a \cap \mathcal{E}_b = \emptyset$ if and only if there exists $s\in (0, 1)$ such that $K(s) > 1$.
\end{theorem}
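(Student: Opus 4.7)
The plan is to reduce the intersection test to a minimax problem, swap the order of min and max using Sion's theorem, and then evaluate the inner minimum in closed form to recognize $K(s)$.

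First I would observe that $\mathcal{E}_a \cap \mathcal{E}_b = \emptyset$ if and only if $\min_{x \in \R^3}\, \max(f_a(x), f_b(x)) > 1$, where $f_a(x) = (x-\mu_a)^\T \Sigma_a^{-1}(x-\mu_a)$ and $f_b(x) = (x-\mu_b)^\T \Sigma_b^{-1}(x-\mu_b)$. Using the identity $\max(u,v) = \max_{s \in [0,1]}[(1-s)u + s v]$, this becomes $\min_x \max_{s \in [0,1]} g_s(x)$ with $g_s(x) := (1-s) f_a(x) + s f_b(x)$. For each $s \in (0,1)$, $g_s$ is a strongly convex quadratic in $x$; for each $x$, it is linear in $s$; and $[0,1]$ is compact. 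Sion's minimax theorem then lets me exchange the order, so $\mathcal{E}_a \cap \mathcal{E}_b = \emptyset$ iff $\max_{s \in [0,1]} \min_x g_s(x) > 1$.

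Second, I would compute $\min_x g_s(x)$ in closed form. Let $d = \mu_b - \mu_a$ and $y = x - \mu_a$; expanding gives $g_s = y^\T A y - 2 s y^\T \Sigma_b^{-1} d + s d^\T \Sigma_b^{-1} d$ with $A = (1-s)\Sigma_a^{-1} + s\Sigma_b^{-1}$. Setting $\nabla_y g_s = 0$ and substituting back, a short manipulation yields $\min_x g_s(x) = s(1-s)\, d^\T \Sigma_b^{-1} A^{-1} \Sigma_a^{-1} d$. The key algebraic step is the factorization $A = \Sigma_a^{-1}\bigl[(1-s)\Sigma_b + s\Sigma_a\bigr]\Sigma_b^{-1}$, which is verified by direct multiplication; inverting gives $\Sigma_b^{-1} A^{-1} \Sigma_a^{-1} = [(1-s)\Sigma_b + s\Sigma_a]^{-1}$. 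Pulling the scalar $\tfrac{1}{s(1-s)}$ into the inverse converts $s(1-s)[(1-s)\Sigma_b + s\Sigma_a]^{-1}$ into $\bigl[\tfrac{1}{1-s}\Sigma_a + \tfrac{1}{s}\Sigma_b\bigr]^{-1}$, exactly matching $K(s)$.

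Finally, since $K(s) \to 0$ as $s \to 0^+$ or $s \to 1^-$ (the quadratic form degenerates because the inverse matrix blows up), the supremum over $[0,1]$ can only exceed $1$ at an interior point, yielding the claimed equivalence with $s \in (0,1)$. The main obstacle I expect is the algebraic simplification in the second paragraph: without the right factorization of $(1-s)\Sigma_a^{-1} + s\Sigma_b^{-1}$, the two forms $s\Sigma_a + (1-s)\Sigma_b$ arising from the optimization and $\tfrac{1}{1-s}\Sigma_a + \tfrac{1}{s}\Sigma_b$ stated in the theorem do not visibly agree. The Sion minimax exchange and the endpoint handling are routine by comparison.
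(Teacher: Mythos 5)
Your proof is correct, but note that the paper itself does \emph{not} prove Theorem~\ref{thm:ellipsoidtest1}: it defers to Proposition~2 of the cited reference \cite{ellipsoidtest}. So you are supplying an argument where the paper simply drops a citation. That said, your route --- writing $\max(f_a,f_b)$ as $\max_{s\in[0,1]}[(1-s)f_a+sf_b]$, invoking Sion to swap the $\min_x$ and $\max_s$, evaluating the inner quadratic minimum in closed form, and using the factorization $(1-s)\Sigma_a^{-1}+s\Sigma_b^{-1}=\Sigma_a^{-1}\bigl[(1-s)\Sigma_b+s\Sigma_a\bigr]\Sigma_b^{-1}$ to recognize $K(s)$ --- is essentially the standard Lagrangian-duality derivation underlying the cited result, so it is in the same spirit as the paper's implicit source rather than a fundamentally different proof. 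The algebra checks out: completing the square gives $\min_x g_s(x) = s(1-s)\,d^\T\Sigma_b^{-1}A^{-1}\Sigma_a^{-1}d$, the factorization collapses this to $s(1-s)\,d^\T[(1-s)\Sigma_b+s\Sigma_a]^{-1}d$, and pulling $1/(s(1-s))$ inside the inverse yields $K(s)$ exactly.

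Two small points worth tightening. First, the step from ``$\mathcal{E}_a\cap\mathcal{E}_b=\emptyset$ iff $\max(f_a(x),f_b(x))>1$ for all $x$'' to ``iff $\min_x\max(f_a,f_b)>1$'' silently uses that $\max(f_a,f_b)$ is continuous and coercive, so the infimum is attained; without attainment, ``$>1$ pointwise'' would only give ``$\geq 1$'' for the infimum. Second, in the endpoint discussion the closed-form expression $K(s)$ is only derived for $s\in(0,1)$, so to conclude that the maximum over $[0,1]$ is governed by the interior it is cleaner to simply observe $\min_x g_0(x)=\min_x f_a(x)=0$ and $\min_x g_1(x)=\min_x f_b(x)=0$ directly, rather than appealing to the limiting behavior of $K$. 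With those two remarks made explicit, the proof is complete and rigorous.
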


We will let $\Sigma_{a,b}(s) = \frac{1}{1-s}\Sigma_a + \frac{1}{s}\Sigma_b$ for compactness throughout the remainder of the work. Using this, we can rewrite $K(s) = (\mu_b-\mu_a)^T \Sigma_{a,b}^{-1}(s) (\mu_b-\mu_a)$.

Note that $K(s)$ is concave in $s$ and convex with respect to the means and variances. Theorem \ref{thm:ellipsoidtest1} is a complete test that will always indicate whether two ellipsoids are in collision or not. We note, however, that solving the feasibility problem in Theorem~\ref{thm:ellipsoidtest1} can be challenging, particularly in large-scale problems, where the feasibility problem has to solved for many pairs of ellipsoids with an associated matrix inversion procedure in each problem. 
In general, GSplat environments consists of hundreds of thousands to millions of Gaussians \cite{kerbl3Dgaussians}.
Consequently, we eliminate the matrix inversion by operating in a shared basis for both $\Sigma_A$ and $\Sigma_B$, for faster collision-checking, detailed in the following Proposition.

\begin{proposition}
    \label{prop:ellipsoidtest2}
    By solving the generalized eigenvalue problem for $\Sigma_a$ and $\Sigma_b$, we obtain generalized eigenvalues $\lambda_i$ and the corresponding matrix of generalized eigenvectors $\phi$. 
    Then
    \begin{equation*}
        \Sigma_{a,b}^{-1}(s) = \phi \, \diag \left(\frac{s (1 - s)}{1 + s(\lambda_i - 1)} \right) \phi^T.
    \end{equation*}

\end{proposition}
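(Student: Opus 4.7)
The plan is to use the fact that the generalized eigenvalue problem produces a basis in which both covariance matrices are diagonal, then work out $K(s)$ in this basis. Specifically, I would define the generalized eigenvalue problem as $\Sigma_a \phi_i = \lambda_i \Sigma_b \phi_i$, so that the matrix of eigenvectors $\phi$ (with the standard $\Sigma_b$-orthonormal normalization) simultaneously diagonalizes both covariances: $\phi^T \Sigma_a \phi = \Lambda = \diag(\lambda_i)$ and $\phi^T \Sigma_b \phi = I$. Both matrices being SPD guarantees that such a $\phi$ exists and that all $\lambda_i > 0$.

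Next, I would conjugate the inner matrix of $K(s)$ by $\phi^T$. Writing $M(s) = \tfrac{1}{1-s}\Sigma_a + \tfrac{1}{s}\Sigma_b$, the diagonalization yields
\begin{equation*}
\phi^T M(s) \phi = \diag\!\left( \frac{\lambda_i}{1-s} + \frac{1}{s} \right) = \diag\!\left( \frac{1 + s(\lambda_i - 1)}{s(1-s)} \right).
\end{equation*}
Since $\phi$ is invertible (its columns form a basis of $\R^3$), this gives
\begin{equation*}
M(s)^{-1} = \phi \, \diag\!\left( \frac{s(1-s)}{1 + s(\lambda_i - 1)} \right) \phi^T.
\end{equation*}

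Substituting into the expression for $K(s)$ from \cref{thm:ellipsoidtest1} and using $\phi^T(\mu_b-\mu_a) = -v$ (the sign is absorbed since $v$ enters quadratically), I obtain the claimed diagonal form. The main obstacles are bookkeeping rather than conceptual: one must be careful about the normalization convention for the generalized eigenvectors so that $\phi^T \Sigma_b \phi = I$ exactly (rather than some other diagonal matrix, which would otherwise leave residual factors in the denominator), and must verify that $\phi$ is a well-defined invertible matrix, which follows from the positive definiteness of $\Sigma_b$. Once the simultaneous diagonalization is set up correctly, the remainder is a direct substitution.
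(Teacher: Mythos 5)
Your proof is correct and follows essentially the same route as the paper: both rest on the simultaneous diagonalization $\phi^T\Sigma_a\phi = \Lambda$, $\phi^T\Sigma_b\phi = I$ from the generalized eigenvalue problem, then conjugate the inner matrix of $K(s)$ by $\phi$ and invert the resulting diagonal. The only difference is expository: you invoke the standard existence of a $\Sigma_b$-orthonormal eigenbasis directly, whereas the paper constructs $\phi$ explicitly via a Cholesky factorization $\Sigma_b = LL^T$ followed by a symmetric eigendecomposition of $L^{-1}\Sigma_a L^{-T}$ (which also serves as the numerical recipe), but the underlying identity and the remaining algebra are the same.
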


\begin{proof}
     We present the proof in Appendix~\ref{app:prop_gen_eigen_proof}.
\end{proof}

\begin{corollary}
\label{cor:gs_intersection_test}
Given a GSplat representation with $\Sigma_j = R S S^T R^T$, let $S S^T = \diag\left(\lambda_i^2 \right)$. If we choose to parameterize our robot body as a sphere with covariance ${\Sigma_\mathcal{R} = \kappa^2 \mathcal{I}}$, then
\begin{equation*}
    \Sigma_{a,b}^{-1}(s) = R \, \diag \left( \frac{s (1 - s)}{\kappa^2 + s(\lambda_i^2 - \kappa^2)} \right) R^T.
    \label{eq:gs_ellipsoid_test}
\end{equation*}
\end{corollary}

From hereafter, we will treat \cref{cor:gs_intersection_test} as the general formula expression of $K(s)$ for both sphere and ellipsoid to ellipsoid, substituting the appropriate variables for $R, \kappa,$ and $\lambda$. For readers interested in visualizing the behavior of the aforementioned proposition and corollary, we direct readers to \cite[Figure 2]{gilitschenski2012robust} to understand how the shape of $K(s)$ changes as ellipsoids move through space. 

\smallskip
\noindent \textbf{Extension to Linear Motion.}
\cref{prop:ellipsoidtest2} (and \cref{cor:gs_intersection_test}) can be extended to account for linear motion of ellipsoidal bodies. 
Consider a line segment starting at point $x_0$ and ending at point $x_1$, and let $\delta_x = x_1 - x_0$. Then the line segment can be parameterized as $\ell(t) = x_0 + t \delta_x$ for $t \in [0, 1]$. 
In our case, we consider a moving ellipsoid $\mathcal{E}_a$, which starts at $x_0 = \mu_a$. Let
\begin{equation}
    K(s, t) = (\mu_a + t \delta_x - \mu_b)^T \Sigma_{a,b}^{-1}(s) (\mu_a + t \delta_x - \mu_b).
\end{equation}
The ellipsoid $\mathcal{E}_a$ must satisfy \cref{prop:ellipsoidtest2} at all points along the line $\ell(t)$, so the safety test\footnote{Note that the sliding of the ellipsoid along a line forms capsules, making \cref{corr:ellipsoid-test-line} also a necessary and sufficient collision test between this type of geometry with an ellipsoid.} is
\begin{equation}
\label{eq:general_line_test}
    \min_{t \in [0, 1]} \max_{s \in (0, 1)} K(s, t) > 1.
\end{equation}

We seek to solve a portion of \cref{eq:general_line_test} using \cref{corr:ellipsoid-test-line}. 

\begin{corollary}
    \label{corr:ellipsoid-test-line}
    Note that $K(s,t)$ is a convex scalar function in $t$ because $\delta_x^T \Sigma_{a,b}^{-1}(s) \delta_x > 0$ for all $\delta_x \neq 0$ since covariance matrices are symmetric and positive definite.
    The $t$ that minimizes $K(s,t)$ is
    \begin{equation*}
        \hat{t}(s) = -\frac{(\mu_a - \mu_b)^T \Sigma_{a,b}^{-1}(s) \delta_x}{\delta_x^T \Sigma_{a,b}^{-1}(s) \delta_x},
    \end{equation*}
    so the optimal value will be $t^*(s) = \textrm{clamp}(\hat{t}(s), 0, 1)$.
    Then we can write the safety check as 
    \begin{equation*}
        K(s) = K(s, t^*(s)) = K(s, 0) + (\mu_a - \mu_b)^T \Sigma_{a,b}^{-1}(s) \delta_x t^*(s)
    \end{equation*}
\end{corollary}
\begin{proof}
Sion's minimax theorem states that switching the order of the minimum and maximum yields identical solutions when $K(s, t)$ is concave in $s$ and convex in $t$. Additionally, $s$ must lie in a convex set and $t$ in a compact, convex set, which \cref{eq:general_line_test} admits. Consequently, the max-min problem results is an inner minimization problem of a quadratic, which is solved in closed-form. The point-wise minimum of concave functions $K(s, t^*(s))$ is concave, hence the outer maximization is still over a concave function.
\end{proof}

As a byproduct of concavity of $K(s)$, we have the following corollary:

\begin{corollary}
\label{cor:intersection_test_sampling}
By concavity of $K(s)$, any approximate solution $\hat{s}$ in  Theorem~\ref{thm:ellipsoidtest1}, \cref{prop:ellipsoidtest2}, \cref{cor:gs_intersection_test}, or \cref{corr:ellipsoid-test-line} results in $K(s^*) \geq K(\hat{s})$. Hence, no approximate solution will yield false negatives (i.e., miss a collision).
\end{corollary}

\smallskip
\noindent \textbf{Optimization.}
While \cref{cor:intersection_test_sampling} is a nice blanket certificate, we can craft approximate solutions that exponentially converge to the optimal $s^*$ such that false \emph{positives} tend to 0 (i.e., a perfect approximation). Bisection searches (especially in $1$D) are efficient, simple ways to guarantee exponential convergence for bounded variables in smooth convex/concave functions, i.e., $||\hat{s}_i - s^*|| \leq \epsilon$ for any desired $\epsilon$. We propose to solve $\max_{s \in [0, 1]} K(s)$ using \cref{alg:bisection-search}.
\begin{algorithm2e}[th]
    \caption{$K(s)$ Bisection Search}
    \label{alg:bisection-search}
    
    \KwIn{number of iterations $k$\;}
    \KwOut{maximal estimator $\hat{s}$\;}
    \tcp{Initialize lower and upper bounds}
    $s_l \gets 0$, $s_h \gets 1$\;
    \For{$i\gets 0$ \KwTo $k$}{
    \tcp{Test midpoint}
    $\hat{s}_i \gets \frac{s_l + s_h}{2}$\;
    \tcp{Find minimal $t$}
    $t_i^* \gets$ clamp$\left(-\frac{\delta_x^T \Sigma_{a,b}^{-1}(\hat{s}_i) (x_0 - \mu)}{\delta_x^T \Sigma_{a,b}^{-1}(\hat{s}_i) \delta_x}, 0, 1\right)$\;
    \tcp{Find which way to move s}
    \uIf{$\nabla_s K(\hat{s}_i, t_i^*) \geq 0$}{
        $s_l \gets \hat{s}_i$\;
    }
    \uElse{
        $s_h \gets \hat{s}_i$\;
    }
    }
    $\hat{s} \gets \hat{s}_k$\;
\end{algorithm2e}

\begin{corollary}
    \label{corr:convergence_of_bisection}
The distance of $s$ to $s^*$ converges at a rate of $\epsilon = \frac{1}{2^{k+1}}$ through $k$ iterations using \cref{alg:bisection-search}. 
\end{corollary}

\begin{proof}
The bisection method guarantees convergence to a root of a continuous function $f(s)$ in the interval $[a, b]$ if $f(a)$ and $f(b)$ have different signs. The method achieves a rate of 
\begin{equation}
    ||s_k - s^*|| \leq \frac{||b - a||}{2^{k+1}}.
\end{equation}
Note that $K(s)$ evaluates to $0$ at both $s = \{0, 1\}$, and $K(s)$ is concave and non-linear. Therefore, we know a unique global maxima occurs between $0$ and $1$ and that the gradient $f(s) = \nabla_s K(s_k, t^*_k)$ is positive at $s = 0$ and negative at $s = 1$. 
\end{proof}

Additionally, note that \cref{alg:bisection-search} is batchable across many queries to different ellipsoids and is more efficient than performing uniform sampling for the same tolerance. For all tests, we use $k = 10$. 

\label{sub:safe_polytopes}

\smallskip
\noindent \textbf{Computing Safe Polytopes.}
We like to again emphasize that having convex primitives (ellipsoids) as an environment representation facilitates the development of interpretable algorithms for planning. This is especially true in the construction of safe trajectories within convex safe polytopes, which define obstacle-free regions of the robot's configuration space. We build upon prior work on convex decomposition of configuration spaces such as \cite{ruan2022efficient, liu-safe-flight-corridors}.
In this work, we leverage the ellipsoidal primitives to create polytopes that define the safe regions of space through the use of supporting hyperplanes. The ellipsoidal representation of the environment obtained from GSplat enables the direct computation of these convex obstacle-free regions without the need for a convex optimization procedure. 
Furthermore, our method is fast enough to run in real time. In the following proposition, we describe the generation of safe polytopes for a given robot.

\begin{proposition}
\label{prop:supp_hyperplane}
Given a seed point $x^*$ for a candidate robot position and a collision set $\mathcal{G}^* = \{\mathcal{E}_j\}$, a supporting hyperplane for the ellipsoid robot can be derived from Proposition \ref{prop:ellipsoidtest2} or \cref{cor:gs_intersection_test} for any desired buffer $\epsilon>0$:
\begin{equation*}
    \underbrace{\Delta^T_j \Sigma_{x^*,j}^{-1}(s^*)}_{a_j} x \geq \underbrace{(1+\epsilon){k^*_j} + \Delta^T_j \Sigma_{x^*,j}^{-1}(s^*) \mu_j}_{b_j},
\end{equation*}
where $\Delta_j = x^* - \mu_j, \Sigma_{x^*,j}^{-1}(s^*)$ uses the robot shape and $\Sigma_j$, and $({k_j^{*})^{2} = K(s^*) = \Delta^T_j \Sigma_{x^*,j}^{-1}(s^*) \Delta_j > 0}$, for $s^* \in (0, 1)$. By stacking the hyperplane constraints ($a_j, b_j$), we arrive at a polytope ${Ax \geq b}$ that is guaranteed to be safe.%
\end{proposition}

\begin{proof}
    We provide the proof in Appendix~\ref{app:proof_supp_hyperplane}.
\end{proof}

\begin{corollary}
    \label{corr:supp-hyperplane-line}
\cref{prop:supp_hyperplane} can be extended for the $K(s)$ in \cref{corr:ellipsoid-test-line} by substituting $x^* = x_0 + t^* \delta_x$, where $t^* \in [0, 1]$.
\end{corollary}

\begin{proof}
    We provide the proof in Appendix~\ref{app:proof-supp-hyperplane-line}.
\end{proof}

\cref{prop:supp_hyperplane} and \cref{corr:supp-hyperplane-line} guarantee manageability, coined by \cite{wang2024fastiterativeregioninflation}, which refers to the encapsulation of the seed object by the free-space partition. This property is important to guarantee connected-ness of each part of the safe flight corridor, which in turn admits a feasible trajectory that resides solely within the corridor. 

\begin{figure*}[th]
    \centering
    \includegraphics[width=\textwidth, trim={0 0em 0 0}, clip]{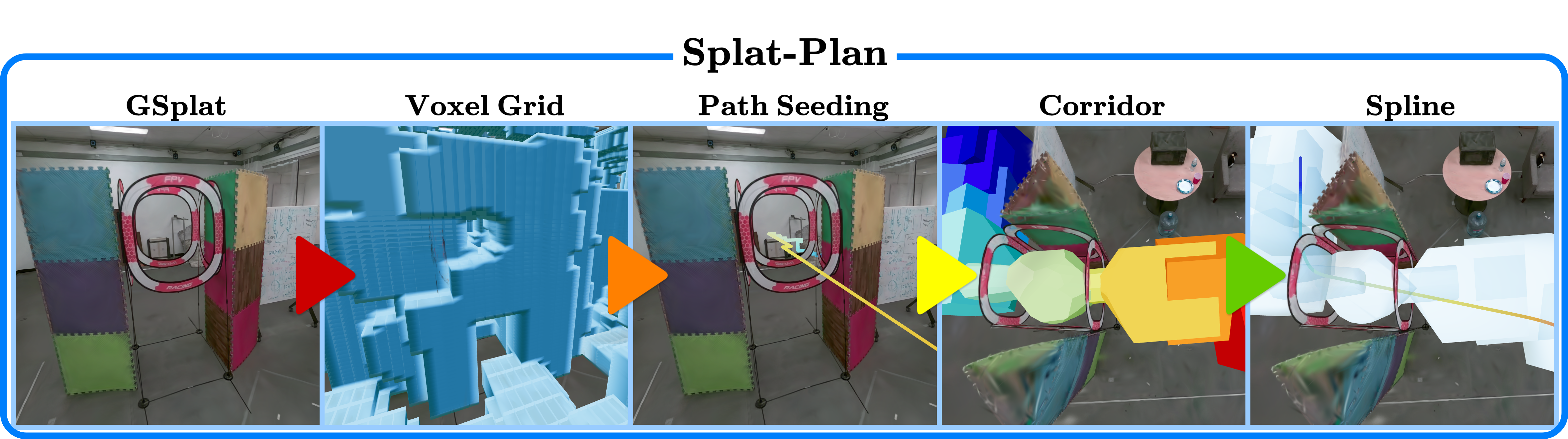}
    \caption{Splat-Plan, as described by \cref{alg:splat_plan}. Given a GSplat and its corresponding ellipsoidal collision geometry, Splat-Plan generates a binary occupancy grid representing the collision-less free space. Next, a seed path is created using graph-based search. \cref{corr:supp-hyperplane-line} synthesizes a set of connected polytopes forming a corridor around the feasible path. Finally, a quadratic program is solved for the control points of a sequence of B\'ezier curves that lives entirely within the corridor and hence is safe.}
    \label{fig:splatplan_explainer}
\end{figure*}

\smallskip
\noindent \textbf{Generating Safe Paths.}
\label{Sec:SafePlanning}
We present Splat-Plan, similar in spirit to the safe flight corridor method from \cite{liu-safe-flight-corridors}. There are four primary components: (1) feasible path seeding through graph-based search, (2) construction of a collision set around each part of the path, (3) generation of hyperplane constraints, and (4) smooth path planning posed as a spline optimization. For Splat-Plan, we leverage \cref{corr:ellipsoid-test-line}, \cref{alg:bisection-search}, and \cref{corr:supp-hyperplane-line} to generate safe polytopes along the seed path. Within the polytopes, we plan B\'ezier curves, which can be formulated as a quadratic program. 

\subsubsection{Seed Path}
There are two primary flavors of graph-based paths that are popular in the literature: those that use random trees (e.g. RRT) and those on uniform grids. We will detail how both can be used as an initialization.

Methods like RRT primarily rely on a module for collision detection at test points as well as a module to test for collision along a line. The use of \cref{corr:ellipsoid-test-line} serves both functions. Unfortunately, the probabilistic completeness of these algorithms make them undesirable for real-time execution. 

The use of a uniform grid to run algorithms like Dijkstra Search are optimal and typically faster than those of random trees \textbf{if} there exists a cheap subroutine that converts the scene representation into a uniform grid. Specifically, we would like to avoid expensive collision checking between each disjoint sub-region of $3D$ space with the environment. Conversion from point clouds to binary voxel grids circumvents this issue by binning every point and assigning it an $(i, j, k)$ index. 

While there are many ways one could convert the ellipsoidal representation into a conservative occupancy grid, we propose the following method that is parallelizable and efficient, and show in \cref{sec:results} that it is not too conservative. Without loss of generality, we assume that the robot is a sphere, which can be done by applying the necessary rotation and stretching for all ellipsoids such that the robot ellipsoid is a sphere. For every ellipsoid, we calculate its axis-aligned bounding box. 

The Minkowski sum of the ellipsoid with a sphere does not present an ellipsoid. However, at the extremal points which represent intersections of the bounding box with the ellipsoid, the normal of the ellipsoid is in the principal directions. The bounding box is defined as the following
\begin{equation}
    \label{eq:min-bounding-box}
    \mathcal{B}_j = [\mu^i_j - \sqrt{\Sigma_{ii}},\ \mu^i_j + \sqrt{\Sigma_{ii}}]_{i=1}^3,
\end{equation}
where $\mu^i_j$ is the $i-$th element in the mean for ellipsoid $j$, and $\Sigma_{ii}$ represents the $i$-th diagonal term of the associated covariance.

For those $\mathcal{B}_j$ whose side lengths are not within the resolution of each grid cell $v_{x, y, z}$, we subdivide them by their largest side length relative to $v_{x, y, z}$. We iterate on this process until all subdivisions of $\mathcal{B}_j$ are smaller than $v_{x, y, z}$. At this point, we can calculate all 8 vertices for every subdivision of $\mathcal{B}_j$ and bin them similar to the point cloud case. This procedure can leverage batch operations on GPU and ensures that we construct an over-approximation of the collision geometry.

To account for the extent of the robot body, we convert the robot sphere into a kernel (similar to \cite{chen2023catnips}) and perform a MaxPool3D operation. The resultant grid represents where the robot can be centered and be safe or unsafe. More sophisticated subdivision routines may be used to reduce the conservativism of the grid. Once the final grid is constructed, we run Dijkstra to find the seed path represented as an ordered set of connected line segments $\mathcal{L} = \{ \ell_i \}_{i=1}^L$. 

\subsubsection{Collision Set}
Along the seed path, rather than checking collisions between the robot and \emph{every} ellipsoid in the scene, we would like to quickly find a subset of these primitives in the local vicinity of the robot to check against for efficiency reasons. In fact, \cref{prop:ellipsoidtest2} or \cref{cor:gs_intersection_test} can directly be used to define a ball or ellipsoid collision set centered around the seed path, but may contain unnecessary information at the cost of additional compute.  

Instead, following the paradigm of \cite{liu-safe-flight-corridors}, we can rapidly define a bounding box oriented along $\ell_i$ and pinpoint ellipsoids that live within it without incurring the additional cost of reasoning about the linear motion of the robot body. We define a radius $r_s = \frac{v_{\rm max}^2}{2 a_{\rm max}}$, which is the maximum stopping distance (dependent on the maximum velocity and acceleration), such that the facets of the box are no less than $r_s + \kappa$ away from $\ell_i$. This bounding box will be denoted as $A_i^{bb} x\leq B_i^{bb}$.

To check all ellipsoids that are at least partially contained within the box, we check for the minimum signed distance between each hyperplane $\min_{x \in \mathcal{E}_j} a_i^{bb} x \leq b_i^{bb}$ with every ellipsoid $\mathcal{E}_j$ in the scene. Ellipsoids that have negative signed distance for every hyperplane in the box will be at least partially contained. To perform this check, the plane and ellipsoid undergo an affine transformation to produce a new plane and an origin-centered sphere. The signed distance of the new plane from the origin must be less than 1, namely

\begin{equation}
    \label{eq:ellipsoid-plane-check}
    \frac{a_i^{bb} \mu_j - b_i^{bb}}{||a_i^{bb} R_j S_j||_2} \leq 1.
\end{equation}

\subsubsection{Polytope Generation}
The creation of polytopes around the line segment $\ell_i$ can be done through \cref{corr:supp-hyperplane-line} and appending these constraints to the bounding box constraints $a_i^{bb} x \leq b_i^{bb} - \kappa \|a_i^{bb}\|_2$. Note that if we were to create a halfspace for every ellipsoid in the constraint set, we would overly constrain the free space, leading to a smaller-than-necessary polytope. This phenomenon arises from the fact that, given an existing set of halfspaces, ellipsoids that are outside of the set can still contribute non-redundant halfspaces to the existing set. Moreover, having more halfspaces than necessary in the polytope representation can significantly slow down the proceeding spline optimization. 

Therefore, we adopt a greedy algorithm like \cite{liu-safe-flight-corridors}. Every time we form a new halfspace, we use \cref{eq:ellipsoid-plane-check} to eliminate from our collision set all ellipsoids that violate this halfspace. Of the remaining ellipsoids, we create a new halfspace for the one that had the smallest $K(s^*)$. We iterate this process until no ellipsoids remain in the collision set. 

Due to manageability, we can further reduce the complexity of our corridor representation by retrieving a smaller number $P$ of polytopes than line segments $L$. For the current part of the seed path, we construct the minimal collision set and the polytope $(A_p, b_p)$. Then, we check subsequent line segments, represented as the endpoints, with the current polytope. The first instance where the line segment is not fully contained in $(A_p, b_p)$, we construct a new minimal collision set and polytope and repeat the process until the end of the seed path. Keeping more polytopes enables smoother paths (e.g. less opportunity for pinch points) at the expense of higher computation in the spline optimization phase. 

\subsubsection{Spline Optimization}
Given the safe flight corridor represented as $P$ polytopes and initial and final configurations $(x_0, x_f)$, we compute a set of $P$ B\'ezier curves (parametrized by $M+1$ control points $c^m_p$ and Bernstein basis $\beta^m(t)$\footnote{For notational simplicitiy, we refer to the variable as both the conventional basis and its time derivatives up to some specified order $D$.} with progress $t \in [0, 1]$) representing the trajectory of the robot using the path-length minimization problem:
\begin{subequations} \label{eq:traj_computation}
    \begin{align}
    \min_{c^m_p, x_p(t)} &\sum_{p=1}^P \sum_{m = 0}^{M-1} ||c^p_{m+1} - c^p_m||_2^2 \\
    \subj \nonumber\\
    \text{Safety:} \;&A_p c_p^m \leq B_p, \; p=1,..., P; m=0,...,M\\
    \text{Configuration:}\; & x_0(0)= x_0\\
    &x_P(1) = x_f \nonumber\\
    \text{Continuity:}\;&x_p(1) = x_{p+1}(0), \;  p=1,...,P-1\\
    \text{B\'ezier curve:}\;&x_p(t) = \sum_{m=0}^M \beta^m(t) c^m_p, p=1,...,P-1\\
    \text{Dynamics:}\;&x_p(t+1) = f(x_p(t), u). \label{eq:dynamics}
    \end{align}
\end{subequations}
Without the dynamics constraints \eqref{eq:dynamics}, the optimization problem reduces to a quadratic program that can be solved in real-time, producing a trajectory that can be tracked by differentially-flat robots. The quadratic program is solved natively using Clarabel \cite{Clarabel_2024}. 

Due to the convex hull property of B\'ezier curves, constraining the control points to lie in the polytopes ensures that all points along the curves will lie in the corridor and hence guarantees safety in the continuum. Additionally, Splat-Plan is sound and complete, summarized in the following corollary.

\begin{corollary}
    \label{corr:completeness}
Given that there exists a smooth, safe path in an arbitrarily complex GSplat environment, Splat-Plan (\cref{alg:splat_plan}) will always return a feasible, safe, and smooth path as the limit of the voxel occupancy grid used for path seeding goes to $0$, the number of steps in $\cref{alg:bisection-search}$ goes to $\infty$, and the number of control points parametrizing the spline also goes to $\infty$.
\end{corollary}

\begin{proof}
As the occupancy grid resolution grows increasingly small, the unoccupied grid converges toward the true collision-free space of the scene. In the limit, Dijkstra will find an initial safe, feasible path toward the goal. Next, the constructed set of polytopes forming the corridor will (1) always be safe, (2) connected, and (3) contain the initial path assuming a sufficient number of iterations of \cref{alg:bisection-search} is performed due to \cref{corr:convergence_of_bisection} and manageability. Due to the Stone-Weierstrass theorem \cite{de1959stone}, given an arbitrary smooth curve, a B\'ezier curve of sufficient degree can exactly recover it. Under mild conditions, a smooth curve exists within the corridor, and given a sufficiently expressive parametrization and enough time, the quadratic program will find a solution.

Certainly, without these conditions, Dijkstra could fail to find a path for finite resolutions. Similarly, \cref{alg:bisection-search} and \cref{corr:supp-hyperplane-line} could return a conservative estimate that does not contain the line segment if given an insufficient number of iterations. Finally, \cref{eq:traj_computation} could return infeasibility if the degree of the B\'ezier curve is not expressive enough. Failure of these three components will lead Splat-Plan to not return a solution. However, in our experimental results, we find that this is not the case in practical settings. 
\end{proof}

\smallskip
\noindent \textbf{Querying Waypoints.}
For simplicity of notation, we will refer to the output of Splat-Plan as $X(T)$, which takes in metric time, finds the associated spline $X_p$, and queries the spline for its position and derivatives at the local spline time $\Delta_p T = T - T_{p, start}$. In our hardware demonstrations, we un-normalize the B\'ezier curves in order to approximately achieve the desired $v_{\rm max}$. Specifically, by knowing the associated subset of the seed path (and the total length $L_p$) for each polytope, we can enforce in metric time the duration of the individual splines $\Delta T_{p, 0} = \frac{L_p}{v_{max}}$. Our local B\'ezier curve is re-mapped using the formula $X_p(T) = x_p(\frac{\Delta_p T}{\Delta T_{p, 0}})$. \cref{eq:traj_computation} can again be used to constrain $X_p$ to lie within the safety corridor and enforce continuity in metric time. The entire Splat-Plan algorithm is visualized in \cref{fig:splatplan_explainer}.

\begin{algorithm2e}[th]
    \caption{Splat-Plan}
    \label{alg:splat_plan}
    \KwIn{$x_0$, $x_f$, grid resolution, lower, and upper bounds $(d, u_\ell, u_h)$, $v_{max}$, $a_{max}$, $\kappa$, $\mathcal{G},$ num. iters. $k$\;}
    \KwOut{B\'ezier spline $x(t)$\;}
    \tcp{Create the voxel grid}
    $V \gets \text{GSplatVoxelGrid}(d, u_\ell, u_h, \kappa, \mathcal{G})$\;
    \tcp{Path Seeding: creates safe but non-smooth path}
    $\{\ell\}_{i=1}^L \gets V(x_0, x_f)$\;
    \tcp{Iterate through the line segments}
    \For{$i\gets 1$ \KwTo $L$}{
        \tcp{Skip to next line segment if contained}
        \uIf{$\text{IsInPolytope}(\ell_i, (A_{curr}, B_{curr}))$}{
            continue\;
        }
    \tcp{Create collision set}
    $\mathcal{G}_i, (A_i^{bb}, B_i^{bb}) \gets \text{GetCollisionSet}(\ell_i, \kappa, \mathcal{G}, v_{max}, a_{max})$\;
    \tcp{Initialize polytope}
    $A_i, B_i \gets A_i^{bb}, B_i^{bb}$\;
    \tcp{Create polytope}
    \While{$|\mathcal{G}_i| > 0$}{
    $\{K_j\} \gets \text{\cref{alg:bisection-search}}(\ell_i, \mathcal{G}_i, k)$\;
    $K \gets \min(\{K_j\})$\;
    \tcp{Create halfspace}
    $(A, B) \gets \text{\cref{corr:supp-hyperplane-line}}(\ell_i, K)$\;
    \tcp{Add halfspace to polytope}
    $A_i \gets \text{append}(A_i, A), B_i \gets \text{append}(B_i, B)$\;
    \tcp{Reject redundant ellipsoids}
    $\mathcal{G}_i \gets \text{\cref{eq:ellipsoid-plane-check}}\ (\mathcal{G}_i, (A, B))$\;
    }
    \tcp{Set current polytope}
    $A_{curr} \gets A_i, B_{curr} \gets B_i$\;
    }
    \tcp{Optimize B\'ezier splines}
    $x(t) \gets \text{Optimize}(x_0, x_f, \{(A_p, B_p) \}_{p=1}^P)$
\end{algorithm2e}

% We present our pose estimation module, Splat-Loc, for localizing a robot in a Gaussian Splatting representation of its environment. We assume that the robot has an onboard RGB camera with known camera intrinsics. First, we consider the case when an initial guess of the robot is available, before presenting approaches for dealing with cases where no prior information is available.
{
\section{Monocular Pose Estimation}
\label{sec:estimation}
In this section, we present our pose estimation module, Splat-Loc, for localizing a robot in a GSplat representation of its environment. This is essential to the overall functionality of the SplatNav pipeline as the safety guarantees of Splat-Plan only hold if the robot is able to consistently and accurately estimate its pose in the GSplat map. 
Splat-Loc only requires a monocular RGB camera, which enables it to work on a broad range of hardware platforms, including those beyond robots (such as mobile phones). Furthermore, Splat-Loc can be used either as a stand-alone pose estimation system or in conjunction with an independent pose estimation system (onboard VIO, external motion capture, etc). 

\smallskip
\noindent \textbf{Problem Formulation.}
%The robot with a mounted RGB camera receives a stream of images, which may or may not contain associated camera poses depending on the onboard localization system. 
Formally, we wish to estimate the pose of a robot at a particular time $\hat{T}_t \in \SE{(3)}$ given a color image $I_t \in \mbb{R}^{H \times W \times 3}$. The true camera pose $T_t$ is unknown. A pose in \SE${(3)}$ is parameterized by a rotation matrix ${R \in \SO(3)}$ and a translation vector ${\tau \in \mbb{R}^{3}}$
\begin{equation}
    \label{eq:exponential_map}
    T_t =\left(
        \begin{array}{c | c}
            R_t & \tau_t \\
            \hline
            \mb{0}_{1\times3} & 0
        \end{array}
    \right).
\end{equation}

In the case that the navigating robot has an independent pose estimation system, we would like to use those pose estimates as initializations for Splat-Loc's optimization procedures, and also correct these poses using the estimates from Splat-Loc.
We assume knowledge of the camera's calibration including the intrinsic matrix and distortion coefficients for projective geometry. These are easily computable, and are often available from the camera manufacturer.

%The robot may or may not have a pose estimation system that works independently of our localization framework. If the robot does not have its own 
% We would like to perform pose estimation independently of existing localization systems so that the robot has access to an additional stream of pose estimates.
% This additional information is typically useful in both sensor fusion or loop-closure scenarios. For example, Visual Inertia Odometry (VIO) estimates may be noisy when looking at feature-less surfaces and can drift over time. The advantage of building an estimator derived from a grounded pre-built scene geometry is that this estimator will not suffer from drift and may not fail in the same situations when the VIO might. 
%In this work, we limit our discussion to RGB-based estimation techniques (rather than RGB-(D) or Lidar) to accommodate a wider variety of platforms

% Mathematically, we seek to obtain a camera pose estimate at a particular time $\hat{T}_t \in \SE{(3)}$, which represents rigid transformations in $3$D space, given a sensor image $I_t$. 

\smallskip
\noindent \textbf{Lightweight Monocular Pose Estimator.}
At its core, Splat-Loc uses the fast rendering capabilities of GSplats and standard tools from camera tracking to formulate Perspective-n-Point (PnP) problems, which can be reliably solved using off-the-shelf optimizers, and produces accurate estimates of the robot pose. 
%To estimate the pose of image $I_t$, we begin by assuming access to an coarse initial guess, $\hat{T}_{t, 0}$ which can either come from an independent localization module (e.g. VIO) or can simply be the previous time step's estimate.
As input for the pose estimation procedure, we have the color image and a coarse initial guess for the pose estimate, $\hat{T}_{t,0}$. This guess can either come from an independent localization module (e.g. VIO) or can simply be the previous time step's estimate. We begin by rendering an RGB image using the GSplat map with the camera pose set to the initial guess and simultaneously generate a local point-cloud within the camera's view, effectively using the GSplat as a monocular depth estimator.

Next, a local feature extractor is used to compute visual features (keypoints and descriptors) in both the camera image and the rendered image. Each keypoint has an associated pixel coordinate $(u, v) \in \mbb{R}^2$, and let $m$ and $n$ respectively be the number of keypoints in the camera and rendered images. A feature matcher is used to determine correspondences between the visual features in the camera image and the rendered image. Let $\ell \leq \min\{m, n\}$ be the number of successfully matched features. In our experiments we found that the feature extractor SuperPoint \cite{detone2018superpoint} used in conjunction with the transformer-based LightGlue \cite{lindenberger2306lightglue} feature matcher had the best performance (see \cref{sec:results} for more details).

Using the rendered depth image and the camera intrinsics matrix, the keypoints from the rendered color image can be projected into the 3D to produce a point cloud. Let $\hat{p}_j \in \mbb{R}^3$ be the position of the $j$th projected keypoint where $j \in {1, \dots, n}$. Finally, we seek to minimize the following reprojection error in order to find the relative pose transform that transforms $\hat{T}_{t,0}$ to $\hat{T}_t$
\begin{equation}
    \label{eq:pnp_problem}
    \bar{T}_t = \argmin_{T \in \SE(3)} \sum_{k=1}^\ell ||\rho_{k} - K T \hat{p}_k||_2,
    ,
\end{equation}
where ${\rho_{k} = [u_k, v_k, 1]^{\top}}$ and subsequently recover our estimated pose $\hat{T}_t = \bar{T}_t \hat{T}_{t, 0}$. \cref{eq:pnp_problem} is the Perspective-n-Point problem, and is a nonlinear least-squares optimization problem that we solve using the Levenberg-Marquardt algorithm. 
In practice, we use Random Sample Consensus (RANSAC) to remove outliers from the set of matched features which results in more robust solutions of \cref{eq:pnp_problem}. We illustrate this procedure in \Cref{fig:pipeline}. In \cref{sec:results}, we highlight the accuracy of incremental estimation in real-world experiments while a drone navigates a cluttered environment.

\smallskip
\noindent \textbf{Global Initialization.}
\label{sec:global_initialization}
The above pose estimation procedure requires common overlap between $I_t$ and $\hat{I}_t$, necessitating a reasonably accurate initial estimate of the robot's pose $\hat{T}_{t, 0}$, which may not be available in many practical settings. When a good initial guess of the robot's pose is unavailable, we execute a global pose estimation procedure. Note that this only needs to be performed once, and then  subsequent pose estimate steps can be performed using the solution from the previous iteration.

One approach requires a monocular depth estimator, e.g., \cite{bhat2023zoedepth, yang2024depth}, to augment the RGB image obtained by the robot with depth information, which is used to generate a point cloud (in the camera frame). Another is to randomly sample $\SE{(3)}$ for pose initializations and return the pose estimate from the P$n$P run that has the lowest reprojection error. However, we instead generate a point cloud of the scene from the GSplat means $\{\mu_j\}_{j = 1}^{N}$, enabling the formulation of a point-cloud registration problem:
\begin{equation}
    \label{eq:point_to_point_registration}
    \hat{T}_{0} = \argmin_{T \in \SE(3)} \sum_{(p, q) \in \mcal{C}} \norm{p - Tq}_{2}^{2},
\end{equation}
where ${C}$ denotes the set of correspondences, associating the point $p$ in the map cloud to a point $q$ in the point cloud from the camera. If we are given a known set of correspondences, we can compute the optimal solution of \eqref{eq:point_to_point_registration} using Umeyama's method \cite{umeyama1991least}.

In practice, we do not have prior knowledge of the set of correspondences $\mcal{C}$ between the two point clouds. To address this challenge, we apply standard techniques in feature-based global point-cloud registration. We begin by computing $33$-dimensional Fast Point Feature Histograms (FPFH) descriptors \cite{rusu2009fast} for each point in the point-cloud, encoding the local geometric properties of each point. Prior work has shown that visual attributes can play an important role in improving the convergence speed of point-cloud registration algorithms \cite{park2017colored}, something that FPFH does not do. To solve this, we augment the FPFH feature descriptor of a given point with its RGB color. We then identify putative sets of correspondences using a nearest-neighbor query based on the augmented FPFH descriptors, before running RANSAC to iteratively identify and remove outliers in $\mathcal{C}$. The RANSAC convergence criterion is based on the distance between the aligned point clouds and the length of a pair of edges defined by the set of correspondences.

\smallskip
\noindent \textbf{Non-invasive Pose Correction.}
While fusing Splat-Loc poses with existing pose estimates like VIO is beyond the scope of this work, we will address challenges that arises when using Splat-Plan to plan high-level plans in a GSplat while using existing pose estimates to stabilize (i.e., for control). Fundamentally, discrepancies between the one in which the GSplat is trained in $\mathcal{T}_{\text{gs}}$ and the running coordinate frame of the existing localization module $\mathcal{T}_{(\text{control}, t)}$ can vary with time, either due to noise or drift. Yet, poses from Splat-Loc are inherently tied to the GSplat coordinate frame, leading to potentially more informative state estimates of whether the robot is in collision or not. In turn, these estimates can be passed into Splat-Plan to create safer trajectories if necessary, as depicted in \cref{fig:pipeline}. However, the trajectory that Splat-Plan returns again lives in $\mathcal{T}_{\text{gs}}$ and not necessarily the running coordinate frame of the existing localization, which is crucially used for control. To overcome this mismatch, we necessarily need to transform the outputs of Splat-Plan into the control localization frame. Namely, there exists a transform $^{\text{control}, t} T_{\text{gs}}: \mathcal{T}_{\text{gs}} \to \mathcal{T}_{(\text{control}, t)}$ that maps poses in the GSplat frame to ones in the control localization frame. Therefore, the waypoints that we send to the robot are $^{(\text{control}, t)} T_{\text{gs}}(X(T))$, which is depicted in \cref{fig:pipeline} as the input to the robot.

\section{Experiments}
\label{sec:results}
We demonstrate the effectiveness of our navigation pipeline for GSplat maps, examining its performance in real-world scenes on hardware and in simulation. In addition, we perform ablative studies comparing our algorithms against existing methods. 

\subsection{Simulation Results}

\subsubsection{Test Environments}
We benchmark Splat-Plan and Splat-Loc independently on four different environments: \textbf{Stonehenge}, a fully-synthetic scene, and three real-world scenes \textbf{Statues}, \textbf{Flightroom}, and \textbf{Old Union}. For \textbf{Stonehenge}, we captured image-pose pairs by rendering the \textbf{Stonehenge} mesh in Blender. For the other scenes, we recorded a video from a mobile phone and processed the image frames through structure-from-motion (COLMAP \cite{schoenberger2016sfm}) to retrieve corresponding camera poses and intrinsics. 

\subsubsection{Splat-Loc Evaluations}
\label{sssec:pose_estimation_mapping_localization}
We compare Splat-Loc to existing pose estimation methods, including a baseline GS-Loc, based on the localization component of existing GSplat SLAM methods \cite{yan2023gs, yugay2023gaussian}. We leverage finite differences to estimate the gradient of the photometric loss function utilized in the pose estimator, which might not be particularly fast or robust, especially for larger errors in the initial pose estimate. While these methods optimize over the re-rendering loss composed of the photometric loss, and in some cases, depth and semantic-related loss terms, in our baseline, we optimize only over the photometric loss, since we assume the robot in these evaluations does not have an RGB-D camera for depth measurements. As a result, our baseline essentially matches the GSplat SLAM method in \cite{matsuki2023gaussian}. 
In addition, we compare our pose estimator to the Point-to-plane Iterative Closest Point (ICP) \cite{chen1992object} and Colored-ICP \cite{park2017colored} algorithms, assuming these point-cloud methods have privileged $3$D information that the incremental estimation of Splat-Loc does not have. 
Furthermore, we examine two variants of our pose estimator: Splat-Loc-Glue, which utilizes LightGlue for feature matching; and Splat-Loc-SIFT, which utilizes SIFT for feature matching. 

In each scene, we run 10 trials (of $100$ frames each) of each pose estimation algorithm. We evaluate the rotation error (R.E.) and translation error (T.E.) with respect to the ground-truth pose, the computation time (C.T.) per frame, and the overall success rate (S.R.). Here, success indicates the generation of a solution regardless of its quality.
The performance of pose estimation algorithms often depends on the error associated with the initial estimate of the pose. As such, we test our system across a range of different errors in the initial estimate of the pose. In this study, we assume an initial estimate of the pose is available. We generate the initial estimate by taking the ground truth pose then applying a rotation $\delta_R$ about a random axis and the translation $\delta_t$ in a random direction.

\ifbool{enable_concise_mode}
{}
{
We note that many currently-available implementations of differentiable rasterizers for Gaussian Splatting do not provide the gradient of the re-rendering loss with respect to the camera pose \cite{kerbl3Dgaussians, Ye_gsplat}, required for gradient-based pose estimation. In addition, Gaussian Splatting does not support automatic differentiation, in its original form. As a result, we leverage finite differences to estimate the gradient of the photometric loss function utilized in the pose estimator. 

We note that our implementation might not be particularly fast or robust, especially for larger errors in the initial pose estimate, given the numerical approach utilized in estimating the gradients. 
}

We provide the summary statistics of the error in the pose estimates computed by each algorithm, in addition to the computation time on a trial with $100$ frames in the \textbf{Statues} scene in Table~\ref{tab:pose_estimation_results_statues_smaller_error_inerf}. We note that all methods had a perfect success rate in this problem. The GS-Loc algorithm achieves the lowest accuracy and requires the greatest computation time, unlike Colored-ICP, Splat-Loc-SIFT, and Splat-Loc-Glue, which achieve much-higher accuracy with a rotation error less than a degree and a translation error less than $15$cm. GS-Loc requires a computation time of about \unit[36.15]{s} per frame, which is about two orders of magnitude slower than the next-slowest method ICP, which requires a computation time of about \unit[110]{ms}. Colored-ICP, \mbox{Splat-Loc-SIFT}, and \mbox{Splat-Loc-Glue} require less than \unit[100]{ms} of computation time. Compared to all methods, \mbox{Splat-Loc-Glue} yields pose estimates with the lowest mean rotation and translation error, less than $0.06^\circ$ and \unit[4]{mm}, respectively, and achieves the fastest mean computation time, less than \unit[42]{ms}. The computation time of Splat-Loc may be about a standard deviation greater during the first call, which may be due to the time spent loading the models and initializing the GPU kernels. 

\begin{table}[h!]
	\centering
	\caption{Comparison of baseline pose estimation algorithms in simulation in the \textbf{Statues} scene with ${\delta_R = 20^\circ}$ and ${\delta_t = \unit[0.1]{m}}$.
        }
	\label{tab:pose_estimation_results_statues_smaller_error_inerf}
	\begin{adjustbox}{width=\linewidth}
		{\begin{tabular}{l c c c c}
				\toprule
				Algorithm & R.E. (deg.) & T.E. (cm) & C.T. (msec.) & S.R. ($\%$) \\
				\midrule
				ICP \cite{chen1992object} & $73.1 \pm 45.9$ & $129 \pm 75$ & $107 \pm 19.2$ & $100$ \\
				Colored-ICP \cite{park2017colored} & $0.83 \pm 0.37$ & $1.31 \pm 0.60$ & $43.3 \pm 9.70$ & 100 \\
				\mbox{Splat-Loc-SIFT} (ours) & $0.09 \pm 0.06$ & $0.56 \pm 0.75$ & $63.3 \pm 2.39$ & 100 \\
				\mbox{Splat-Loc-Glue} (ours) & $\bm{0.05 \pm 0.03}$ & $\bm{0.33 \pm 0.27}$ & $\bm{41.2 \pm 73.2}$ & 100 \\
				GS-Loc \cite{yan2023gs, yugay2023gaussian, matsuki2023gaussian} & $122 \pm 33.8$ & $245 \pm 91.2$ & $36200 \pm 5440$ & $100$ \\
				\bottomrule
		\end{tabular}}
	\end{adjustbox}
\end{table}

Lastly, we examine the performance of the pose estimation algorithms in problems with a larger error in the initial estimate of the pose, with $\delta_R = 30^\circ$ and $\delta_t = \unit[0.5]{m}$ in the synthetic \textbf{Stonehenge} scene. We present the performance of each algorithm on each metric in Table~\ref{tab:pose_estimation_results_stonehenge_larger_error}, where we note that ICP and Colored-ICP do not provide accurate estimates of the robot's pose. Moreover, the pose estimation errors achieved by ICP and Colored-ICP have a significant variance. In contrast, \mbox{Splat-Loc-SIFT} and \mbox{Splat-Loc-Glue} yield pose estimates of high accuracy with average rotation and translation errors less than $0.5$ deg. and $5$mm, respectively. However, \mbox{Splat-Loc-SIFT} achieves a lower success rate, compared to \mbox{Splat-Loc-Glue}, which achieves a perfect success rate. 

\begin{table}[th]
	\centering
	\caption{Comparison of baseline pose estimation algorithms in the \textbf{Stonehenge} scene with $\delta_R = 30^\circ$ and $\delta_t = \unit[0.5]{m}$.
 % across $10$ trials.
        }
	\label{tab:pose_estimation_results_stonehenge_larger_error}
	\begin{adjustbox}{width=\linewidth}
		{\begin{tabular}{l c c c c}
				\toprule
				Algorithm & R.E. (deg.) & T.E. (cm) & C.T. (msec.) & S.R. ($\%$) \\
				\midrule
				ICP \cite{chen1992object} & $131 \pm 22.6$ & $370 \pm 554$ & $122 \pm 153$ & $100$ \\
				Colored-ICP \cite{park2017colored} & $94.9 \pm 51.3$ & $57.4 \pm 28.8$ & $488 \pm 104$ & $20$ \\
				\mbox{Splat-Loc-SIFT} (ours) & $\bm{0.217 \pm 0.0369}$ & $0.334 \pm 0.0563$ & $139 \pm 3.15$ & $70$ \\
				\mbox{Splat-Loc-Glue} (ours) & $0.220 \pm 0.203$ & $\bm{0.315 \pm 0.210}$ & $\bm{45.1 \pm 0.611}$ & $100$ \\
				\bottomrule
		\end{tabular}}
	\end{adjustbox}
\end{table}

\subsubsection{Splat-Plan Evaluations}
Splat-Plan is benchmarked against three different methods: a point-cloud planner \cite{liu-safe-flight-corridors}, a sampling-based planner (RRT* using \cref{prop:ellipsoidtest2}), and a NeRF-based planner (NeRF-Nav \cite{adamkiewicz2022vision}). Furthermore, we perform ablations against variations of the point-cloud planner in order to expose flaws when planning against point clouds compared to the full scene geometry. For each simulation scene, we train a dense and sparse GSplat, totaling 8 scenes. In every scene, we run 100 start and goal locations distributed in a circle around the boundary of the scene.

In the simulated tests, we represent the robot using balls of various sizes in order to generate interesting trajectories due to the fact that the simulated scenes are not trained in metric scale.\footnote{Nerfstudio adopts the NeRF conventions in scaling the scene to fit within the confines of a two-unit-length cube centered at the origin, with the poses of the camera residing within a $[-1, 1]^3$-bounding box. We disable this feature for the hardware \textbf{Maze} scene.} . Additional parameters, such as the number of Gaussians, can be found in \cref{tab:params}.

\begin{table}[h]
  \centering
  \caption{Parameters across scenes for experiments. Number of Gaussians is reported for both dense and sparse variants of the same scene.}
  \label{tab:params}
  \begin{tabular}{|c||c|c|c|c|c|}
\hline
 & Radius & $V_{max}$ & $A_{max}$ & Resolution & N. Gauss (K)\\ \hline\hline
    Stonehenge & 0.01 & 0.1 & 0.1 & $150^3$ & $116/12$\\ \hline
    Statues & 0.03 & 0.1 & 0.1 & $100^3$ & $201/18$\\ \hline
    Flight & 0.02 & 0.1 & 0.1 & $100^3$ & $281/4$\\ \hline
    Old Union & 0.01 & 0.1 & 0.1 & $100^3$ & $525/87$\\ \hline
    Maze & 0.25 & 0.5 & 1.0 & $80^3$ & $100$\\ \hline
    Maze (fast) & 0.25 & 1.5 & 1.0 & $80^3$ & $100$\\ \hline
  \end{tabular} 
\end{table}

While point cloud-based planners are ubiquitously used, they can sometimes fall short when the scene geometry is not dense or if the scene is very cluttered. To this end, we developed four variants of the Safe Flight Corridor (SFC) \cite{liu-safe-flight-corridors}. SFC-1 ingests the GSplat means as a point cloud, runs Dijkstra to retrieve a feasible initial path seed, creates collision sets with respect to the point cloud, synthesizes a polytope corridor that marginally intersects with the point cloud, and finally deflates the polytopes by the robot radius. These polytopes are fed to the same spline optimizer \eqref{eq:traj_computation} that Splat-Plan uses. \mbox{SFC-2} executes the same pipeline as SFC-1, but the point cloud representation is sampled from the surface of the ellipsoids. We sample 20 points from each ellipsoid in the scene to simulate a typical amount of points a Lidar or depth image would produce (approximately 2-5 million points). SFC-3 uses the \mbox{Splat-Plan} occupancy grid to retrieve a feasible path seed, while the means are still used to create polytopes. Finally, SFC-4 uses the Splat-Plan occupancy grid, synthesizes polytopes using the means, but deflates the polytope by the robot radius and the maximum eigenvalue of the ellipsoid whose mean was used to create a particular halfspace in the polytope. These variants are all potential solutions to apply SFC to GSplat environments. We summarize the tradeoffs of all methods in \cref{tab:tradeoffs}.

\begin{table}[h]
  \centering
  \caption{Splat-Plan strikes a favorable tradeoff compared to existing methods in terms of safety, non-conservativeness (NC), smoothness, solution feasibility, and real-time execution.}
  \label{tab:tradeoffs}
  \begin{tabular}{|c||c|c|c|c|c|c|}
    \hline
\cline{2-7}
     & Safe & NC & Smooth & Feasible & RT & Env. \\ \hline\hline
    NeRF-Nav & \texttimes & N/A & \texttimes & \checkmark & \texttimes & NeRF  \\ \hline
    RRT* & \texttimes & \texttimes & \texttimes &\checkmark & \texttimes & GS  \\ \hline
    SFC-1 & \texttimes & \checkmark & \checkmark & \texttimes &\checkmark & GS  \\ \hline
    SFC-2 & \texttimes & \checkmark & \checkmark & \texttimes &\checkmark & GS  \\ \hline
    SFC-3 & \texttimes & \checkmark & \checkmark & \texttimes &\checkmark & GS \\ \hline
    SFC-4 & \texttimes & \texttimes & \checkmark & \texttimes &\checkmark & GS  \\ \hline \hline 
\rowcolor[gray]{0.8} \textbf{Splat-Plan} & \checkmark & \checkmark & \checkmark &
 \checkmark & \checkmark & GS \\ \hline
  \end{tabular}
\end{table}

Visually, the paths generated by \mbox{Splat-Plan} are smooth, safe, and non-conservative (Fig. \ref{fig:planning_qualitative}). This fact is validated in Fig. \ref{fig:planning_quantitative}, where Splat-Plan's trajectories in blue are safe (minimum distances greater than 0 with respect to the GSplat collision geometry). Unfortunately, because many of these scenes were captured in the real-world, no ground-truth mesh exists. Moreover, we inspect the point cloud and mesh created by COLMAP and notice poor overall reconstruction of the collision geometry. Therefore, we elected to use the GSplat ellipsoidal geometry in place of the ground-truth geometry due to its high-quality approximation.

Notice that these trajectories are non-conservative compared to the SFC methods (low path lengths and high polytope volume in \cref{fig:planning_quantitative,fig:sfc_ablations}). More importantly, we see that Splat-Plan never fails to return a trajectory, highlighted by the 0 failure rate. All other methods have failures, other than NeRF-Nav by virtue of it being an end-to-end optimization method. Finally, Splat-Plan has comparable execution times to SFC. Note that as SFC does not use GPU, we rewrote the codebase in Pytorch to yield comparable times to Splat-Plan. 

\begin{figure*}[t]
    \centering
    \includegraphics[width=\textwidth, trim={0em, 0em, 0em, 0}, clip]{
    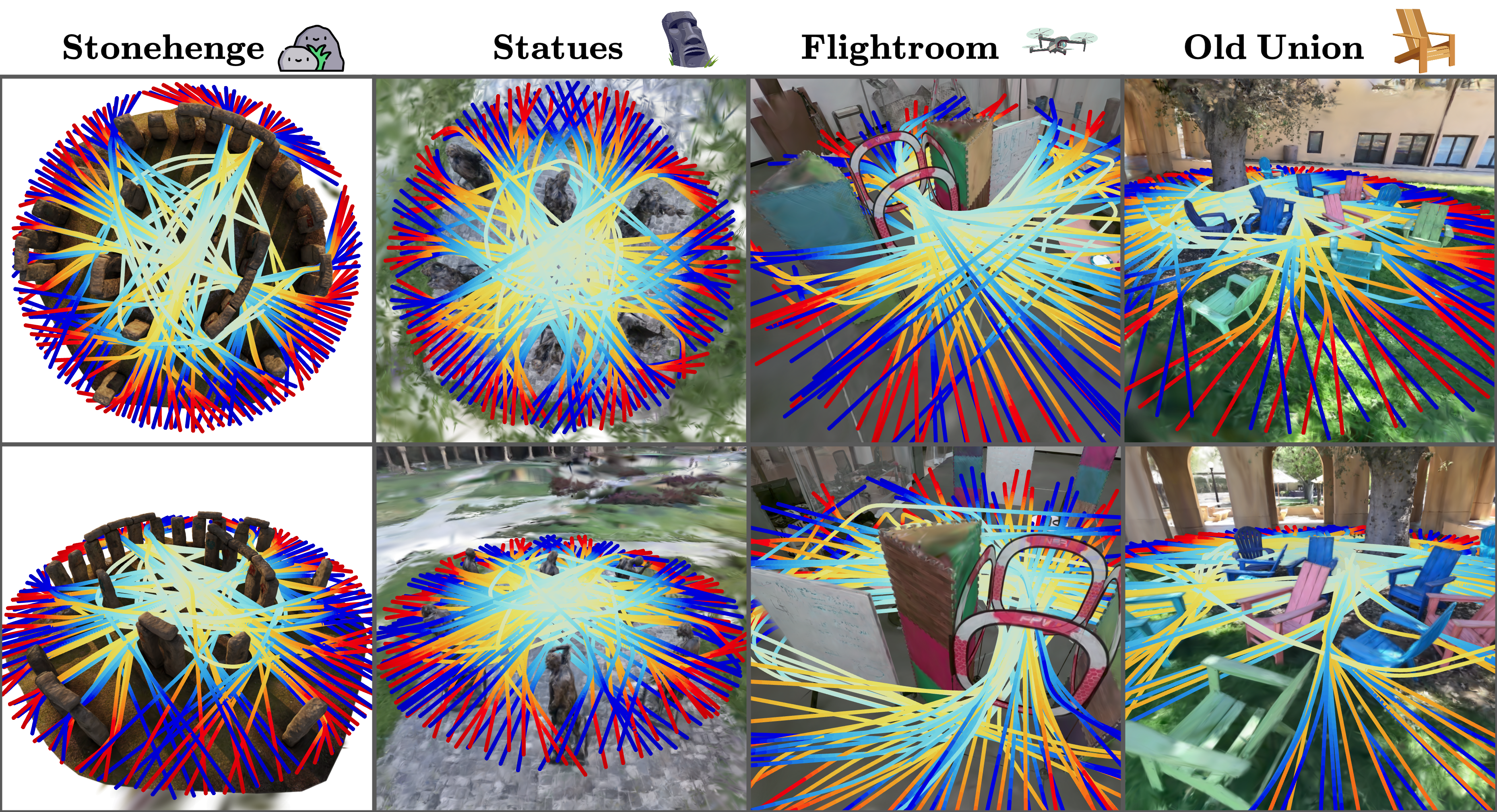
    }
    \caption{Qualitative results of 100 safe trajectories using Splat-Plan with start (blue) and goal (red) states spread over a circle. We see that the trajectories are safe, but not conservative.}
    \label{fig:planning_qualitative}
\end{figure*}

\begin{figure}[t]
    \centering
    \includegraphics[width=\columnwidth, trim={0em, 0em, 0em, 0}, clip]{
    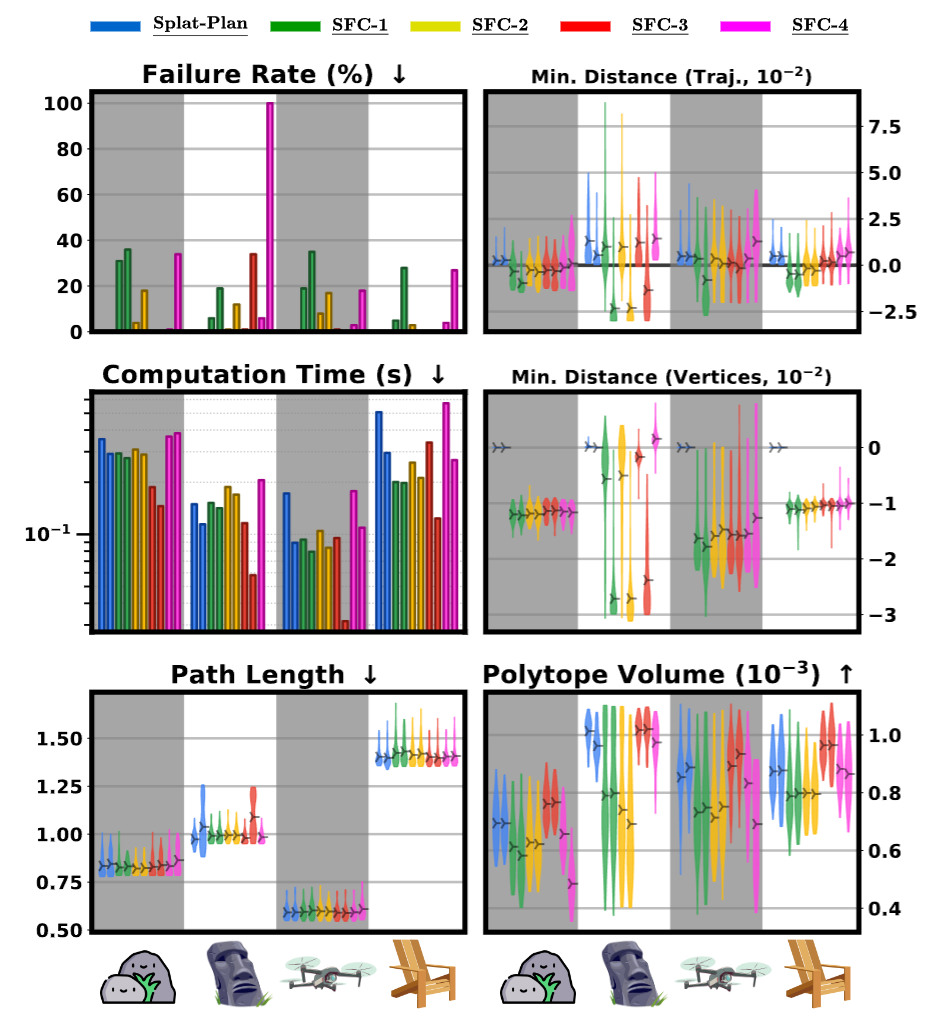
    }
    \caption{A comparison of trajectories generated by Splat-Plan and four variants of SFC \cite{liu-safe-flight-corridors} for the same 100 start and end locations for each scene (Stonehenge, Statues, Flightroom, and Old Union). Bars on the left represent trajectories planned using the dense scene, while the sparse scene is on the right. We evaluate the safety (Distance to GSplat and failure rate), conservativeness (path length and polytope volume), and computation time. We also show the distance of the polytope vertices to the GSplat to illustrate that Splat-Plan is sound. Violin plots showcase the spread of the minimum distance, path length, and polytope volume across all 100 trajectories, with markings indicating the mean. Splat-Plan displays competitive non-conservativeness and computation time, while exhibiting superior safety and success rates.}
    \label{fig:sfc_ablations}
\end{figure}

\begin{figure}[th]
    \centering
    \includegraphics[width=\columnwidth, trim={0em, 0em, 0em, 0}, clip]{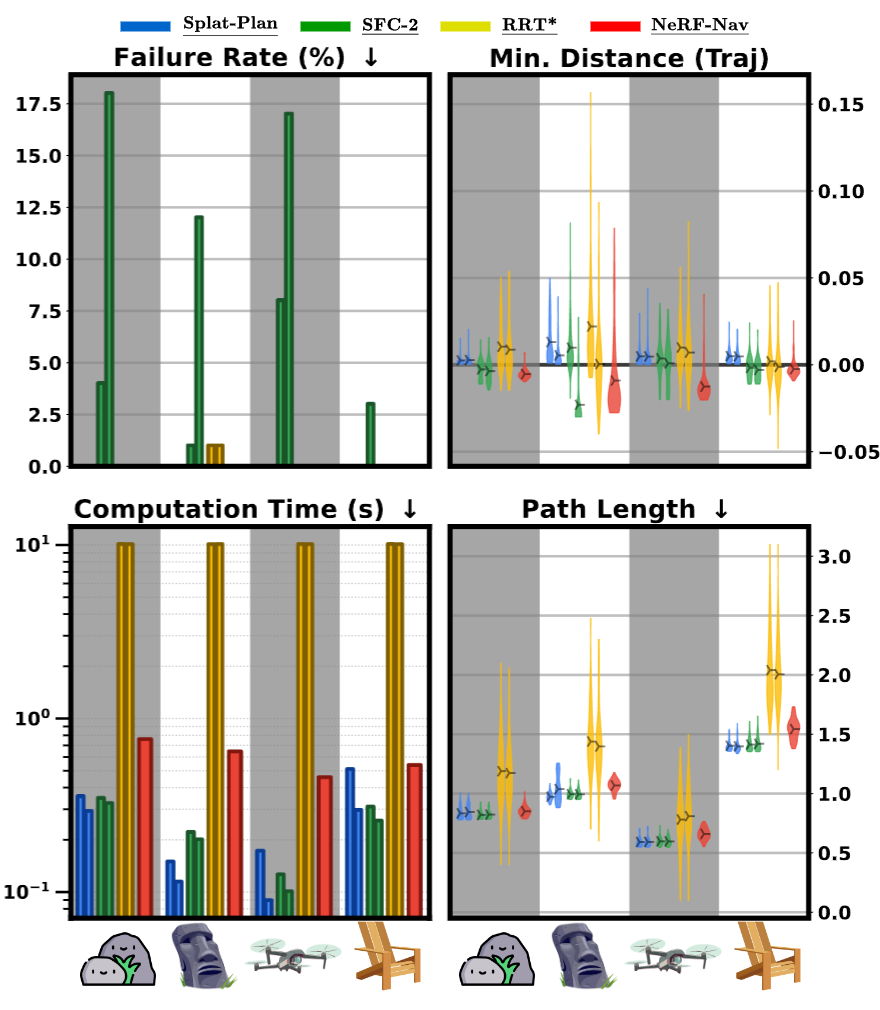}
    \caption{A comparison of trajectories generated by Splat-Plan, SFC \cite{liu-safe-flight-corridors}, RRT*, and NeRF-Nav \cite{adamkiewicz2022vision}) for the same 100 start and end locations for each scene (Stonehenge, Statues, Flightroom, and Old Union). Bars on the left represent trajectories planned using the dense scene, while the sparse scene is on the right. We evaluate the safety (Distance to GSplat and failure rate), conservativeness (path length), and computation time. Violin plots showcase the spread of the minimum distance and path length across all 100 trajectories, with markings indicating the mean. Splat-Plan is safer than competing methods, relatively fast, non-conservative, and never fails across all 8 problem settings.}
    \label{fig:planning_quantitative}
\end{figure}

Finally, in terms of memory, we observe that in the scene with the most Gaussians (\textbf{Old Union}), GPU memory usage hovered around 3.1 GB, with the GSplat itself requiring 1.6 GB and the binary occupancy grid, 1.5GB.

\subsection{Hardware Results}

\subsubsection{Test Environment}
We test Splat-Nav in the \textbf{Maze} scene using a drone. Images to train \textbf{Maze} were captured using the RGB camera onboard the drone. We utilize Nerfstudio \cite{nerfstudio} to train the Semantic GSplat, using its default parameters (which includes estimating the camera poses for each image frame from structure-from-motion via COLMAP \cite{schoenberger2016sfm}). 
% Specifically, we distill embeddings from the $2$D vision-language foundation model CLIP \cite{radford2021learning} into the $3$D Gaussian Splatting scene representation to enable the specification of a goal location for the robot via natural-language. Consequently, a user can ask a robot to navigate to the ``door" of a room represented using Gaussian Splatting, where the goal location of the robot is specified by the natural-language query ``door." Our approach follows similar distillation techniques applied in NeRF-based scene representations \cite{zhou2022extract, kerr2023lerf}. However, unlike these methods, we distill the semantic embeddings into a Gaussian Splatting scene.
%
In Figure~\ref{fig:maze_scene}, we show the true training images captured by the drone, the rendered RGB image from the GSplat at the same camera pose, and the semantic relevancy for the associated language query. First, we note that the rendered image is photorealistic, highlighting the remarkable visual quality of the trained Gaussian Splat. Second, the semantic relevancy spatially agrees with the expected location of the queried object, making the semantic field suitable for open-vocabulary goal querying.

\begin{figure}[th]
    \centering
    \includegraphics[width=\columnwidth]{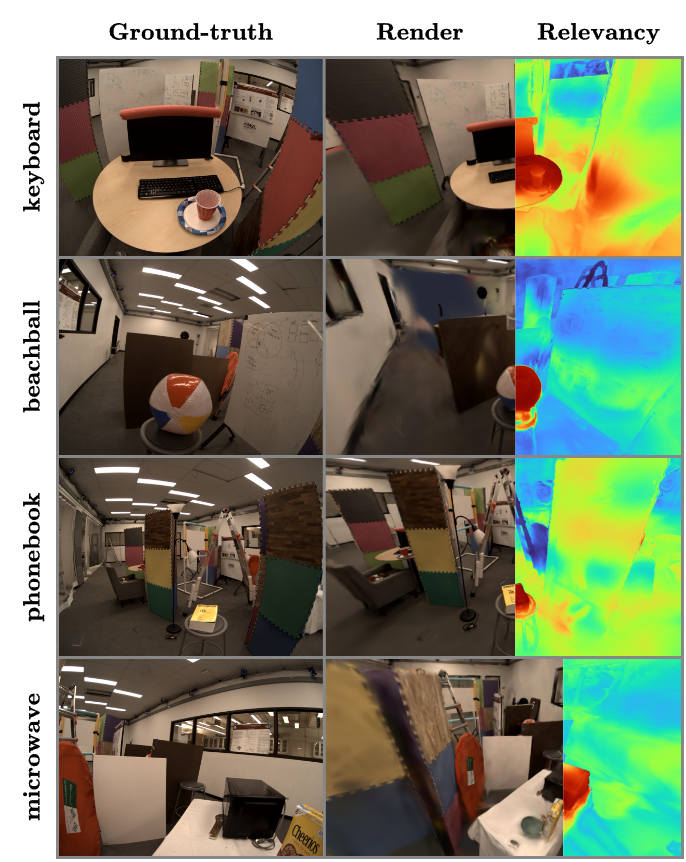}
    \caption{Natural language-specified goal locations in the \textbf{Maze} scene. The rendered RGB image from the GSplat demonstrate good reconstruction of the ground-truth. Additionally, the semantic relevancy spatially agrees with the provided language query. 
    % \textcolor{red}{If we have time, we should render all these images from the same training viewpoint.}
    }
    \label{fig:maze_scene}
\end{figure}

\subsubsection{Hardware}
We test our pipeline on the Modal AI development drone platform measuring $29$~cm x $20$~cm x $10$~cm (diagonal length of $36.6$~cm). In the hardware tests, we approximate the robot using a sphere with diameter $\unit[0.5]{m}$. Readers can find our test parameters in \cref{tab:params}. An OptiTrack motion capture system is solely used for evaluation purposes. Any other markers, such as ArUco tags, in the scene are purely cosmetic.

\subsubsection{Implementation}
We run the pose estimator and the planner ROS2 nodes on a desktop computer with an Nvidia RTX 4090 GPU and an Intel i9 13900K CPU, which communicates with the drone via WiFi. We emphasize that both modules are running asynchronously. At a frequency of about \unit[3]{Hz}, the drone transmits images from its cameras and associated VIO poses to the desktop computer. Splat-Loc ingests the VIO pose $\hat{T}_{t, 0}$ and the image $I_t$ to compute the pose estimate $\hat{T}_t$ of the drone body after applying rigid body transforms to transform the camera pose to the body frame. We run the estimator continuously, synchronized with the stream of images published from the drone via ROS2. 

The planning module ingests $\hat{T}_t$ as $x_0$ and computes a safe trajectory for the drone to follow toward the language-conditioned goal $x_f$. The re-plan node, which runs Splat-Plan based on $x_0$, updates the spline(s) $X(T)$ as frequently as possible. The waypoint node, which operates asynchronously from the re-plan node, measures the running time since the $X(T)$ was last updated and returns positions, velocities, acceleration, and jerk at this running time. The waypoint node runs at \unit[10]{Hz}. This architecture allows the drone to continue following a smooth spline even when Splat-Plan is still computing the next plan. However, we find that simply sending position waypoints can cause the drone to jerk when $X(T)$ is updated, as successive splines need not be close to one another. To rectify this issue, we forward integrate the waypoint velocities to get positions. Finally, these positions undergo $^{(\text{control}, t)} T_{\text{gs}}(X(T))$ before being sent to the drone. Additionally, we run a Kalman Filter to smooth $^{(\text{control}, t)} T_{\text{gs}}$, as both the Splat-Loc and VIO pose estimates can be somewhat noisy. 

\subsubsection{Goal Specification}
\label{sec:goal_specification}
In the \textbf{Maze}, we specify the goal locations for the drone via natural language, comprising of the following objects: a keyboard, beachball, phonebook, and microwave (depicted in Figure~\ref{fig:maze_scene}). We query the semantic Gaussian Splat for the location of these objects using the following text prompts: ``keyboard," ``beachball," ``phonebook" and ``microwave," corresponding to these objects, without negative prompts. These objects are placed in locations that require dynamic motions, such as hard turns and elevation maneuvers, to reach. Moreover, all tests begin at the same position at hover, and the objects are positioned relative to this position so that they are not immediately visible when the drone first begins flight.

\subsubsection{Control Schemes}
Our hardware tests consist of three different control schemes, coined Open-loop, Closed-loop VIO, and Splat-Loc. Open-loop tests do not not re-plan, and therefore does not use Splat-Loc estimates. One trajectory is created at the start $T = 0$, and the control node returns the corresponding waypoint at that point in time. No forward integration of the velocities is necessary since only one trajectory is ever created. Closed-loop VIO and Splat-Loc are re-planning control schemes where the re-plan node updates the trajectory $X(T)$ as frequently as possible. Closed-loop VIO uses the VIO estimate as $x_0$ and no additional transform is applied to the waypoint. Conversely, Splat-Loc uses the Splat-Loc pose estimate as $x_0$, and the smoothed $^{(\text{control}, t)} T_{\text{gs}}$ is applied to the Splat-Plan trajectories to transform them into the VIO control frame. 
Our hardware tests consist of all combinations of goal locations and control schemes. In addition, we run these combinations 10 times for statistical significance, yielding a total of 120 flights. 

\subsubsection{Splat-Loc Evaluations}
We validate the performance of Splat-Loc in 
hardware experiments in the \textbf{Maze} scene, showing that Splat-Loc achieves relatively the same level of accuracy as the onboard VIO in estimating the drone's pose, without requiring any special calibration or re-initialization procedures for frame alignment, which the onboard VIO requires. In Table~\ref{tab:pose_estimation_results_hardware_splat_loc_mocap}, we provide the rotation and translation errors of the Splat-Loc estimates, with the MOCAP poses as the ground-truth estimates. We note that Splat-Loc achieves rotation errors of about \unit[3]{deg} and translation errors of about \unit[4] {cm}, which is comparable to the accuracy of the VIO estimates, shown in Table~\ref{tab:pose_estimation_results_hardware_vio_mocap}. However, Splat-Loc failed in one of the closed-loop trials with the ``keyboard" goal location. As a result, the rotation and translation errors for this goal location is higher compared to the those of the other goal locations. The failure case is visualized in \Cref{fig:hardware_viz}, where the drone goes past the keyboard. We note that the failure likely occurred because the drone's camera was pointing towards an area of the scene which was not really covered in the video used in training the GSplat. We discuss strategies for addressing such failure cases in \Cref{sec:limitations_future_work}. In \Cref{fig:maze_scene}, we show the estimated trajectories of the drone using MOCAP, the onboard VIO, and Splat-Loc, demonstrating the effectiveness of Splat-Loc. Essentially, all the pose estimators achieve comparable estimation accuracy. However, unlike the MOCAP system, Splat-Loc does not require a specialized hardware system and is amenable to any monocular camera.
Moreover, Splat-Loc runs at about $25$ Hz on average, which is fast-enough for real-time operation. The bulk of the computation time is utilized in computing the feature matches and in solving the PnP problem, which requires about $10$ milliseconds. 

% Further, we present additional performance statistics of Splat-Loc in a real-world Mini-Office scene in Table~\ref{tab:pose_estimation_results_hardware}. For each goal location, Splat-Loc achieves average rotation errors less than $5^\circ$ and translation errors less than $5$~cm. Moreover, Splat-Loc runs at about $25$ Hz on average, which is fast-enough for real-time operation. The bulk of the computation time is utilized in computing the feature matches and in solving the PnP problem, which requires about $10$ milliseconds. We note that Splat-Loc was successful in estimating the pose of the drone across all trials.

\begin{figure}[th]
    \centering
    \includegraphics[width=\columnwidth]{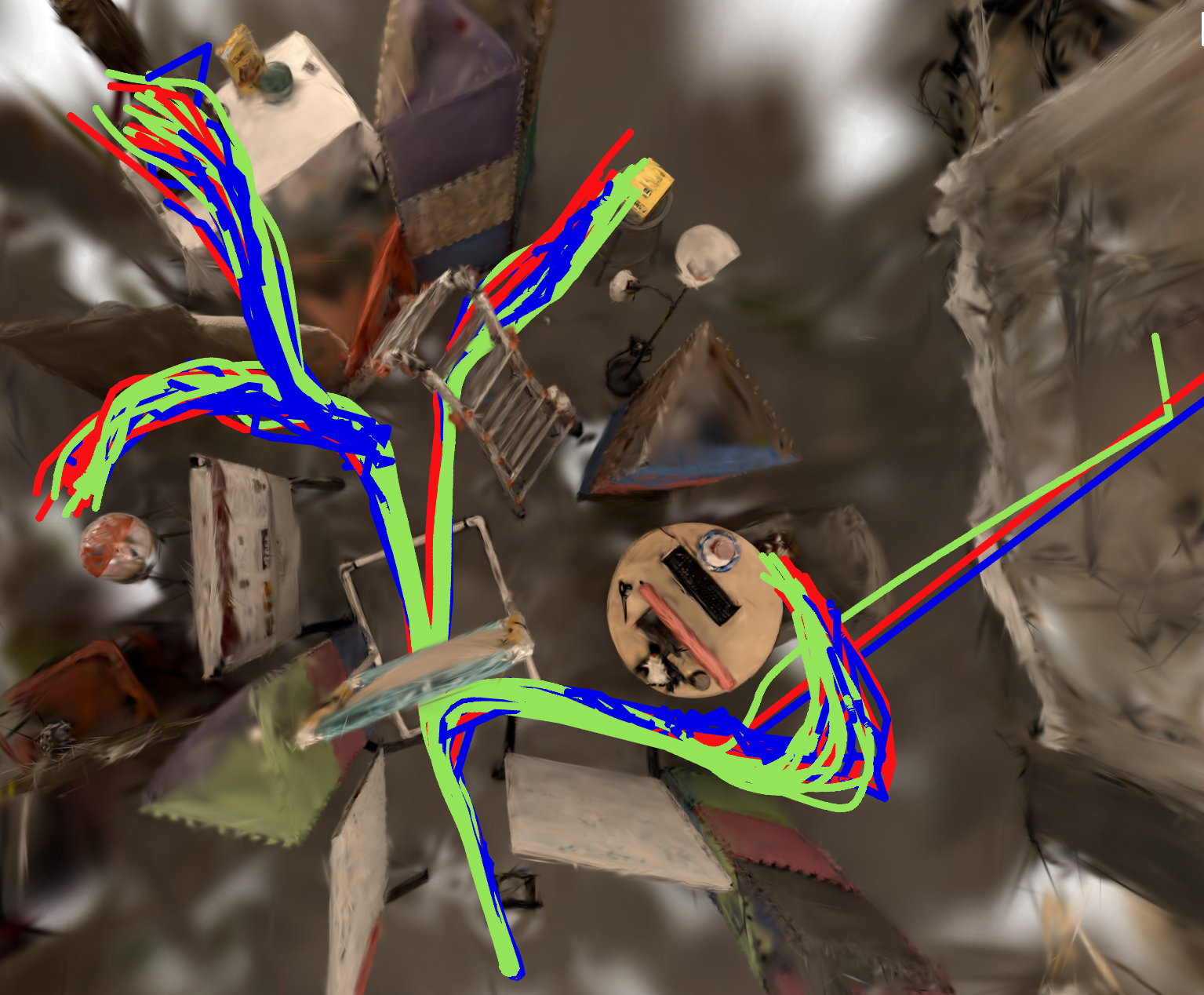}
    \caption{Pose estimates of Splat-Nav using motion capture (green), onboard VIO (red), and Splat-Loc (blue).  Splat-Loc gives comparable performance without requiring a manual frame alignment between the GSplat and a separate localization frame.}
    \label{fig:maze_scene}
\end{figure}

% \ifbool{enable_concise_mode}
% {}
% {
% In Figure~\ref{fig:pose_estimation_results_hardware}, we show the rotation and translation errors, as well as the computation time, on a random trial across all goal locations. As shown in the figure, Splat-Loc is able to recover from spikes in the pose errors, which may occur due to disturbances or occlusions. 
% }

% \ifbool{condensed_paper}
% {
% }
% {
% We note that the pose estimator is robust to slight changes in the Gaussian Splatting map and the real-time scene, which we discuss briefly in Appendix~\ref{app:robustness_to_changes} in the Supplementary Material.
% }

\begin{table}[th]
	\centering
	\caption{Rotation Error (R.E.), Translation Error (T.E.), and the Success Rate (S.R.) of Splat-Loc with MOCAP as the ground-truth reference.
        }
	\label{tab:pose_estimation_results_hardware_splat_loc_mocap}
	\begin{adjustbox}{width=0.9\linewidth}
		{\begin{tabular}{l c c c}
				\toprule
				Goal & R.E. (deg.) & T.E. (cm) & S.R.($\%$) \\
				\midrule
				Beachball & $2.82 \pm 0.12$ & $3.93 \pm 0.86$ & $100$ \\
				Keyboard & $5.61 \pm 3.50$ & $8.14 \pm 0.35$ & $90$ \\
				Microwave & $2.82 \pm 0.23$ & $3.59 \pm 0.42$ & $100$ \\
				Phonebook & $2.83 \pm 0.25$ & $3.37 \pm 0.95$ & $100$ \\
				\bottomrule
		\end{tabular}}
	\end{adjustbox}
\end{table}

\begin{table}[th]
	\centering
	\caption{Rotation Error (R.E.), Translation Error (T.E.), and the Success Rate (S.R.) of VIO pose estimates with MOCAP as the ground-truth reference.
        }
	\label{tab:pose_estimation_results_hardware_vio_mocap}
	\begin{adjustbox}{width=0.9\linewidth}
		{\begin{tabular}{l c c c}
				\toprule
				Goal & R.E. (deg.) & T.E. (cm) & S.R.($\%$) \\
				\midrule
				Beachball & $2.42 \pm 0.13$ & $3.87 \pm 0.89$ & $100$ \\
				Keyboard & $2.36 \pm 0.27$ & $3.28 \pm 1.20$ & $100$ \\
				Microwave & $2.36 \pm 0.14$ & $4.29 \pm 1.30$ & $100$ \\
				Phonebook & $2.40 \pm 0.23$ & $3.63 \pm 1.23$ & $100$ \\
				\bottomrule
		\end{tabular}}
	\end{adjustbox}
\end{table}

% \begin{table}[th]
% 	\centering
% 	\caption{Further pose estimation results using \mbox{Splat-Loc} in real-world scenes, showing the Rotation Error (R.E.), Translation Error (T.E.), Computation Time (C.T.), and the Success Rate (S.R.).
%         }
% 	\label{tab:pose_estimation_results_hardware}
% 	\begin{adjustbox}{width=\linewidth}
% 		{\begin{tabular}{l c c c c}
% 				\toprule
% 				Goal & R.E. (deg.) & T.E. (cm) & C.T. (msec.) & S.R. ($\%$) \\
% 				\midrule
% 				  Microwave & $1.55 \pm 0.07$ & $3.97 \pm 0.69$ & $39.9 \pm 1.03$ & $100$ \\
% 				Water Gallon & $1.56 \pm 0.05$ & $4.52 \pm 0.69$ & $40.1 \pm 1.71$ & $100$ \\
% 				Gray Sofa & $1.54 \pm 0.05$ & $3.97 \pm 1.02$ & $40.4 \pm 4.11$ & $100$ \\
% 				Keyboard & $1.55 \pm 0.11$ & $4.59 \pm 0.92$ & $39.7 \pm 2.90$ & $100$ \\
% 				\bottomrule
% 		\end{tabular}}
% 	\end{adjustbox}
% \end{table}

% \ifbool{enable_concise_mode}
% {}
% {
% \begin{figure}[h]
% 	\centering
% 	\includegraphics[width=\linewidth, trim={0 7ex 0 17ex}, clip]{}
% 	\caption{Pose estimation errors and computation time required by Splat-Loc on a random trial of the hardware experiments in the Mini-office scene. Splat-Loc computes high-accuracy pose estimates in real-time.}\label{fig:pose_estimation_results_hardware}
% \end{figure}
% }

\subsubsection{Splat-Plan Evaluations}
\begin{figure*}[th]
    \centering
    \includegraphics[width=\textwidth, trim={0, 0em, 0, 0}]{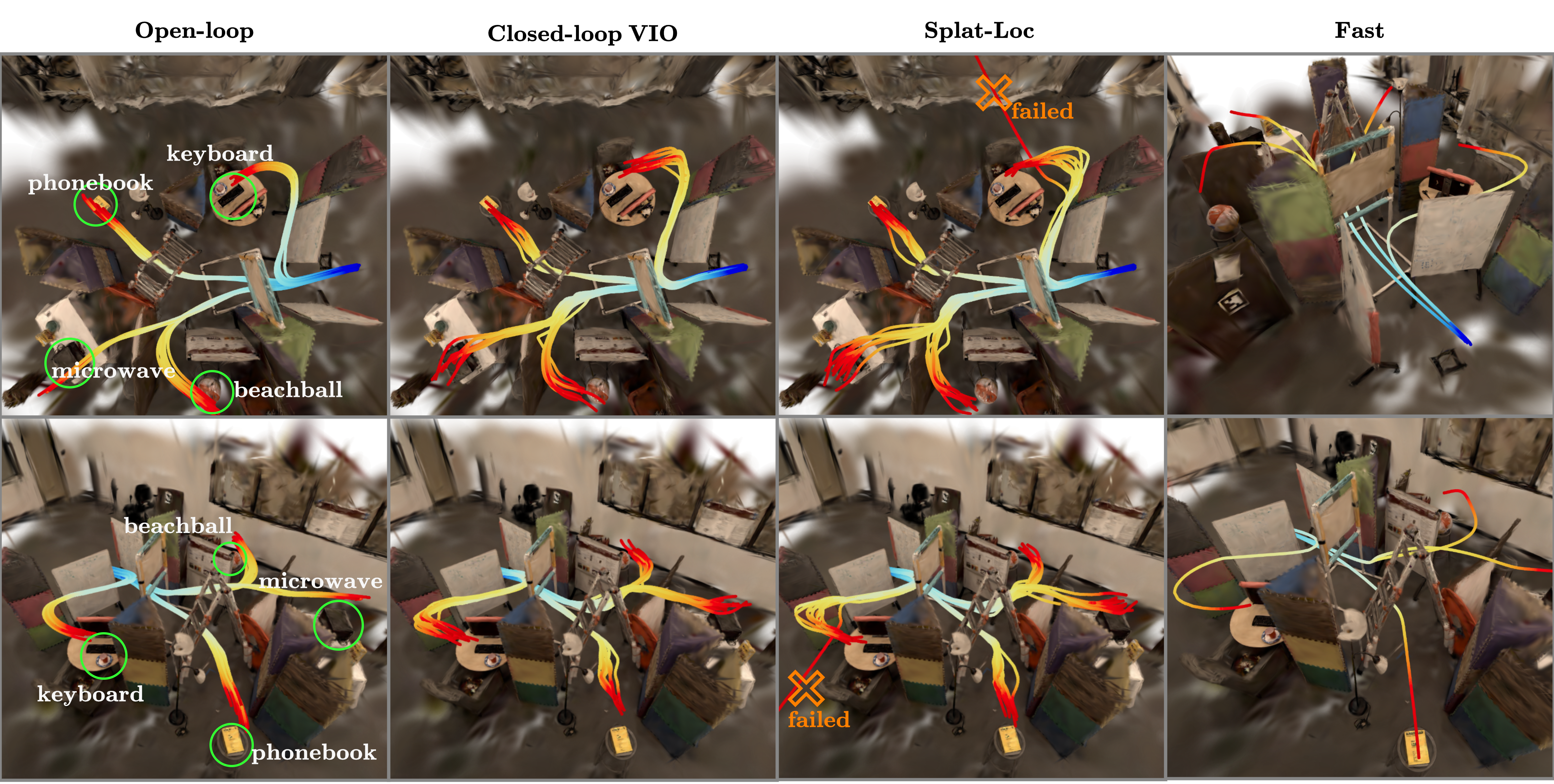}
    \caption{Ground-truth trajectories of the drone navigating, projected onto the \textbf{Maze} GSplat. The drone is subjected to different goal locations and control schemes. The recorded flight trajectories are best viewed on our website \url{https://chengine.github.io/splatnav/}.}
    \label{fig:hardware_viz}
\end{figure*}

Visualizations of 120 trajectories across four goal locations and three control schemes can be found in \cref{fig:hardware_viz}. Note that all flights were collision-free with respect to the true scene except for one flight using Splat-Loc to navigate to the keyboard. The drone was oriented toward the edge of the scene, where features were few and the GSplat quality was poor. The poor quality can be attributed to the lack of training images pointing toward the edges of the scene, as we wanted to reconstruct the foreground in the highest quality. These qualitative results indicate that, within the confines of our controlled setting, all control schemes work equally well. These results are promising for Splat-Loc from a convenience point of view. We noticed that the VIO of the drone would drift in subsequent runs, necessitating the reinitialization of the VIO at the start of every run. In addition, as the VIO is not calibrated to be in the GSplat frame, we manually aligned the frames by zero-ing the VIO of the drone at the same position for all flights and for collection of training data. Meanwhile, Splat-Loc needed no such alignment, and was kept running continuously throughout all experiments without zero-ing (even in control schemes that do not use Splat-Loc). 

\begin{figure}[th]
    \centering
    \includegraphics[width=\columnwidth, trim={0, 0em, 0, 0}, clip]{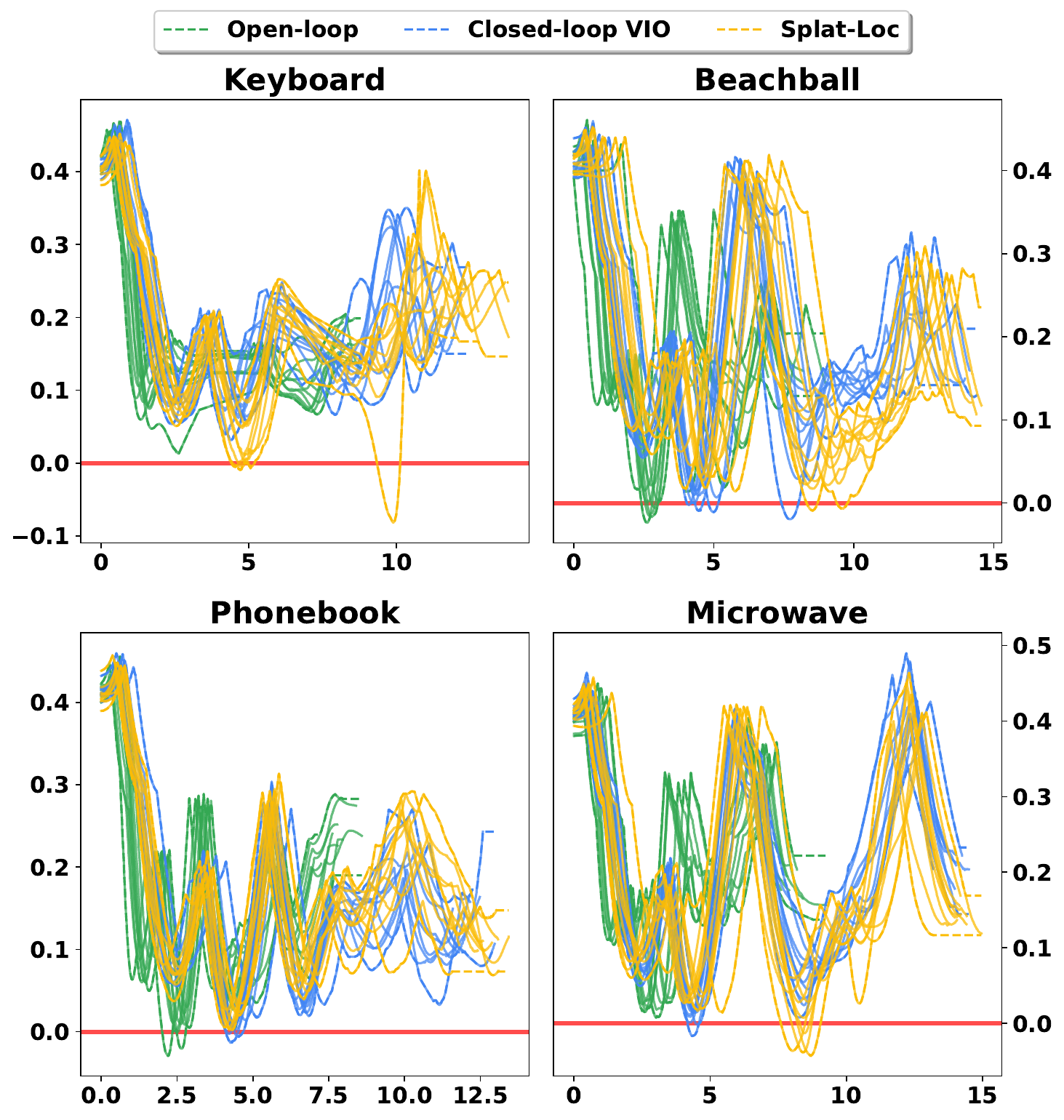}
    \caption{Distances (meters) of goal-conditioned trajectories to the Gaussian Splat as measured by the motion capture poses across three different control schemes. The abscissa represents time in seconds. The spread and the individual trajectories are visualized.}
    \label{fig:hardware_quantitative}
\end{figure}

Qualitatively, we see similar trends in \cref{fig:hardware_quantitative}. All control schemes are unsafe at different times, but in similar amounts. Note that some curves dip below 0, yet are verifiably safe in real-life. This is due to a variety of reasons, the most of prominent of which are: the difference in the set robot radius ($0.25$ cm) versus the true radius ($0.18$ cm), errors in aligning the motion capture frame into the frame of the GSplat (because the GSplat was not trained using motion capture), and the tracking capabilities of the drone. Note that the safety violation of all control schemes is relatively small compared to the size of the drone, which allows error in low-level tracking to obfuscate advantages of one method over another, especially in cluttered environments. 

\subsubsection{Fast Control}
We stress test Splat-Plan by increasing $v_{\rm max}$ until the onboard VIO could no longer track the desired waypoint with enough accuracy to avoid collision, which was $\unit[1.5]{m/s}$. These speeds, coupled with the clutter in the environment, allowed for dynamic flight, which is visualized in the right column of \cref{fig:hardware_viz}. We point readers toward the associated videos hosted on our website (\url{https://chengine.github.io/splatnav/}) to better visualize the trajectories.

\subsubsection{Closed-loop Endurance}
Finally, we stress test the Splat-Loc re-planning pipeline through endurance flights. The pipeline is left to continually execute. Once the drone reaches a goal location, another goal location is set. We demonstrate collision-free flight over the order of minutes, which can again be visualized on our website (\url{https://chengine.github.io/splatnav/}).

\section{Conclusion}
\label{sec:conclusion}
We introduce an efficient navigation pipeline termed \mbox{\emph{Splat-Nav}} for robots operating in GSplat environments. \mbox{Splat-Nav} consists of a guaranteed-safe planning module \emph{Splat-Plan}, which allows for real-time planning ($>$ \unit[2]{Hz}) by leveraging the ellipsoidal representation inherent in GSplats for efficient collision-checking and safe corridor generation, facilitating real-time online replanning. Splat-Plan demonstrates superior performance in terms of conservativeness, safety, success rate and comparable computation times compared to point-cloud and NeRF methods on the same scene. Moreover, our proposed pose estimation module \emph{Splat-Loc} computes high-accuracy pose estimates faster (\unit[25]{Hz}) and more reliably compared to existing pose estimation algorithms for radiance fields, such as NeRFs. We present extensive hardware and simulation results, highlighting the effectiveness of Splat-Nav. 
% In future work, we seek to more deeply explore utilization of Splat-Nav across a diversity of hardware platforms and real-life environments.

\section{Limitations and Future Work}
\label{sec:limitations_future_work}

We only tested Splat-Nav in pre-constructed scenes. 
Existing GSplat SLAM algorithms do not run in real-time \cite{matsuki2023gaussian, yugay2023gaussian}, limiting the application of our method in online mapping.
In future work, we seek to examine the derivation of real-time GSplat mapping methods, integrated with the planning and pose estimation algorithms proposed in this work.
Additionally, the results from Section~\ref{sssec:pose_estimation_mapping_localization} suggest that we can incorporate Splat-Loc as a localization module within online GSplat SLAM algorithms to improve localization accuracy and the resulting map quality.

We assumed that the pre-constructed scenes were correct.
Safety of the planned trajectories depends on the quality of the underlying GSplat map.
As noted in \cref{rem:uncertainty}, we can use different confidence levels of the ellipsoids to account for uncertainty in an object in the GSplat map.
Splat-Plan cannot do anything if an obstacle is completely missing from the scene, which is a fundamental limitation of the GSplat map representation. Likewise, Splat-Plan could fail if the initialization graph-search procedure which utilizes Dijkstra fails to find a path to the goal, which could occur in maps with a coarse resolution.
% This is the one major downside of GSplat relative to NeRFs, as GSplat only models occupied space while a NeRF models the full scene (both occupied and empty space) in the volumetric density.
Future work will examine uncertainty quantification of different regions within a GSplat scene to aid the design of active-planning algorithms that enable a robot to collect additional observations in low-quality regions, such as areas with missing/non-existent geometry, while updating the GSplat scene representation via online mapping.

We only tested Splat-Nav in a static scene.
This could be a limitation in many practical problems.
Using NeRFs and GSplat for dynamic environments remains an open area of research, especially in scenes without prerecorded motion \cite{wu20234d, pumarola2021d, gao2021dynamic}. 
Splat-Plan is fast enough to be extended easily to problems with dynamic scenes so long as the underlying dynamic GSplat representation is available. 

The performance of Splat-Loc depends on the presence of informative features in the scene. 
We can address this in two ways: through planning and by incorporating additional sensor data.
Future work will explore the design of planning algorithms that bias the path towards feature-rich regions, improving localization accuracy during path execution.
Future work will also incorporate IMU data to improve the robustness of the pose estimator, particularly in featureless regions of the scene where the PnP-RANSAC procedure might fail.

Splat-Plan and Splat-Nav require loading the GSplat model onto the GPU, which takes up about \unit[10]{GB} of GPU memory. Many drone platforms do not have the onboard compute resources to load the GSplat model, hindering onboard computation. Future work will seek to reduce the memory-usage demands of GSplat models, e.g., using sparse GSplat models.

% Appendix
\appendices
\label{sec:supp_material}

\let\oldsubsection = \thesubsection
\renewcommand{\thesubsection}{\Alph{subsection}}

\ifbool{further_condense}
{
}
{
\section{Creating an Ellipsoid for a Robot}
\label{app:robot_to_ellipsoid}
One can construct an ellipsoidal representation of a robot (or obstacle in the map) given an input mesh or point cloud. A mesh of $M$ vertices can be converted to a point cloud by selecting its vertices to yield $\mathcal{P}_\mathcal{R} = \{x_i\}_{i=1}^M$. In order to convert these canonical representations to the one we desire, we can solve a simple convex program to generate our minimum-volume bounding ellipsoid $\mathcal{E}_\mathcal{R}$, given by:
\begin{equation}
\label{program:min_ellipsoid}
\begin{split}
    &\minimize{A \in \mbb{S}_{++}, b \in \mbb{R}^{3}} -\log |A|\\
    &\subj \enspace
    ||A x_i + b||_2 \leq 1,  \; \forall x_i \in \mathcal{P}_\mathcal{R} .
\end{split}
\end{equation}
Then, the mean of $\mathcal{E}_\mathcal{R}$ is given by $\mu_\mathcal{R} = -A^{-1} b$ and the covariance $\Sigma_\mathcal{R} = (A^T A)^{-1}$. Note, if the robot body consists of many points, computing the convex hull of these points first, and using the convex hull vertices in (\ref{program:min_ellipsoid}) will drastically lower the computation costs. 
}

\section{Proof of \cref{prop:ellipsoidtest2}}
\label{app:prop_gen_eigen_proof}
This proposition was stated by \cite{alger2020ellipsoidtest} without proof. We provide the proof here. The goal is to diagonalize both $\Sigma_a$ and $\Sigma_b$ in such a way that they share the same eigenbasis. In this way, the matrix inversion amounts to a reciprocal of the eigenvalues. We can find such an eigenbasis through the generalized eigenvalue problem, which solves the following system of equations ${\Sigma_a\phi_i = \lambda_i \Sigma_b \phi_i.}$
    % \begin{equation*}
    %     \Sigma_a\phi_i = \lambda_i \Sigma_b \phi_i.
    % \end{equation*}
    These generalized eigenvalues and eigenvectors satisfy the identity ${\Sigma_a \phi = \Sigma_b \phi \Lambda.}$
    % \begin{equation*}
    %     \Sigma_a \phi = \Sigma_b \phi \Lambda.
    % \end{equation*}
    
    To solve the generalized eigenvalue problem, we utilize the fact that $\Sigma_a$ and $\Sigma_b$ are symmetric, positive-definite, so the Cholesky decomposition of $\Sigma_b = L L^T$,
    where $L$ is lower triangular. This also means that we can represent $\Sigma_a$ by construction as: ${\Sigma_a = L C L^T = L V \Lambda V^T L^T,}$
    % \begin{equation*}
    %     \Sigma_a = L C L^T = L V \Lambda V^T L^T,
    % \end{equation*}
    where $C$ is also symmetric, positive-definite, so its eigen-decomposition is $C = V \Lambda V^T$, where the eigenvectors $V$ are orthonormal $V^T V = I$. To retrieve the original eigenbasis $\phi$, we solve the triangular system $L^T \phi = V$. A consequence of this solution is that: ${V^T V = \phi^T L L^T \phi = \phi^T \Sigma_b \phi = I.}$
    % \begin{equation*}
    % V^T V = \phi^T L L^T \phi = \phi^T \Sigma_b \phi = I.
    % \end{equation*}
    Moreover, this means that $\Sigma_b = \phi^{-T} \phi^{-1}$. We can show that $\Lambda$ and $\phi$ indeed solve the generalized eigenvalue problem through the following substitution:
    \begin{equation*}
        \Sigma_a \phi = LV\Lambda V^T L^T \phi = \underbrace{L L^T}_{\Sigma_b} \phi \Lambda \phi^T \underbrace{L L^T}_{\Sigma_b} \phi = \Sigma_b \phi \Lambda \underbrace{\phi^T \Sigma_b \phi}_{I},
    \end{equation*}
    where $\Lambda = \diag (\lambda_i)$ is the diagonal matrix of eigenvalues.
    
    As a result, we can write the matrix inversion in Theorem 1
    % \ref{thm:ellipsoidtest1} 
    as the following:
    \begin{equation*}
    \begin{split}
        \left[ \frac{1}{1-s}\Sigma_a + \frac{1}{s}\Sigma_b \right] ^{-1} &= 
        \left[\frac{1}{1-s} \Sigma_b \phi \Lambda \phi^T \Sigma_b  + \frac{1}{s} \Sigma_b \right]^{-1} \\ 
        &= \left[\phi^{-T} \left(\frac{1}{1-s} \Lambda + \frac{1}{s} I \right) \phi^{-1} \right]^{-1} \\
        % &= \phi\left[\frac{1}{1-s} \Lambda + \frac{1}{s} I \right]^{-1} \phi^T \\
        &= \phi \; \diag\left(\frac{s(1-s)}{1 + s(\lambda_i - 1)} \right) \; \phi^T.
    \end{split}
    \end{equation*}

\ifbool{further_condense}
{
}
{
\begin{algorithm2e} [th]
    \caption{Minimal Collision Test Set}
    \label{alg:min_testset}

    % \SetKwInOut{Input}{Input}
    % \SetKwInOut{Output}{Output}

    \KwIn{Query point $x$, robot's largest eigenvalue $\kappa$, ellipsoids $\mathcal{G} = \{\mu_i, \Sigma_i\}_{i=1}^N$\;}
    \KwOut{Minimal Collision Set $\mathcal{G}_x \subseteq \mathcal{G}$\;}
    $\lambda_{\rm max}(\mathcal{I}) \triangleq \max_{i \in \mathcal{I}} \lambda_{\rm max}(\Sigma_i)$\;
    KD $\leftarrow$ KDTree($\{\mu_i\}$)\;
    $\mathcal{I}_0 \leftarrow \{1, \ldots, N\}$\;
    $\mathcal{I}_1 \leftarrow$ KD.QueryBallPoint($x$, $\lambda_{\max}(\mathcal{I}_0) + \kappa$)\;
    \While{$\lambda_{\max}(\mathcal{I}_1) < \lambda_{\max}(\mathcal{I}_0)$ or $\mathcal{I}_1 \neq \varnothing$}{
        $\mathcal{I}_0 \leftarrow \mathcal{I}_1$ \;
        $\mathcal{I}_1 \leftarrow$ KD.QueryBallPoint($x$, $\lambda_{\max}(\mathcal{I}_0) + \kappa$)\;
    }
    $\mathcal{G}_x = \{\mu_i, \Sigma_i\}_{i \in \mathcal{I}_1}$\;
\end{algorithm2e}
}

\section{Proof of \cref{prop:supp_hyperplane}}
\label{app:proof_supp_hyperplane}
From \cref{prop:supp_hyperplane}, we know that we must find an $x$ such that $K(s) > 1$ for all ellipsoids in the test set $\mathcal{G}^*$.
We will approximate this safe set as a polytope.
Given a test point $x^*$ and the $j$th ellipsoid in the collision test set $\mathcal{G}^*$, we can use our collision test (Corollary 2) to derive these polytopes. Notice that $\mathcal{E}_a$ represents the position of the robot (i.e., the mean is test point $\mu_a = x^*$). For any value of ${\bar{s} \in (0, 1)}$, the safety test is a quadratic constraint of the test point $x^*$, namely $\Delta_j^T \Sigma_{x^*,j}^{-1} \Delta_j > 1$, where we omit the dependence of $\Sigma$ on $s$ for brevity.

To derive the supporting hyperplane for the safety check, we will first find the point on the ellipsoid and then linearize our test about this point.
Let $f_j(x) = (x-\mu_j)^T \Sigma_{x^*,j}^{-1} (x-\mu_j)$ for any arbitrary $x$. Note that $f_j(x^*) = \Delta^T_j \Sigma_{x^*,j}^{-1} \Delta_j = k_j^2$.
We can immediately see that point $x_0 = \mu_j + \frac{1 + \epsilon}{k_j} \Delta_j$ will be outside of the ellipsoid for any $\epsilon > 0$ since:
\begin{equation}
    \begin{aligned}
        f_j(x_0) &= \frac{(1+\epsilon)^2}{k_j^2} \Delta_j^T \Sigma_{x^*,j}^{-1} \Delta_j = (1 + \epsilon)^2 > 1.
    \end{aligned}
\end{equation}
Note that this point $x_0$ lies on the ray starting from the center of the ellipsoid $\mu_j$ and passing through the center of the robot at $x^*$.
We then linearize the constraint $f_j(x)$ about $x_0$. 
Taking the derivative yields $\frac{df_j}{dx}(x) = 2 (x-\mu_j)^T \Sigma_{x^*,j}^{-1}$.
The linear approximation of the constraint is then:
\begin{equation}
    \begin{aligned}
        f_j(x) &\approx f_j(x_0) + \left[\frac{df_j}{dx}(x_0)\right] (x - x_0) \geq 1.
    \end{aligned}
\end{equation}
Plugging in $x_0 = \mu_j + \frac{1+\epsilon}{k_j}\Delta_j$ and simplifying yields the given expression $\Delta^T_j \Sigma_{x^*,j}^{-1} x \geq {(1+\epsilon)}{k_j} + \Delta^T_j \Sigma_{x^*,j}^{-1} \mu_j$.

We need to also prove that the above constraint is a supporting hyperplane by showing that all feasible points when the constraint is equality is necessarily outside of the ellipsoid parametrized by $\Sigma_{x^*,j}^{-1}$. Recall that $x_0 = \mu_j + \frac{\Delta_j}{k_j}$ is a point both on the surface of the ellipsoid and on the hyperplane. Therefore, all feasible points on the hyperplane can be expressed as $x = x_0 + \delta$, where $\Delta_j^T \Sigma_{x^*,j}^{-1} \delta = 0$. If the hyperplane supports the ellipsoid parametrized by $\Sigma_{x^*,j}^{-1}$, then necessarily $f(x) = (x-\mu_j)^T \Sigma_{x^*,j}^{-1} (x-\mu_j) \geq 1$. For all points on the hyperplane, the ellipsoid constraint evaluates to 
\begin{align*}
% \begin{split}
    f_j(x_0 + \delta) &= \left(\frac{1+\epsilon}{k_j}\Delta_j + \delta\right)^T \Sigma_{x^*,j}^{-1} \left(\frac{1+\epsilon}{k_j}\Delta_j + \delta \right) \\
    &=\underbrace{\frac{(1+\epsilon)^2}{k_j^2}\Delta_j^T \Sigma_{x^*,j}^{-1} \Delta_j }_{ = (1+\epsilon)^2}+ \underbrace{\delta^T \Sigma_{x^*,j}^{-1} \delta}_{\geq 0} \\ 
    & \quad + \underbrace{\frac{2(1+\epsilon)}{k_j}\Delta_j^T \Sigma_{x^*,j}^{-1} \delta}_{= 0} > 1.
% \end{split}
\end{align*}
Hence, the constraint in \cref{prop:supp_hyperplane} is a supporting hyperplane. 

\section{Proof of \cref{corr:supp-hyperplane-line}}
\label{app:proof-supp-hyperplane-line}
Our objective in this section is to show that the hyperplane created by \cref{corr:supp-hyperplane-line} always contains the line segment $\ell(t) = x_0 + t \delta_x$ for $t \in [0, 1]$. When the minimum unconstrained $t^*$ occurs outside $t \in [0, 1]$, then necessarily the constrained minimum occurs at either of the two endpoints due to convexity of $K$ with respect to $t$. Therefore, we consider the case where $t^* \in (0, 1)$ and the case where $t^* \in \{0, 1\}$. Let $x^* = x_0 + t^* \delta_x$.

\subsection{Optimum in the Interior}
Note that $t^*$ occurs when the normal of this ellipsoid $\Sigma_{x^*,b}^{-1} (x_0 + t^* \delta_x - \mu)$ is perpendicular to the direction of motion $\delta_x$. Given that $K(s^*) \geq 1$ (\cref{corr:ellipsoid-test-line}), \cref{app:proof_supp_hyperplane} states that the point $x^*$ is safe and satisfies the hyperplane constraint

\begin{equation}
    (x^* - \mu_j)^T \Sigma_{x^*,b}^{-1}(s^*) (x - \mu_j) \geq k_j^*.
\end{equation}
All points $x$ along the line can be parameterized as $x^* + \tilde{t} \delta_x$ for $\tilde{t} = t - t^*$, which due to orthogonality of $\delta_x$ to the ellipsoid normal, yields

\begin{equation}
\label{eq:proof-supp-hyperplane-line}
\begin{split} 
        &(x^* - \mu_j)^T \Sigma_{x_0,b}^{-1}(s^*) (x^* + \tilde{t} \delta_x - \mu_j) = \\
        &(x^* - \mu_j)^T \Sigma_{x_0,b}^{-1}(s^*) (x^*- \mu_j) + \tilde{t} \underbrace{(x^* - \mu_j)^T \Sigma_{x_0,b}^{-1}(s^*) \delta_x}_0 =\\
        &(x^* - \mu_j)^T \Sigma_{x_0,b}^{-1}(s^*) (x^*- \mu_j) \geq k_j^*.
\end{split}
\end{equation}
Therefore, the entire line segment is contained in the hyperplane and is not in collision. 

\subsection{Optimum on the Boundary}
Boundary optima will not necessarily yield orthogonality of $\delta_x$ with the normal of the ellipsoid $\Sigma_{x^*,j}^{-1}$. Without loss of generality, define $x^* = x_0$, which is the closest point on the line segment to the ellipsoid $\mathcal{E}_j$ since the direction of the line segment can always be flipped to achieve this result. The unconstrained optima will occur at some point $x_0 + \hat{t} \delta_x$, where $\hat{t} < 0$. Therefore, 

\begin{equation}
    \begin{split}
        &\delta_x^T \Sigma_{x^*,b}^{-1} (x_0 + \hat{t} \delta_x - \mu_j) = 0\\
        &\delta_x^T \Sigma_{x^*,b}^{-1} (x_0 - \mu_j) + \hat{t} \underbrace{\delta_x^T \Sigma_{x^*,b}^{-1}\delta_x}_{\geq 0} = 0\\
        &\therefore (x^* - \mu_j)^T \Sigma_{x^*,j}^{-1}\delta_x \geq 0.
    \end{split}
\end{equation}

Following the reasoning of \cref{eq:proof-supp-hyperplane-line}, for all points on the line $x = x^* + t \delta_x \; \forall \; t \in [0,1]$, we have $(x^* - \mu_j)^T \Sigma_{x^*,j}^{-1}(s^*)(x - \mu_j) \geq k^*_j$. Again, in this case, the line segment still remains within the polytope. Hence, we have proven \cref{corr:supp-hyperplane-line}.

\ifbool{enable_concise_mode}
{}
{
\section{Recursive Pose Estimation}

% \textcolor{red}{(To be done) We plan to reorganize this section.}

% \subsection{Statues Scene}
We present additional results demonstrating the performance of Splat-Loc compared to other algorithms.
Table~\ref{tab:pose_estimation_results_statues_smaller_error} shows our performance metrics in the \textbf{Statues} scene with with ${\delta_R = 20^\circ}$ and $\delta_t = \unit[0.1]{m}$. First, we see that only the Colored-ICP algorithm fails to achieve a $100\%$ success rate. Second, our algorithms \mbox{Splat-Loc-Glue} and \mbox{Splat-Loc-SIFT} yield higher-accuracy and more consistent solutions, with mean rotation and translation errors less than \unit[0.1]{deg} and \unit[7]{mm}, respectively, and lower standard deviations. In these tests, \mbox{Splat-Loc-Glue} performs the best on each metric with a computation time of about \unit[35]{ms}.

\ifbool{condensed_paper}
{
}
{
Figure~\ref{fig:pose_estimation_results_statues_smaller_error_single_run} shows a run of the algorithms where ICP and Colored-ICP achieve relatively lower errors. 
% We provide additional results for the \textbf{Statues} and  \textbf{Flightroom} scenes in the Supplementary Material.
}

\begin{table*}[th]
	\centering
	\caption{Comparison of pose estimation algorithms in the \textbf{Statues} scene with $\delta_R = 20^\circ$ and $\delta_t = \unit[0.1]{m}$}
	\label{tab:pose_estimation_results_statues_smaller_error}
	% \begin{adjustbox}{width=\linewidth}
		{\begin{tabular}{l c c c c}
				\toprule
				Algorithm & Rotation Error (deg.) & Translation Error (m) & Computation Time (secs.) & Success Rate ($\%$) \\
				\midrule
				ICP \cite{chen1992object} & $8.42 \mathrm{e}^{1} \pm 5.58 \mathrm{e}^{1}$ & $3.83 \mathrm{e}^{0} \pm 8.51 \mathrm{e}^{0}$ & $8.60 \mathrm{e}^{-2} \pm 4.40 \mathrm{e}^{-2}$ & $100$ \\
				Colored-ICP \cite{park2017colored} & $3.02 \mathrm{e}^{1} \pm 4.58 \mathrm{e}^{1}$ & $3.46 \mathrm{e}^{-1} \pm 5.11 \mathrm{e}^{-1}$ & $6.27 \mathrm{e}^{-2} \pm 2.72 \mathrm{e}^{-2}$ & $80$ \\
				\mbox{Splat-Loc-SIFT} (ours) & $8.32 \mathrm{e}^{-2} \pm 9.37 \mathrm{e}^{-3}$ & $5.72 \mathrm{e}^{-3} \pm 7.62 \mathrm{e}^{-4}$ & $6.56 \mathrm{e}^{-2} \pm 7.16 \mathrm{e}^{-4}$ & $100$ \\
				\mbox{Splat-Loc-Glue} (ours) & $\bm{5.73 \mathrm{e}^{-2} \pm 4.82 \mathrm{e}^{-3}}$ & $\bm{3.13 \mathrm{e}^{-3} \pm 3.24 \mathrm{e}^{-4}}$ & $\bm{3.42 \mathrm{e}^{-2} \pm 2.00 \mathrm{e}^{-3}}$ & $100$ \\
				\bottomrule
		\end{tabular}}
	% \end{adjustbox}
\end{table*}

\ifbool{condensed_paper}
{
}
{
\begin{figure}[th]
	\centering
	\includegraphics[width=\linewidth, trim={0 7ex 0 12ex}, clip]{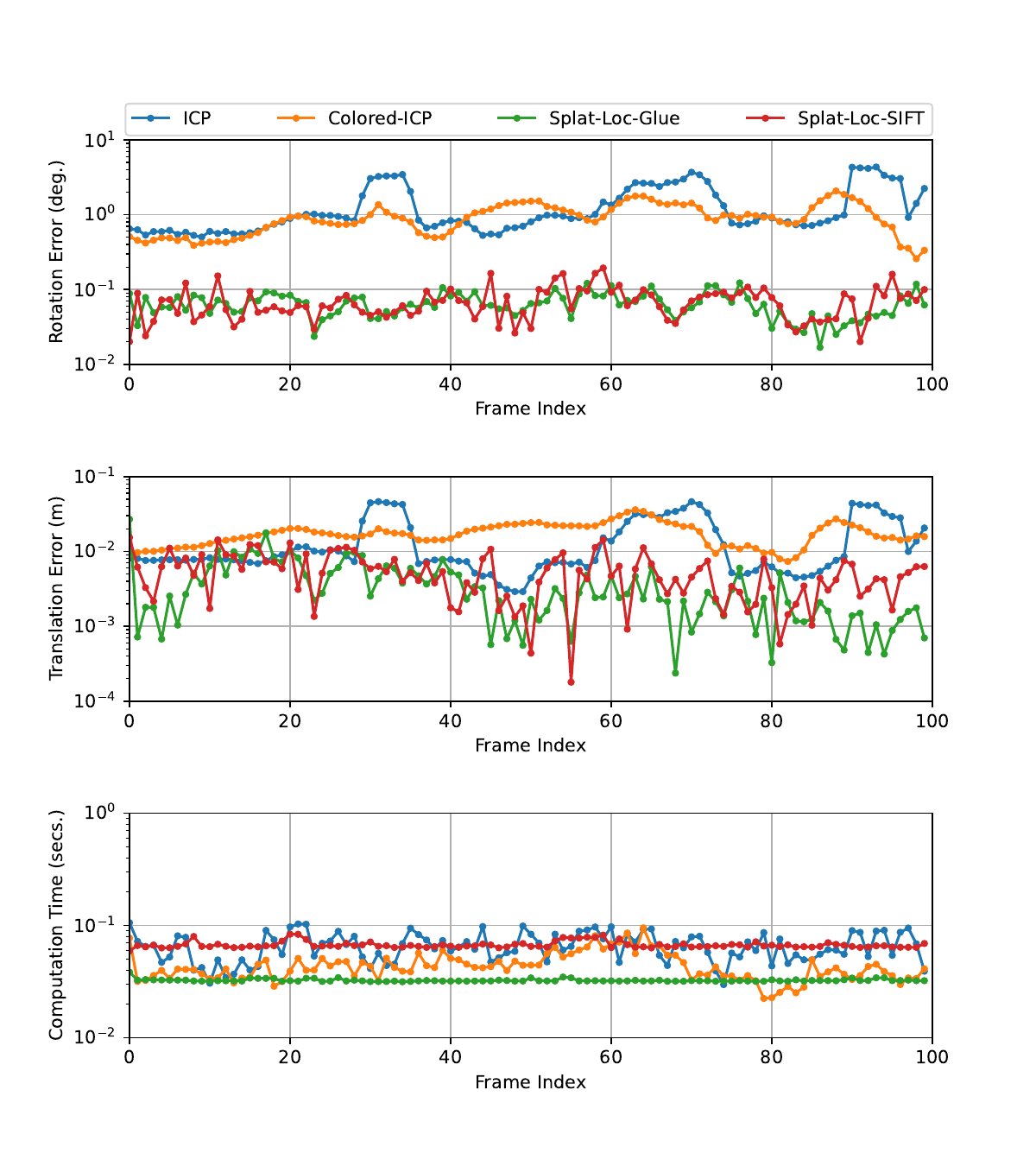}
	\caption{\textbf{Statues} Scene with $\delta_R = 20^\circ$ and $\delta_t = \unit[0.1]{m}$---Pose error and computation time per frame achieved by each pose estimation algorithm \emph{in a specific instance of the problem where ICP and Colored-ICP perform well}.}
	\label{fig:pose_estimation_results_statues_smaller_error_single_run}
\end{figure}
}

In Table~\ref{tab:pose_estimation_results_statues_larger_error}, we present results for the \textbf{Statues} scene with $\delta_R = 30^\circ$ and $\delta_t = \unit[0.5]{m}$. Note that the success rates of Colored-ICP and \mbox{Splat-Loc-SIFT} decrease sharply, suggesting that these methods require a good initial estimate of the pose of the robot. However, when they do succeed, these algorithms achieve low rotation and translation errors, with \mbox{Splat-Loc-SIFT} achieving a much-higher accuracy compared to Colored-ICP. In contrast, the \mbox{Splat-Loc-Glue} algorithm achieves a perfect success rate, along with the lowest rotation and translation errors and computation time. The results suggest the greater robustness of \mbox{Splat-Loc-Glue} compared to \mbox{Splat-Loc-SIFT}. 

\ifbool{condensed_paper}
{

}
{
Figure~\ref{fig:pose_estimation_results_statues_larger_error_single_run} shows the performance of each algorithm in an instance of the problem where all methods are successful.
}

% \begin{table*}[th]
% 	\centering
% 	\caption{Comparison of pose estimation algorithms in the \textbf{Statues} scene with $\delta_R = 30^\circ$ and $\delta_t = \unit[0.5]{m}$}
% 	\label{tab:pose_estimation_results_statues_larger_error}
% 	\begin{adjustbox}{width=\linewidth}
% 		{\begin{tabular}{l c c c c}
% 				\toprule
% 				Algorithm & Rotation Error (deg.) & Translation Error (m) & Computation Time (secs.) & Success Rate ($\%$) \\
% 				\midrule
% 				ICP \cite{chen1992object} & $9.897 \mathrm{e}^{1} \pm 3.051 \mathrm{e}^{1}$ & $1.401 \mathrm{e}^{0} \pm 2.297 \mathrm{e}^{-1}$ & $4.354 \mathrm{e}^{-2} \pm 5.071 \mathrm{e}^{-2}$ & $100$ \\
% 				Colored-ICP \cite{park2017colored} & $2.419 \mathrm{e}^{0} \pm 1.609 \mathrm{e}^{0}$ & $2.183 \mathrm{e}^{-1} \pm 2.016 \mathrm{e}^{-1}$ & $7.029 \mathrm{e}^{-2} \pm 2.435 \mathrm{e}^{-2}$ & $20$ \\
% 				\mbox{Splat-Loc-SIFT} (ours) & $7.844 \mathrm{e}^{-2} \pm 5.953 \mathrm{e}^{-3}$ & $6.275 \mathrm{e}^{-3} \pm 8.400 \mathrm{e}^{-4}$ & $7.312 \mathrm{e}^{-2} \pm 1.498 \mathrm{e}^{-3}$ & $30$ \\
% 				\mbox{Splat-Loc-Glue} (ours) & $\bm{6.139 \mathrm{e}^{-2} \pm 5.994 \mathrm{e}^{-3}}$ & $\bm{3.671 \mathrm{e}^{-3} \pm 6.535 \mathrm{e}^{-4}}$ & $\bm{4.278 \mathrm{e}^{-2} \pm 1.498 \mathrm{e}^{-3}}$ & $100$ \\
% 				\bottomrule
% 		\end{tabular}}
% 	\end{adjustbox}
% \end{table*}

\begin{table*}[th]
	\centering
	\caption{Comparison of pose estimation algorithms in the \textbf{Statues} scene with $\delta_R = 30^\circ$ and $\delta_t = \unit[0.5]{m}$}
	\label{tab:pose_estimation_results_statues_larger_error}
	% \begin{adjustbox}{width=\linewidth}
		{\begin{tabular}{l c c c c}
				\toprule
				Algorithm & Rotation Error (deg.) & Translation Error (m) & Computation Time (secs.) & Success Rate ($\%$) \\
				\midrule
				ICP \cite{chen1992object} & $9.90 \mathrm{e}^{1} \pm 3.06 \mathrm{e}^{1}$ & $1.41 \mathrm{e}^{0} \pm 2.30 \mathrm{e}^{-1}$ & $4.36 \mathrm{e}^{-2} \pm 5.08 \mathrm{e}^{-2}$ & $100$ \\
				Colored-ICP \cite{park2017colored} & $2.42 \mathrm{e}^{0} \pm 1.61 \mathrm{e}^{0}$ & $2.19 \mathrm{e}^{-1} \pm 2.02 \mathrm{e}^{-1}$ & $7.03 \mathrm{e}^{-2} \pm 2.44 \mathrm{e}^{-2}$ & $20$ \\
				\mbox{Splat-Loc-SIFT} (ours) & $7.85 \mathrm{e}^{-2} \pm 5.96 \mathrm{e}^{-3}$ & $6.28 \mathrm{e}^{-3} \pm 8.40 \mathrm{e}^{-4}$ & $7.32 \mathrm{e}^{-2} \pm 1.50 \mathrm{e}^{-3}$ & $30$ \\
				\mbox{Splat-Loc-Glue} (ours) & $\bm{6.14 \mathrm{e}^{-2} \pm 6.00 \mathrm{e}^{-3}}$ & $\bm{3.68 \mathrm{e}^{-3} \pm 6.54 \mathrm{e}^{-4}}$ & $\bm{4.28 \mathrm{e}^{-2} \pm 1.50 \mathrm{e}^{-3}}$ & $100$ \\
				\bottomrule
		\end{tabular}}
	% \end{adjustbox}
\end{table*}

\ifbool{condensed_paper}
{

}
{
\begin{figure}[th]
	\centering
	\includegraphics[width=\linewidth, trim={0 7ex 0 12ex}, clip]{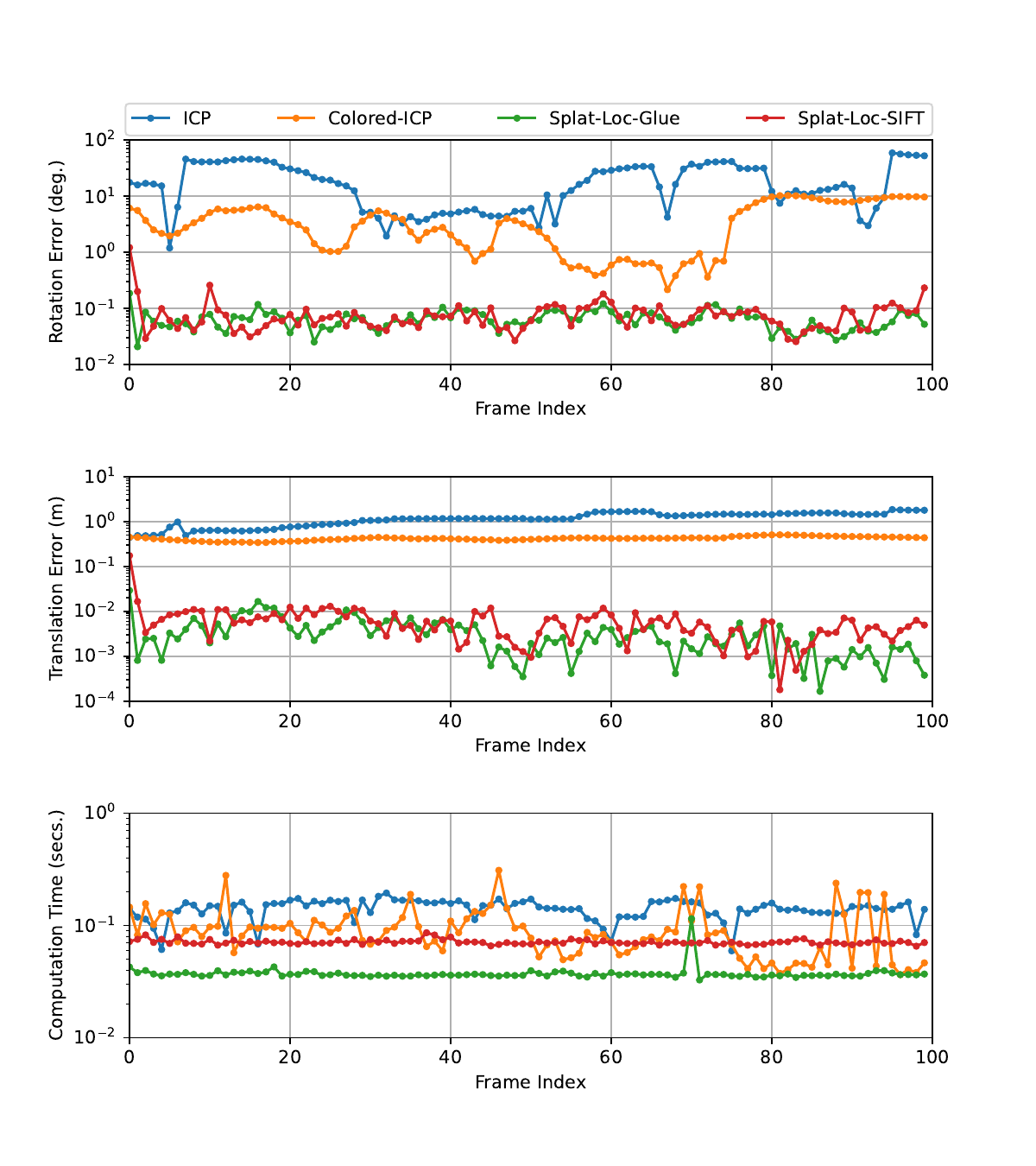}
	\caption{\textbf{Statues} Scene with $\delta_R = 30^\circ$ and $\delta_t = \unit[0.5]{m}$---Pose error and computation time per frame achieved by each pose estimation algorithm in an \emph{instance where Colored-ICP and \mbox{Splat-Loc-SIFT} are successful}.}
	\label{fig:pose_estimation_results_statues_larger_error_single_run}
\end{figure}
}
% \subsection{Statues Scene}

\ifbool{condensed_paper}
{
}
{
\subsection{Flightroom Scene}
Table~\ref{tab:pose_estimation_results_flightroom_larger_error} presents the results in the \textbf{Flightroom} scene with $\delta_R = 50^\circ$ and $\delta_t = \unit[0.2]{m}$. 
% We note that the Gaussian Splatting representation of the Flightroom scene was of much lower quality, which negatively impacted the performance of many of the pose estimation algorithms. As a result, we attempt to mitigate its impact by ensuring that the robot captures informative images during the duration of the problem. 
ICP and Colored-ICP always succeed but yield pose estimates of notably low accuracy, with mean rotation errors of about \unit[120]{deg} and \unit[125]{deg} and translation errors of about \unit[11.59]{m} and \unit[0.73]{m}, respectively. In contrast, \mbox{Splat-Loc-SIFT} fails to produce an estimate in all trials. However, \mbox{Splat-Loc-Glue} achieves a perfect success rate, yields high-accuracy pose estimates with a mean rotation error of about \unit[0.3]{deg} and a mean translation error less than \unit[5]{mm}, and takes an average of \unit[37.74]{ms} per frame.

\ifbool{condensed_paper} {

}
{
Figure~\ref{fig:pose_estimation_results_flightroom_larger_error_single_run} shows the errors in the pose estimates for a single trial. 
}

% \begin{table*}[th]
% 	\centering
% 	\caption{Comparison of pose estimation algorithms in the \textbf{Flightroom} scene with $\delta_R = 50^\circ$ and $\delta_t = \unit[0.2]{m}$}
% 	\label{tab:pose_estimation_results_flightroom_larger_error}
% 	\begin{adjustbox}{width=\linewidth}
% 		{\begin{tabular}{l c c c c}
% 				\toprule
% 				Algorithm & Rotation Error (deg.) & Translation Error (m) & Computation Time (secs.) & Success Rate ($\%$) \\
% 				\midrule
% 				ICP \cite{chen1992object} & $1.212 \mathrm{e}^{2} \pm 2.957 \mathrm{e}^{1}$ & $1.159 \mathrm{e}^{1} \pm 2.601 \mathrm{e}^{1}$ & $1.451 \mathrm{e}^{-1} \pm 1.471 \mathrm{e}^{-1}$ & $100$ \\
% 				Colored-ICP \cite{park2017colored} & $1.253 \mathrm{e}^{2} \pm 3.188 \mathrm{e}^{1}$ & $7.354 \mathrm{e}^{-1} \pm 2.540 \mathrm{e}^{-1}$ & $4.847 \mathrm{e}^{-1} \pm 5.472 \mathrm{e}^{-2}$ & $100$ \\
% 				\mbox{Splat-Loc-SIFT} (ours) & $-$ & $-$ & $-$ & $0$ \\
% 				\mbox{Splat-Loc-Glue} (ours) & $\bm{2.539 \mathrm{e}^{-1} \pm 1.543 \mathrm{e}^{-1}}$ & $\bm{3.810 \mathrm{e}^{-3} \pm 2.618 \mathrm{e}^{-3}}$ & $\bm{3.774 \mathrm{e}^{-2} \pm 8.522 \mathrm{e}^{-4}}$ & $100$ \\
% 				\bottomrule
% 		\end{tabular}}
% 	\end{adjustbox}
% \end{table*}

\begin{table*}[th]
	\centering
	\caption{Comparison of pose estimation algorithms in the \textbf{Flightroom} scene with $\delta_R = 50^\circ$ and $\delta_t = \unit[0.2]{m}$}
	\label{tab:pose_estimation_results_flightroom_larger_error}
	\begin{adjustbox}{width=\linewidth}
		{\begin{tabular}{l c c c c}
				\toprule
				Algorithm & Rotation Error (deg.) & Translation Error (m) & Computation Time (secs.) & Success Rate ($\%$) \\
				\midrule
				ICP \cite{chen1992object} & $1.22 \mathrm{e}^{2} \pm 2.96 \mathrm{e}^{1}$ & $1.16 \mathrm{e}^{1} \pm 2.61 \mathrm{e}^{1}$ & $1.46 \mathrm{e}^{-1} \pm 1.48 \mathrm{e}^{-1}$ & $100$ \\
				Colored-ICP \cite{park2017colored} & $1.26 \mathrm{e}^{2} \pm 3.19 \mathrm{e}^{1}$ & $7.36 \mathrm{e}^{-1} \pm 2.54 \mathrm{e}^{-1}$ & $4.85 \mathrm{e}^{-1} \pm 5.48 \mathrm{e}^{-2}$ & $100$ \\
				\mbox{Splat-Loc-SIFT} (ours) & $-$ & $-$ & $-$ & $0$ \\
				\mbox{Splat-Loc-Glue} (ours) & $\bm{2.54 \mathrm{e}^{-1} \pm 1.55 \mathrm{e}^{-1}}$ & $\bm{3.81 \mathrm{e}^{-3} \pm 2.62 \mathrm{e}^{-3}}$ & $\bm{3.78 \mathrm{e}^{-2} \pm 8.53 \mathrm{e}^{-4}}$ & $100$ \\
				\bottomrule
		\end{tabular}}
	\end{adjustbox}
\end{table*}

\ifbool{condensed_paper}
{
}
{
\begin{figure}[h!]
	\centering
	\includegraphics[width=\linewidth, trim={0 7ex 0 12ex}, clip]{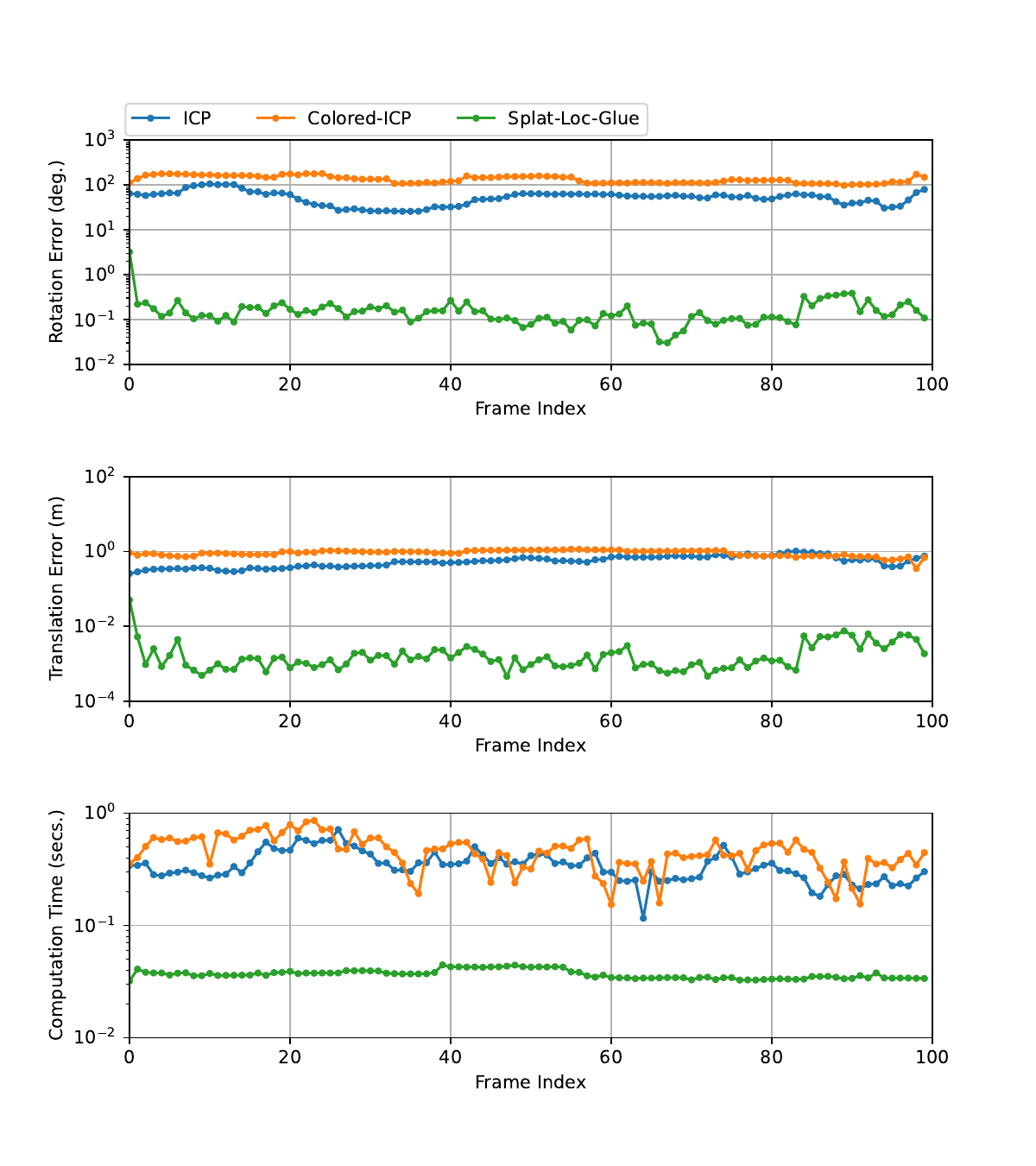}
	\caption{\textbf{Flightroom} Scene with $\delta_R = 50^\circ$ and $\delta_t = \unit[0.2]{m}$---Pose error and computation time per frame achieved by each pose estimation algorithm. \mbox{Splat-Loc-SIFT} was not successful on any trial and is thus omitted in the figure. While ICP and Colored-ICP fail to provide accurate estimates of the robot's pose, \mbox{Splat-Loc-Glue} computes high-accuracy pose estimates.}
    \label{fig:pose_estimation_results_flightroom_larger_error_single_run}
\end{figure}
}
}
% \subsection{Pose Estimation: Synthetic Scene}

% \subsection{Stonehenge Scene}
\ifbool{condensed_paper}
{

}
{
Here, we examine the performance of each pose estimation algorithm in a synthetic scene \textbf{Stonehenge}, beginning with initial rotation and translation errors of $20$ deg. and $0.1$m. Table~\ref{tab:pose_estimation_results_stonehenge_smaller_error} shows the performance of each algorithm across the ten trials. Similar to the results observed with the real-world scene, \mbox{Splat-Loc-SIFT} and \mbox{Splat-Loc-Glue} yield high-accuracy pose estimates with average rotation and translation errors less than $0.5$ deg and $5$mm respectively, unlike the ICP and Colored-ICP which produce low-accuracy estimates. Moreover, ICP and Colored-ICP seem to diverge in some runs as the estimation procedure proceeds over time.
\ifbool{condensed_paper}
{}
{
, depicted in Figure~\ref{fig:pose_estimation_results_stonehenge_smaller_error_single_run}.
}%
In contrast, \mbox{Splat-Loc-Glue} achieves a perfect success rate, in contrast with \mbox{Splat-Loc-SIFT}, which achieves a $90\%$ success rate, in addition to attaining the fastest computation time.

% \begin{table*}[th]
% 	\centering
% 	\caption{Comparison of pose estimation algorithms in the \textbf{Stonehenge} scene with $\delta_R = 20^\circ$ and $\delta_t = \unit[0.1]{m}$.
%  % across $10$ trials.
%         }
% 	\label{tab:pose_estimation_results_stonehenge_smaller_error}
% 	\begin{adjustbox}{width=\linewidth}
% 		{\begin{tabular}{l c c c c}
% 				\toprule
% 				Algorithm & Rotation Error (deg.) & Translation Error (m) & Computation Time (secs.) & Success Rate ($\%$) \\
% 				\midrule
% 				ICP \cite{chen1992object} & $9.702 \mathrm{e}^{1} \pm 3.006 \mathrm{e}^{1}$ & $1.085 \mathrm{e}^{0} \pm 7.958 \mathrm{e}^{-1}$ & $3.604 \mathrm{e}^{-1} \pm 5.036 \mathrm{e}^{-2}$ & $100$ \\
% 				Colored-ICP \cite{park2017colored} & $7.185 \mathrm{e}^{1} \pm 3.224 \mathrm{e}^{1}$ & $5.175 \mathrm{e}^{-1} \pm 2.162 \mathrm{e}^{-1}$ & $3.678 \mathrm{e}^{-1} \pm 6.378 \mathrm{e}^{-2}$ & $100$ \\
% 				\mbox{Splat-Loc-SIFT} (ours) & $1.962 \mathrm{e}^{-1} \pm 2.690 \mathrm{e}^{-2}$ & $3.033 \mathrm{e}^{-3} \pm 4.268 \mathrm{e}^{-4}$ & $1.399 \mathrm{e}^{-1} \pm 5.727 \mathrm{e}^{-3}$ & $90$ \\
% 				\mbox{Splat-Loc-Glue} (ours) & $\bm{2.114 \mathrm{e}^{-1} \pm 2.265 \mathrm{e}^{-1}}$ & $\bm{2.962 \mathrm{e}^{-3} \pm 2.420 \mathrm{e}^{-3}}$ & $\bm{4.554 \mathrm{e}^{-2} \pm 1.119 \mathrm{e}^{-3}}$ & $100$ \\
% 				\bottomrule
% 		\end{tabular}}
% 	\end{adjustbox}
% \end{table*}

\begin{table*}[th]
	\centering
	\caption{Comparison of pose estimation algorithms in the \textbf{Stonehenge} scene with $\delta_R = 20^\circ$ and $\delta_t = \unit[0.1]{m}$.
 % across $10$ trials.
        }
	\label{tab:pose_estimation_results_stonehenge_smaller_error}
	\begin{adjustbox}{width=\linewidth}
		{\begin{tabular}{l c c c c}
				\toprule
				Algorithm & Rotation Error (deg.) & Translation Error (m) & Computation Time (secs.) & Success Rate ($\%$) \\
				\midrule
				ICP \cite{chen1992object} & $9.71 \mathrm{e}^{1} \pm 3.01 \mathrm{e}^{1}$ & $1.09 \mathrm{e}^{0} \pm 7.96 \mathrm{e}^{-1}$ & $3.61 \mathrm{e}^{-1} \pm 5.04 \mathrm{e}^{-2}$ & $100$ \\
				Colored-ICP \cite{park2017colored} & $7.19 \mathrm{e}^{1} \pm 3.23 \mathrm{e}^{1}$ & $5.18 \mathrm{e}^{-1} \pm 2.17 \mathrm{e}^{-1}$ & $3.68 \mathrm{e}^{-1} \pm 6.38 \mathrm{e}^{-2}$ & $100$ \\
				\mbox{Splat-Loc-SIFT} (ours) & $1.97 \mathrm{e}^{-1} \pm 2.69 \mathrm{e}^{-2}$ & $3.04 \mathrm{e}^{-3} \pm 4.27 \mathrm{e}^{-4}$ & $1.40 \mathrm{e}^{-1} \pm 5.73 \mathrm{e}^{-3}$ & $90$ \\
				\mbox{Splat-Loc-Glue} (ours) & $\bm{2.12 \mathrm{e}^{-1} \pm 2.27 \mathrm{e}^{-1}}$ & $\bm{2.97 \mathrm{e}^{-3} \pm 2.42 \mathrm{e}^{-3}}$ & $\bm{4.56 \mathrm{e}^{-2} \pm 1.12 \mathrm{e}^{-3}}$ & $100$ \\
				\bottomrule
		\end{tabular}}
	\end{adjustbox}
\end{table*}

\ifbool{condensed_paper}
{
}
{
\begin{figure}[th]
	\centering
	\includegraphics[width=\linewidth, trim={0 7ex 0 12ex}, clip]{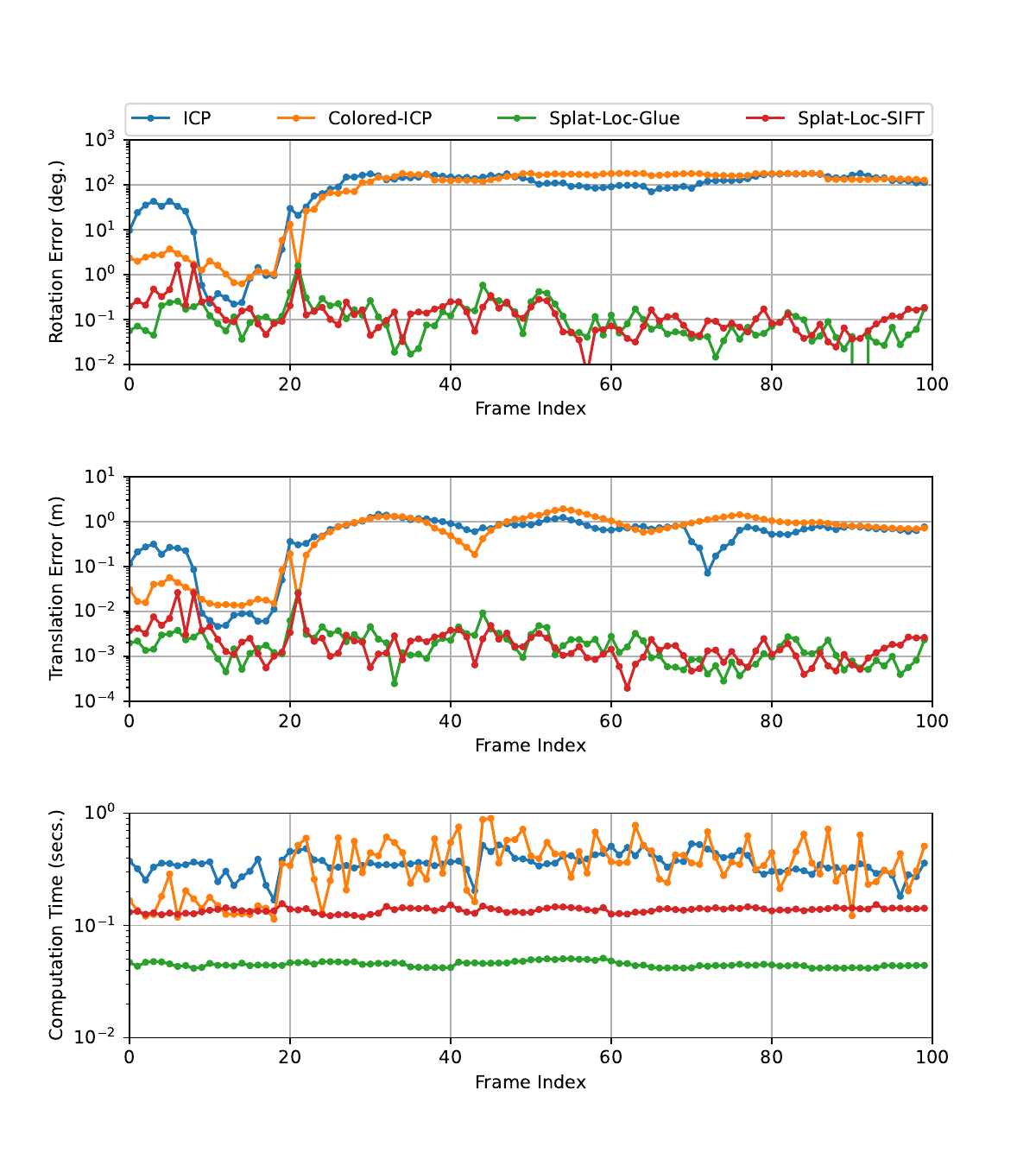}
	\caption{\textbf{Stonehenge} Scene with $\delta_R = 20^\circ$ and $\delta_t = \unit[0.1]{m}$---While ICP and Colored-ICP diverge over time, \mbox{Splat-Loc-SIFT} and \mbox{Splat-Loc-Glue} do not diverge, producing high-accuracy estimates.}
	\label{fig:pose_estimation_results_stonehenge_smaller_error_single_run}
\end{figure}
}
}

\ifbool{condensed_paper}
{
In addition, we examine the performance of the pose estimation algorithms in problems with a larger error in the initial estimate of the pose, with $\delta_R = 30^\circ$ and $\delta_t = \unit[0.5]{m}$ in the synthetic \textbf{Stonehenge} scene. We present the performance of each algorithm on each metric in Table~\ref{tab:pose_estimation_results_stonehenge_larger_error}, where we note that ICP and Colored-ICP do not provide accurate estimate of the robot's pose. Moreover, the pose estimation errors achieved by ICP and Colored-ICP have a significant variance. In contrast, \mbox{Splat-Loc-SIFT} and \mbox{Splat-Loc-Glue} yield pose estimates of high accuracy with average rotation and translation errors less than $0.5$ deg. and $5$mm, respectively, similar to the setting with low errors in the initial pose estimate. However, \mbox{Splat-Loc-SIFT} achieves a lower success rate, compared to \mbox{Splat-Loc-Glue}, which achieves a perfect success rate. 

\ifbool{condensed_paper}
{
}
{
In Figure~\ref{fig:pose_estimation_results_stonehenge_larger_error_single_run}, we show the estimation errors for a single run, highlighting the high variance in the errors associated with the pose estimates computed Colored-ICP. ICP fails to yield a pose estimate of relatively high accuracy for the duration of the problem.
}

\begin{table*}[th]
	\centering
	\caption{Comparison of pose estimation algorithms in the \textbf{Stonehenge} scene with $\delta_R = 30^\circ$ and $\delta_t = \unit[0.5]{m}$.
 % across $10$ trials.
        }
	\label{tab:pose_estimation_results_stonehenge_larger_error}
	% \begin{adjustbox}{width=\linewidth}
		{\begin{tabular}{l c c c c}
				\toprule
				Algorithm & Rotation Error (deg.) & Translation Error (m) & Computation Time (secs.) & Success Rate ($\%$) \\
				\midrule
				ICP \cite{chen1992object} & $1.31 \mathrm{e}^{2} \pm 2.26 \mathrm{e}^{1}$ & $3.70 \mathrm{e}^{0} \pm 5.54 \mathrm{e}^{0}$ & $1.22 \mathrm{e}^{-1} \pm 1.53 \mathrm{e}^{-1}$ & $100$ \\
				Colored-ICP \cite{park2017colored} & $9.49 \mathrm{e}^{1} \pm 5.13 \mathrm{e}^{1}$ & $5.74 \mathrm{e}^{-1} \pm 2.88 \mathrm{e}^{-1}$ & $4.88 \mathrm{e}^{-1} \pm 1.04 \mathrm{e}^{-1}$ & $20$ \\
				\mbox{Splat-Loc-SIFT} (ours) & $\bm{2.17 \mathrm{e}^{-1} \pm 3.69 \mathrm{e}^{-2}}$ & $3.34 \mathrm{e}^{-3} \pm 5.63 \mathrm{e}^{-4}$ & $1.39 \mathrm{e}^{-1} \pm 3.15 \mathrm{e}^{-3}$ & $70$ \\
				\mbox{Splat-Loc-Glue} (ours) & $2.20 \mathrm{e}^{-1} \pm 2.03 \mathrm{e}^{-1}$ & $\bm{3.15 \mathrm{e}^{-3} \pm 2.10 \mathrm{e}^{-3}}$ & $\bm{4.51 \mathrm{e}^{-2} \pm 6.11 \mathrm{e}^{-4}}$ & $100$ \\
				\bottomrule
		\end{tabular}}
	% \end{adjustbox}
\end{table*}

% \begin{table*}[th]
% 	\centering
% 	\caption{Comparison of pose estimation algorithms in the \textbf{Stonehenge} scene with $\delta_R = 30^\circ$ and $\delta_t = \unit[0.5]{m}$.
%  % across $10$ trials.
%         }
% 	\label{tab:pose_estimation_results_stonehenge_larger_error}
% 	\begin{adjustbox}{width=\linewidth}
% 		{\begin{tabular}{l c c c c}
% 				\toprule
% 				Algorithm & Rotation Error (deg.) & Translation Error (m) & Computation Time (secs.) & Success Rate ($\%$) \\
% 				\midrule
% 				ICP \cite{chen1992object} & $1.303 \mathrm{e}^{2} \pm 2.254 \mathrm{e}^{1}$ & $3.697 \mathrm{e}^{0} \pm 5.533 \mathrm{e}^{0}$ & $1.217 \mathrm{e}^{-1} \pm 1.527 \mathrm{e}^{-1}$ & $100$ \\
% 				Colored-ICP \cite{park2017colored} & $9.489 \mathrm{e}^{1} \pm 5.128 \mathrm{e}^{1}$ & $5.737 \mathrm{e}^{-1} \pm 2.880 \mathrm{e}^{-1}$ & $4.879 \mathrm{e}^{-1} \pm 1.038 \mathrm{e}^{-1}$ & $20$ \\
% 				\mbox{Splat-Loc-SIFT} (ours) & $\bm{2.164 \mathrm{e}^{-1} \pm 3.682 \mathrm{e}^{-2}}$ & $3.335 \mathrm{e}^{-3} \pm 5.628 \mathrm{e}^{-4}$ & $1.384 \mathrm{e}^{-1} \pm 3.145 \mathrm{e}^{-3}$ & $70$ \\
% 				\mbox{Splat-Loc-Glue} (ours) & $2.193 \mathrm{e}^{-1} \pm 2.024 \mathrm{e}^{-1}$ & $\bm{3.141 \mathrm{e}^{-3} \pm 2.099 \mathrm{e}^{-3}}$ & $\bm{4.507 \mathrm{e}^{-2} \pm 6.105 \mathrm{e}^{-4}}$ & $100$ \\
% 				\bottomrule
% 		\end{tabular}}
% 	\end{adjustbox}
% \end{table*}

\ifbool{condensed_paper}
{
}
{
\begin{figure}[t!]
	\centering
	\includegraphics[width=\linewidth, trim={0 7ex 0 12ex}, clip]{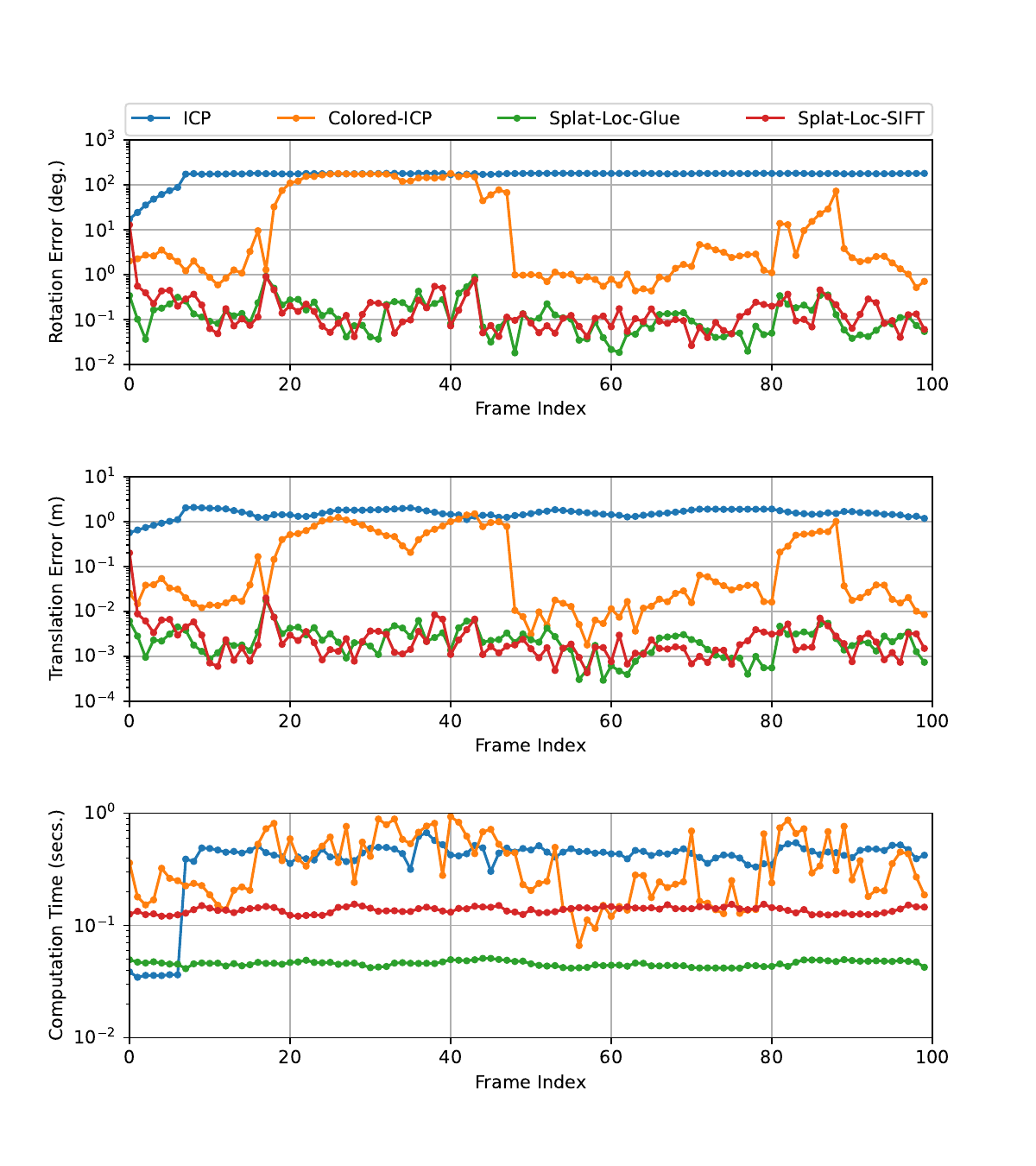}
	\caption{\textbf{Stonehenge} Scene with $\delta_R = 30^\circ$ and $\delta_t = \unit[0.5]{m}$---\mbox{Splat-Loc-SIFT} and \mbox{Splat-Loc-Glue} produce pose estimates of the highest accuracy, compared to ICP and Colored-ICP. Further, \mbox{Splat-Loc-Glue} requires the least computation time per frame.}
	\label{fig:pose_estimation_results_stonehenge_larger_error_single_run}
\end{figure}
}
}
{}
}

\ifbool{enable_concise_mode}
{
\section{Global Pose Initialization of Splat-Loc}
We evaluate the global point-cloud alignment initialization procedure from Section~\ref{sec:global_initialization} when utilized by \mbox{Splat-Loc-Glue}, in the real-world scene Statues and synthetic scene Stonehenge, with the results in Table~\ref{tab:pose_estimation_results_global_registration}. We see that the success rate is 80\% in both scenes, and it runs at approximately \unit[20-30]{Hz}. When it does succeed, the pose errors are on the order of \unit[0.3]{deg} and \unit[1]{cm}. Although these estimates are relatively good, we note that these are less accurate than the recursive pose estimates discussed above.
%Although the initialization procedure does not yield high-accuracy pose estimates, the rough estimates resulting from the procedure is often adequate in Splat-Loc, as noted in Table~\ref{tab:pose_estimation_results_global_registration}, where Splat-Loc achieves a mean rotation error less than $0.4$ deg. and a mean translation error less than $15$cm with a computation time less than $50$ milliseconds in both scenes.

\begin{table}[h!]
	\centering
	\caption{Performance of the pose initialization module in Splat-Loc.}
	\label{tab:pose_estimation_results_global_registration}
	\begin{adjustbox}{width=\linewidth}
		{\begin{tabular}{l c c c c}
				\toprule
				Scene & R.E. (deg.) & T.E. (cm) & C.T. (msec.) & S.R. ($\%$) \\
				\midrule
				Statues  & $0.37 \pm 0.72$ & $1.34 \pm 2.33$ & $33.9 \pm 0.57$ & $80$ \\
				Stonehenge & $0.21 \pm 0.21$ & $0.30 \pm 0.23$ & $45.8 \pm 1.70$ & $80$ \\
				\bottomrule
		\end{tabular}}
	\end{adjustbox}
\end{table}

}
{}
% \begin{table*}[th]
% 	\centering
% 	\caption{Performance of the pose initialization module in Splat-Loc}
% 	\label{tab:pose_estimation_results_global_registration}
% 	% \begin{adjustbox}{width=\linewidth}
% 		{\begin{tabular}{l c c c c}
% 				\toprule
% 				Scene & Rotation Error (deg.) & Translation Error (m) & Computation Time (secs.) & Success Rate ($\%$) \\
% 				\midrule
% 				Statues  & $3.683 \mathrm{e}^{-1} \pm 7.207 \mathrm{e}^{-1}$ & $1.332 \mathrm{e}^{-2} \pm 2.329 \mathrm{e}^{-2}$ & $3.387 \mathrm{e}^{-2} \pm 5.696 \mathrm{e}^{-4}$ & $80$ \\
% 				Stonehenge & $2.138 \mathrm{e}^{-1} \pm 2.068 \mathrm{e}^{-1}$ & $3.026 \mathrm{e}^{-3} \pm 2.274 \mathrm{e}^{-3}$ & $4.572 \mathrm{e}^{-2} \pm 1.697 \mathrm{e}^{-3}$ & $80$ \\
% 				\bottomrule
% 		\end{tabular}}
% 	% \end{adjustbox}
% \end{table*}

\ifbool{condensed_paper}
{
}
{
\begin{figure}[th]
    \centering
    \includegraphics[width=0.49\textwidth]{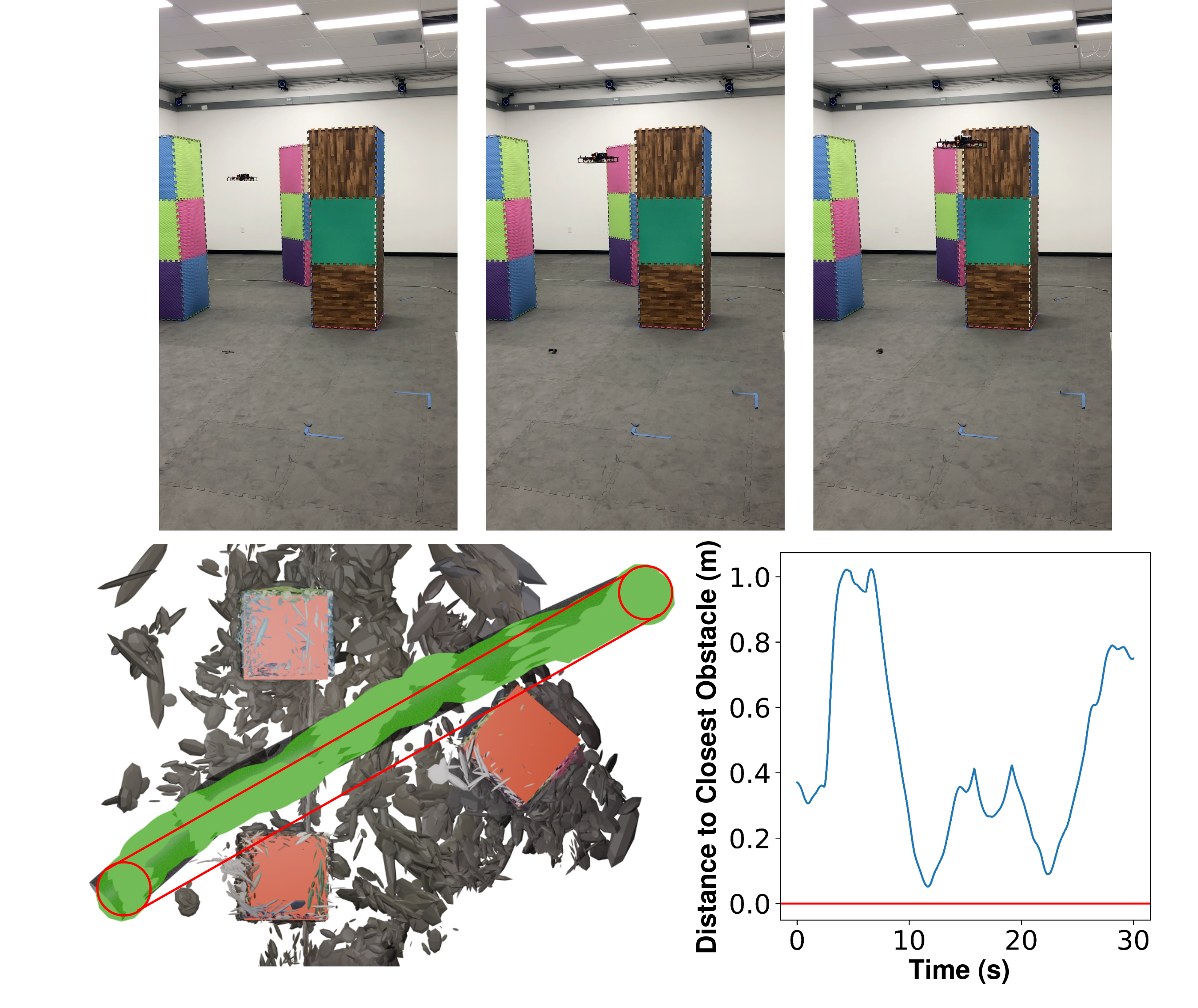}
    \caption{(top) Deployment of our safe planner, Splat-Plan, on a drone. (bottom-left) Visualization of the executed trajectory (green) and the commanded trajectory (red outline). A straight line path from start to goal is not safe. (bottom-right) Distance of the drone to the closest obstacle, taking into account the extent of the drone. Note that the executed trajectory lies above the red line and is therefore safe.}
    \label{fig:hardware}
\end{figure}
}

\ifbool{condensed_paper}
{
}
{
\section{Robustness to Slight Changes in the Scene}
\label{app:robustness_to_changes}
Given the RANSAC procedure used in computing matching features, our pose estimator provides some level of robustness to outliers in the feature matching process as well as robustness to changes in the scene, provided that these changes do not overly influence the computation of the feature correspondences. In our hardware experiments, the Mini-office scene changed slightly (unintentionally) from the time a video of the scene was recorded for training the GSplat map and the time of the experiment, which we show in \cref{fig:scene_changes}. These changes in the map did not degrade the performance of our pose estimator.

\begin{figure}[th]
    \centering
    \includegraphics[width=0.49\columnwidth]{figures/results/scene_changes/pre_experiment_scene_changes.pdf}
    \includegraphics[width=0.49\columnwidth]{figures/results/scene_changes/experiment_scene_changes.pdf}
    \caption{(left) The scene at the time the video was recorded for training the Gaussian Splats. (right) The scene at the time of the experiments was slightly different from the training scene due to unintentional changes.}
    \label{fig:scene_changes}
\end{figure}
}

\let\thesubsection = \oldsubsection

\ifbool{enable_concise_mode}
{}
{
\section{Pose Estimation}
\label{sec:estimation}
% We present our pose estimation module, Splat-Loc, for localizing a robot in a Gaussian Splatting representation of its environment. We assume that the robot has an onboard RGB camera with known camera intrinsics. First, we consider the case when an initial guess of the robot is available, before presenting approaches for dealing with cases where no prior information is available.

{
\color{black}
In this section, we present our pose estimation module, Splat-Loc, for localizing a robot in a Gaussian Splatting representation of its environment. First, we consider the case when an initial guess of the robot's pose is available, before presenting approaches for dealing with cases where no prior information on its pose is available. 

\subsection{Problem Formulation}
We perform pose estimation on the \SE${(3)}$ manifold, which represents rigid transformations in $3$D space. A pose in \SE${(3)}$ is parameterized by a rotation matrix ${R \in \SO(3)}$ and a translation vector ${\rho \in \mbb{R}^{3}}$.
Our approach leverages the lie algebra $\mathfrak{se}(3)$ associated with the \SE${(3)}$ matrix lie group.
$\mathfrak{se}(3)$ consists of a vectorspace of matrices in ${\mbb{R}^{4 \times 4}}$ spanned by a basis of six generators \cite{eade2013lie}. We parameterize each element of $\mathfrak{se}(3)$ by ${\xi \in \mbb{R}^{6}}$, where
${\xi = [r^{T}, p^{T}]^T}$, ${p, r \in \mbb{R}^{3}}$, with $p$ representing the translational component and $r$ the rotational component.
\ifbool{condensed_paper}
{
}
{
To simplify the subsequent discussion, we consider ${\xi \in \mathfrak{se}(3)}$, implying its application to the generators. 
We can relate the elements $\xi$ of $\mathfrak{se}(3)$ to the elements $C$ of \SE${(3)}$ through the exponential map:
\begin{equation}
    \label{eq:exponential_map}
    C = \exp(\xi) = \exp\left(
        \begin{array}{c | c}
            r_{\times} & p \\
            \hline
            \mb{0}_{1\times3} & 0
        \end{array}
    \right),
\end{equation}
where $\exp$ denotes the matrix exponential and ${r_{\times} \in \mbb{R}^{3 \times 3}}$ denotes the skew-symmetric matrix defined by $r$.
}
We assume that the state of the robot is given by ${x \in \mbb{R}^{12}}$, comprising of the lie algebra $\xi$ and the linear and angular velocities, ${\nu \in \mbb{R}^{3}}$ and ${\omega \in \mbb{R}^{3}}$, respectively, with ${x = [\xi^{T}, \nu^{T}, \omega^{T}]^{T}}$.
We assume that the discrete-time dynamic model ${f: \mbb{R}^{n} \mapsto \mbb{R}^{n}}$ of the robot is of the form: ${x_{t} = f(x_{t - 1}, u_{t - 1}, w_{t - 1}),}$
% \begin{equation}
%     \label{eq:dynamics_model}
%     x_{t} = f(x_{t - 1}, u_{t - 1}, w_{t - 1}),
% \end{equation}
where ${w_{t} \in \mbb{R}^{n}}$ denotes Gaussian white noise at time $t$ with covariance ${Q_{t} \in \mbb{R}^{n \times n}}$ and ${u_{t} \in \mbb{R}^{m}}$ denotes the control inputs of the robot. 

The measurement model ${h: \mbb{R}^{n} \times \mbb{R}^{k} \mapsto \mbb{R}^{m}}$ of the robot can be expressed in the form: ${y_{t} = h(x_{t}, v_{t}),}$
% \begin{equation}
%     \label{eq:measurement_model}
%     y_{t} = h(x_{t}, v_{t}),
% \end{equation}
where ${y_{t} \in \mbb{R}^{m}}$ denotes the measurement and
%which could consists measurements of the robot's pose, linear velocities, and body rates; 
${v_{t} \in \mbb{R}^{n}}$ denotes Gaussian white noise at time $t$ with covariance ${R_{t} \in \mbb{R}^{n \times n}}$. 
In general, the measurements $y_{t}$ obtained by the robot can include the robot's pose, linear velocities, and its body rates.
In this work, we consider an RGB (or RGB-D) camera-based measurement model that uses keypoints from an image ${I_{t} \in \mbb{R}^{H \times W \times 3}}$ to measure elements from the robot's state.
Let ${\psi: \mbb{R}^{H \times W \times 3} \rightarrow \mbb{R}^{k}}$ denote the function that goes from a raw camera image to a set of keypoints with descriptors $\{k_{t,i}\}_i = \psi(I_{t})$.
Based on the state of the robot $x_t$, we can also render an image $\hat{I}_{t}(x_t)$ from the GSplat map and use $\psi$ to obtain another set of keypoints $\{\hat{k}_{t,j}\}_j = \psi(\hat{I}_{t}(x_t))$.
Note, each keypoint $\hat{k}_{t}$ has an associated 3D position from the GSplat map.
The measurement function $h$ can then use matches between the sets of keypoints $\{k_{t,i}\}_i$ and $\{\hat{k}_{t,j}\}_j$ to, for example, estimate the pose by solving the Perspective-$n$-Point (PnP) problem or estimate the velocities using optical flow (see \cref{sec:observation_model} for more detail).
We note that our discussion extends directly to other measurement models.

% In this work, we only consider an RGB camera-based measurement model, with ${y_{t} = \psi(I_{t})}$, where ${I_{t} \in \mbb{R}^{H \times W \times 3}}$ denotes the RGB image captured by the robot and ${\psi: \mbb{R}^{H \times W \times 3} \rightarrow \mbb{R}^{m}}$ defines the function mapping from the raw camera images to the measurement. 
% In this work, we define $h$ as the perspective projection model $\Pi$, which projects $3D$ points within the view frustum in the environment of the robot, represented through Gaussian Splatting, to the image plane, given a state $x_{t}$.
% Likewise, via $\psi$, we identify keypoints in the image $I_{t}$, along with their descriptors, which we utilize in computing matching features in the images rendered from GSplat. 
% We note that our discussion extends directly to other measurement models.

\subsection{MAP Pose Estimate}
We formulate a constrained \emph{maximum a-posteriori} (MAP) optimization problem to compute an estimate of the robot's pose that satisfies the safety constraints (defined by the polytope) at each timestep. 
We assume we have a prior estimate of the pose at time $t-1$ represented by a Gaussian distribution ${\mcal{N}(\mu_{t - 1 | t - 1}, P_{t - 1 | t - 1})}$ and know the current safe polytope containing the robot, which is parameterized by ${(A_{t}, b_{t})}$, and the polytope at the previous timestep ${(A_{t -1}, b_{t - 1})}$.
Given the dynamics and measurement models,
% in \eqref{eq:dynamics_model} and \eqref{eq:measurement_model},
we can compute the new estimate of the robot's pose that maximizes the posterior distribution of the robot's pose (given the set of measurements) from:
\begin{subequations} \label{eq:map_problem}
    \begin{align}
        \min_{x_{t}, x_{t - 1}} &\norm{x_{t} - f(x_{t - 1}, u_{t - 1})}_{Q_{t - 1}^{-1}}^{2} + \norm{\psi(I_{t}) - h(x_{t})}_{R_{t}^{-1}}^{2} \nonumber \\
        & + \norm{x_{t - 1} - \mu_{t - 1 | t - 1}}_{P_{t - 1 | t - 1}^{-1}}^{2}, \label{eq:map_obj} \\
        \subj \nonumber \\
              % &A_{t} x_{t} \leq b_{t}, \\
              % &A_{t - 1} x_{t - 1} \leq b_{t - 1}, \\
              \text{Prior} & \text{ Knowledge at time $t$:}& \nonumber \\
              &\norm{A_{t, (i)}^{T}}_{\bar{P}_{t | t - 1}}^{2} + A_{t, (i)}x_{t} \leq b_{t, (i)}, \ \forall i, \\
              \text{Prior} & \text{ Knowledge at time $t - 1$:}& \nonumber\\
              &\norm{A_{t - 1, (i)}^{T}}_{\bar{P}_{t - 1 | t - 1}}^{2} + A_{t - 1, (i)}x_{t - 1} \leq b_{t - 1, (i)}, \ \forall i,
    \end{align}
\end{subequations}
where $A_{t, (i)}$ denotes the $i$th row of $A_{t}$. In the last-two constraints in \eqref{eq:map_problem}, we encode that the $\gamma\%$-confidence ellipsoid of the estimated state at time $t$ lies within the associated safe polytope, where: ${\bar{P}_{t | t - 1} = \chi_{3}^{2}(\gamma) \left(F_{t} P_{t | t - 1} F_{t}^{T} + Q_{t - 1} \right)}$
% \begin{equation}
%     \bar{P}_{t | t - 1} = \chi_{3}^{2}(\gamma) \left(F_{t} P_{t | t - 1} F_{t}^{T} + Q_{t - 1} \right)
% \end{equation}
represents the predicted covariance at time $t$, analogous to the Extended Kalman Filter (EKF) Predict Procedure, with $F_{t}$ representing the Jacobian of the dynamics model at time $t$,  and ${\bar{P}_{t - 1 | t - 1} = \chi_{3}^{2}(\gamma) P_{t - 1 | t - 1}}$. We utilize $\bar{P}_{t | t - 1}$ in place of $\bar{P}_{t | t}$ since $\bar{P}_{t | t}$ is not known a-priori.
The MAP problem in \eqref{eq:map_problem} yields the solution: ${\mb{x}_{t}^{\star} = [\mu_{t | t}^{T}, \mu_{t - 1 | t}^{T}]^{T},}$
% \begin{equation}
%     \mb{x}_{t}^{\star} = \begin{bmatrix}
%                         \mu_{t | t} \\
%                         \mu_{t - 1 | t}
%                       \end{bmatrix},
% \end{equation}
where the estimated pose of the robot at time $t$ is given by the posterior distribution ${\mcal{N}(\mu_{t | t }, P_{t | t})}$, with $P_{t | t}$ computed from the inverse Hessian of \eqref{eq:map_obj}.

\begin{remark}
    We note that the minimizer of \eqref{eq:map_problem} represents an estimate of the mode of the posterior distribution. Here, we approximate the posterior distribution over $(x_{t}, x_{t - 1})$ by a Normal distribution, whose mean corresponds to the solution of \eqref{eq:map_problem} and whose covariance is given by the inverse of the Hessian of \eqref{eq:map_obj} evaluated at the optimal solution. Notably, this approximation is exact when the dynamics and measurement models are linear-Gaussian in the unconstrained case. Moreover, given the posterior distribution, we can transform the estimated pose of the robot from $\mathfrak{se}(3)$ to \SE${(3)}$ via the exponential map. %\eqref{eq:exponential_map}.
\end{remark}

\begin{remark}[Smoothing of Prior Pose Estimates]
    From \eqref{eq:map_problem}, we obtain a smoothed estimate of the robot's pose at time ${t - 1}$, given by ${\mcal{N}(\mu_{t - 1 | t }, P_{t - 1 | t})}$. By incorporating measurements up to time $t$, the smoothed estimate improves upon the estimate of the robot's pose at the preceding timestep.
\end{remark}

% In this work, we define $h$ as the perspective projection model $\Pi$, which projects $3D$ points within the view frustum in the environment of the robot, represented through Gaussian Splatting, to the image plane, given a state $x_{t}$. 
% Likewise, via $\psi$, we identify keypoints of $I_{t}$, along with their descriptors, which we utilize in computing 
% matching features in the images rendered from GSplat. 

Given a set of correspondences between keypoints in successive images, we can interpret the second term in the objective function in \eqref{eq:map_problem} as the reprojection error, resulting in a nonlinear least-squares problem, which could be challenging to solve in real-time. In the subsequent discussion, we present a faster pose estimator to reduce the computation time for solving \eqref{eq:map_problem}, at the expense of exactness.

% \begin{remark}[Image Keypoints]
%     Given a set of correspondences between keypoints in successive images, we can interpret the second term in the objective function in \eqref{eq:map_problem} as the reprojection error.
%     % \cite{hartley2003multiple}.
%     Given that the set of correspondences often contains outliers, we utilize Random Sample Consensus (RANSAC) \cite{fischler1981random} to increase the robustness of the pose estimator. We note that many optimization solvers exist for the nonlinear least-squares problem in \eqref{eq:map_problem}. However, solving the associated problem may require significant computation time, which may hinder real-time operation. Consequently, in the subsequent discussion, we present a faster pose estimator to reduce the computation time for solving \eqref{eq:map_problem}, at the expense of exactness.
% \end{remark}

\subsection{Handling Highly Nonlinear Dynamics}
% \subsection{Projected Unscented Kalman Filter}
For robots with arbitrary nonlinear dynamics models, solving the MAP optimization problem in \eqref{eq:map_problem} often proves challenging. To circumvent this challenge, we present a two-phase pose estimation procedure. In the first phase, we utilize an Unscented Kalman Filer (UKF) \cite{julier1997new} to compute an estimate of the robot's pose consisting of a \emph{predict} procedure, dependent on the robot's dynamics model, followed by an \emph{update} procedure, which performs a correction on the estimated pose given the robot's observations (i.e., camera images). In the second phase, we solve a constrained optimization problem to satisfy an approximation of the problem in \eqref{eq:map_problem}. We sumarize the procedures in Algorithm~\ref{alg:constrained_UKF}.

\begin{algorithm2e}[th]
    \caption{Splat-Loc}
    \label{alg:constrained_UKF}

    % \SetKwInOut{Input}{Input}
    % \SetKwInOut{Output}{Output}

    \KwIn{Prior ${\mcal{D}_{t - 1} = (\mu_{t - 1 | t - 1}, P_{t - 1 | t - 1})}$, Image $I_{t}$\;}
    \KwOut{Posterior $(\mu_{t | t}, P_{t | t })$\;}
    
    \tcp{Prediction Phase}
    $(\mu_{t | t - 1}, P_{t | t - 1}) \gets$ UKF\_Prediction$(\mcal{D}_{t - 1})$ \; %\eqref{eq:inverse_unscented_transform}\;

    \tcp{Compute $y_{t}$ from $I_{t}$.}
    ${y_{t} \gets}$ Procedure \eqref{eq:pnp_problem}\;
    
    \tcp{Update Phase}
    $(\hat{\mu}_{t | t }, \hat{P}_{t | t - 1}) \gets$ Procedure \eqref{eq:gaussian_estimation_equations}\;
    
    \tcp{Constrained Estimation}
    $(\mu_{t | t}, P_{t | t}) \gets$ Procedure \eqref{eq:projection_based_pose_estimator}\;
\end{algorithm2e}

Given a prior estimate of the robot's pose ${(\mu_{t - 1| t - 1}, P_{t - 1| t - 1})}$, we execute the Prediction Procedure of the UKF, which involves computing the sigmapoints associated with the prior, propagating these sigmapoints forward using the robot's dynamic models, and subsequently, computing the predicted distribution of the robot's pose ${(\mu_{t | t - 1}, P_{t | t - 1})}$ from the propagated sigmapoints. When the robot obtains new measurements, we compute the expected measurement $\mu_{y, t}$, along with the covariance matrices $P_{yy, t}$ and $P_{xy, t}$ from the sigmapoints associated with the prediction distribution.
We fuse these new measurements via the UKF Update Procedure to compute the posterior distribution of the robot's pose from:
\begin{equation}
    \label{eq:gaussian_estimation_equations}
    \begin{aligned}
        % K_{t} &= P_{xy, t} P_{yy, t}^{-1}, \\
        % \hat{P}_{t | t} &= P_{t | t - 1} - K_{t} P_{xy, t}^{T} \\
        \hat{\mu}_{t | t} &= \mu_{t | t - 1} + K_{t} \Phi(y_{t}, \mu_{y, t}),
    \end{aligned}
\end{equation}
where $K_{t}$ denotes the Kalman Gain, with mean $\hat{\mu}_{t | t}$ and covariance $\hat{P}_{t | t}$, approximating the posterior distribution, where ${\Phi}$ computes the \emph{innovation}, i.e., the difference between the observed measurement and the expected measurement.

We note that the resulting estimated mean of the robot's pose may not reside within a safe polytope. As a result, we project the distribution into the associated safe polytope by solving the constrained convex optimization problem:
\begin{subequations}
    \label{eq:projection_based_pose_estimator}
    \begin{align}
        \minimize{\mu_{t | t} \in \mbb{R}^{n}, P_{t | t} \in \mbb{S}_{++}} & -\log(\lvert P_{t | t} \rvert) + \norm{\mu_{t | t} - \hat{\mu}_{t | t}}_{\hat{P}_{t | t}^{-1}}^{2} \nonumber \\
        &+ \trace{\hat{P}_{t | t}^{-1}P_{t | t}} \label{eq:projection_objective}\\
        \subj &\chi_{3}^{2}(\gamma) \norm{A_{t, (i)}^{T}}_{P_{t | t}}^{2} + A_{t, (i)}\mu_{t | t} \leq b_{t, (i)}, \ \forall i. \label{eq:projection_constraint}
    \end{align}
\end{subequations}
With the objective function in \eqref{eq:projection_objective}, we seek to minimize the Kullback-Leibler (KL) divergence between the projected distribution ${\mcal{N}(\mu_{t | t}, P_{t | t})}$ and the distribution of the estimated pose computed from the UKF ${\mcal{N}(\hat{\mu}_{t | t}, \hat{P}_{t | t})}$. Furthermore, we introduce the constraint in \eqref{eq:projection_constraint} to enforce that the $\gamma\%$-confidence ellipsoid associated with the normal distribution $\mcal{N}(\mu_{t | t}, P_{t | t})$ lies within the safe polytope.

}

\subsection{Modifications to Observation Model}
\label{sec:observation_model}
Note that the pose estimation procedure in \eqref{eq:map_problem} requires the observations $y_{t}$ collected by the robot, which could be potentially high-dimensional. To further improve the computation time of the pose estimator, we design a procedure operating on the potentially high-dimensional observations (e.g., high-resolution images), transforming these observations into a more compact pseudo-measurement in $\SE(3)$. In addition, we assume that the measurement function $h$ extracts the $\mathfrak{se}(3)$ elements in its first input $\mu_{t | t - 1}^{(i)}$, transforming them to $\SE(3)$. Now, we present our procedure for computing the pseudo-measurement from the RGB image $I_{t}$.

Given an estimate of the robot's pose $\mu_{t | t - 1}$, we first render a set of 2D images of the GSplat map within a local neighborhood of $\mu_{t | t - 1}$. We then identify keypoints and the associated descriptors for points in these map images, as well as the robot's RGB image using the encoder-decoder-based method SuperPoint \cite{detone2018superpoint}. Next, we match features between both images using the transformer-based method LightGlue \cite{lindenberger2306lightglue}.\footnote{Other methods for feature detection and matching can be utilized, such as Scale-Invariant Feature Transform (SIFT) and Oriented FAST and Rotated BRIEF (ORB) \cite{karami2017image}.}
% {\cite{david2004distinctive} and Oriented FAST and Rotated BRIEF (ORB) \cite{rublee2011orb}.}
We again utilize RANSAC to remove outliers from the set of matched features to yield the final correspondence set ${\mcal{C}}$. Finally, we take the map points from $\mathcal{C}$ and project them into the 3D scene using the perspective projection model, the camera intrinsics, and the rendered depth from GSplat.

We can then estimate the pose of the robot using the Perspective-$n$-Point (P$n$P) problem, given by:
\begin{equation}
    \label{eq:pnp_problem}
    y_{t} = \argmin_{\mcal{P} \in \SE(3)} \sum_{(c_{w},c_{o}) \in \mcal{C}} \delta(\Pi(\mcal{P}, c_{w}), c_{o}),
\end{equation}
where ${y_{t} \in \SE(3)}$ denotes the pseudo-measurement describing the camera extrinsic parameters, $\mcal{C}$ denotes the set of correspondences between points in the environment $c_{w}$ and points in the observed RGB image $c_{o}$, ${\Pi: \mbb{R}^{3} \rightarrow \mbb{R}^{2}}$ represents the perspective projection model of point $p$ using the camera extrinsics $\mcal{P}$, and ${\delta: \mbb{R}^{2} \times \mbb{R}^{2} \rightarrow \mbb{R}}$ denotes the reprojection error. One common choice for $\delta$ is the $\ell_{2}$-squared-norm, which we utilize in this work. The $\ell_{2}$-squared-norm yields a non-linear least-squares optimization problem that can be solved using variants of the Levenberg-Marquardt and Gauss-Newton optimization algorithms. We note that various methods have been developed to solve the P$n$P problem such as the EP$n$P, DLS, and SQP$n$P methods \cite{terzakis2020consistently}.

% EP$n$P \cite{lepetit2009epnp}, DLS \cite{hesch2011direct}, and SQP$n$P \cite{terzakis2020consistently} methods.

Given $y_{t}$ and $\mu_{y, t}$, via ${\Phi: \SE(3) \times \mathfrak{se}(3) \mapsto \mathfrak{se}(3)}$, we first map $\mu_{y, t}$ to $\SE(3)$ and compute the measurement error in $\SE(3)$. We compute the rotation component of the error, given by ${R_{\ell} = R_{y(t)}^{T}R_{\mu_{y}(t)}}$ where $R_{y(t)}$ denotes the rotation described by $y_{t}$ and $R_{\mu_{y}(t)}$ denotes the rotation described by $\mu_{y, t}$. Likewise, we compute the translation error from the difference between the translation vectors described by $y_{t}$ and $\mu_{y, t}$. Lastly, we map the innovation in $SE(3)$ to $\mathfrak{se}(3)$ using the logarithmic map \cite{eade2013lie}. We note that the rotation component of the innovation in $\mathfrak{se}(3)$ corresponds to the angle-axis representation of the relative rotation between $R_{y(t)}$ and $R_{\mu_{y}(t)}$.

\begin{remark}[Lightweight Pose Estimator]
    \label{rem:pseudo_measurements}
    We note that the pseudo-measurement ${y_{t}}$, computed directly from the RGB image ${I_{t}}$ captured by the robot's onboard camera, is often of sufficiently-high accuracy. Hence, ${y_{t}}$ can be used directly as an estimate of the robot's pose in many navigation problems, especially in non-highly dynamic problems with feature-rich scenes. However, we note such an approach could be brittle in problems where the robot moves at high speeds in featureless regions. In the experiments, we assess the accuracy of this approach, comparing it to prior work in point-cloud registration and pose estimation in Gaussian Splatting environments.
    Even without the UKF, we can still obtain an estimate of the covariance associated with ${y_{t}}$ from the Hessian of \eqref{eq:pnp_problem} at its minimizer.
\end{remark}

% \begin{remark}
%     In many robotics problems, an estimate of the covariance associated with the estimated pose of the robot is important for downstream decision-making and motion planning. In this work, we compute the Hessian of \eqref{eq:pnp_problem} at $T^*$ to estimate the covariance associated with the estimated pose of the robot.
% \end{remark}

% \begin{remark}
%     Our method can be directly extended to the batch-estimation case where we solve a bundle adjustment problem over multiple time steps.
% \end{remark}

% \begin{figure}[h]
%     \centering
%     \includegraphics[width=0.2\textwidth]{figures/placeholder.png}
%     \caption{Pose Estimation Module showing its components.}
%     \label{fig:pose_Estimation_module}
% \end{figure}

% \begin{remark}[Global Localization]
%     We note that prior knowledge of an estimate of the robot's pose (such as its neighborhood) may not be available in many practical settings. As such, we design a pose estimation module that can perform global localization (i.e., pose initialization) and can utilize prior knowledge of the pose when available to speed up operation (i.e., a recursive estimation algorithm). We assume that the robot has an onboard RGB-D camera with known camera intrinsics, capturing images of its environment, which we utilize in estimating the robot's pose.
% \end{remark}

\subsection{Global Initialization}
\label{sec:global_initialization}
The above pose estimation requires an estimate of the robot's pose (such as its neighborhood), which may not be available in many practical settings.
When a good initial guess of the robot's pose is unavailable, we execute a global pose estimation procedure. 
Our approach assumes that the robot has an onboard RGB-D camera with known camera intrinsics.
We first use the RGB-D image and camera intrinsics to generate a point cloud (in the camera frame). In addition, we generate a point cloud of the scene from the GSplat using the ellipsoid means $\{\mu_j\}_{j = 1}^{N}$, enabling the formulation of a point-cloud registration problem:
\begin{equation}
    \label{eq:point_to_point_registration}
    \mcal{P}^{*} = \argmin_{\mcal{P} \in \SE(3)} \sum_{(p, q) \in \mcal{C}} \norm{p - Tq}_{2}^{2},
\end{equation}
where the transformation matrix ${T \in \SE(3)}$ comprises of a translation component ${\rho \in \mbb{R}^{3}}$ and a rotation matrix ${R \in \SO(3)}$ and ${C}$ denotes the set of correspondences, associating the point $p$ in the map cloud to a point $q$ in the point cloud from the camera. Given a known set of correspondences, we can compute the optimal solution of \eqref{eq:point_to_point_registration} using Umeyama's method \cite{umeyama1991least}.

In practice, we do not have prior knowledge of the set of correspondences $\mcal{C}$ between the two point clouds. To address this challenge, we apply standard techniques in feature-based global point-cloud registration. We begin by computing $33$-dimensional Fast Point Feature Histograms (FPFH) descriptors \cite{rusu2009fast} for each point in the point-cloud, encoding the local geometric properties of each point. Prior work has shown that visual attributes can play an important role in improving the convergence speed of point-cloud registration algorithms \cite{park2017colored}, something that FPFH does not do. To solve this, we augment the FPFH feature descriptor of a given point with its RGB color. We then identify putative sets of correspondences using a nearest-neighbor query based on the augmented FPFH descriptors, before utilizing RANSAC to iteratively identify and remove outliers in $\mathcal{C}$. The RANSAC convergence criterion is based on the distance between the aligned point clouds and the length of a pair of edges defined by the set of correspondences.

}

% The acknowledgments are automatically included only in the final and preprint versions of the paper.
% \section*{Acknowledgements}

%===============================================================================
% \balance
% \clearpage
% \FloatBarrier
\bibliographystyle{IEEEtran.bst}
\bibliography{citations.bib}

\end{document}